\providecommand{\tabularnewline}{\\}
\newcommand{\lyxdot}{.}
\providecommand{\tabularnewline}{\\}
\theoremstyle{plain}
\newtheorem{thm}{\protect\theoremname}
\theoremstyle{definition}
\newtheorem{example}[thm]{\protect\examplename}
\theoremstyle{plain}
\newtheorem{lem}[thm]{\protect\lemmaname}
\theoremstyle{definition}
\newtheorem{defn}[thm]{\protect\definitionname}
\theoremstyle{plain}
\newtheorem{prop}[thm]{\protect\propositionname}
\theoremstyle{plain}
\newtheorem{cor}[thm]{\protect\corollaryname}
\theoremstyle{plain}
\newtheorem*{thm*}{\protect\theoremname}
\def\th@newremark{\th@remark\thm@headfont{\bfseries}}
\theoremstyle{newremark}
\providecommand{\corollaryname}{Corollary}
\providecommand{\definitionname}{Definition}
\providecommand{\lemmaname}{Lemma}
\providecommand{\propositionname}{Proposition}
\providecommand{\theoremname}{Theorem}
\newcommand{\mtp}[1]{\textcolor{red}{\bf #1}}
\newcommand{\hrl}[1]{\textcolor{blue}{(\textit{#1})}}
\providecommand{\examplename}{Example}
\providecommand{\corollaryname}{Corollary}
\providecommand{\definitionname}{Definition}
\providecommand{\examplename}{Example}
\providecommand{\lemmaname}{Lemma}
\providecommand{\propositionname}{Proposition}
\providecommand{\theoremname}{Theorem}
\date{}
\begin{document}
\title{Sharded Bayesian Additive Regression Trees}
\author{Hengrui Luo\thanks{Lawrence Berkeley National Laboratory, Berkeley, CA, 94720, USA, E-mail:
hrluo@lbl.gov}, Matthew T. Pratola\thanks{Department of Statistics, the Ohio State University, Columbus, OH, 43210, USA, E-mail: mpratola@stat.osu.edu}}
\maketitle

\begin{abstract}
In this paper we develop the randomized Sharded
Bayesian Additive Regression Trees (SBT) model.
We introduce a randomization auxiliary variable and a sharding tree to decide partitioning of data, and fit each partition component to a sub-model using Bayesian Additive Regression Tree (BART). By observing that the optimal design of a sharding tree can determine optimal sharding for sub-models on a product space, we introduce an intersection tree structure to completely specify both the sharding and modeling using only tree structures. In addition to experiments, we also derive the theoretical optimal weights for minimizing posterior contractions and prove the worst-case complexity of SBT. 

Keywords: Bayesian Additive Regression Trees, model aggregation, optimal experimental design, ensemble model. 
\end{abstract}

\section{Background}

The motivation of this paper is to improve scalability of regression tree models, and in particular Bayesian
(additive) tree models, by introducing a novel model construction where we can adjust
the distribute-aggregate paradigm in a flexible way. We consider data with a continuous multivariate input $\bm{x}$,
and continuous univariate responses $y$ on the task of regression, however the underlying concepts we introduce are easily generalized to a variety of common scenarios. 

\subsection{Distributed Markov Chain Monte Carlo}

As parallel computation techniques for large datasets become popular
in statistics and data science \citep{pratola2014parallel,kontoghiorghes2005handbook,jordan_communicationefficient_2019}, the problem of distributed inference
becomes more important in solving scalability \citep{jordan_communicationefficient_2019,dobriban2021distributed}.
The basic idea of \textit{distribute-aggregate} inference can be traced
back to the bagging technique \citep{breiman1996bagging} for improving
model prediction performance. Bayesian modeling can sometimes take
model averaging into consideration \citep{raftery1997bayesian,wasserman2000bayesian}, %
by using the distribute-aggregate paradigm %
for computational scalability on big datasets.%

An important example of the distribute-aggregate paradigm arises in
Markov chain Monte Carlo (MCMC) sampling for Bayesian modeling, where
multiple independent chains could be sampled in parallel, and then
combined in some fashion. Distributed inference in MCMC can be introduced
as a \emph{transition parallel} approach, which improves the mixing
behavior for modeling with big data \citep{chowdhury2018parallel}
using appropriate detailed balance equations. %
Although transition parallel MCMC can help in general MCMC computations,
it does not address the model fitting on big data directly.%

Alternatively, to provide computational gains in the big data setting,
one is motivated to take a \emph{data parallel} approach \citep{scott_bayes_2016,wang2013parallelizing}
and execute each chain using a subset of the full dataset. \citet{scott_bayes_2016}
proposed an aggregation approach when the posterior is (a mixture
of) Gaussians; and \citet{wang2013parallelizing} devised a novel sampler
to approximate more sophisticated posteriors via Weierstrass transforms.

Our interest lies in Bayesian tree models. Unfortunately, the posterior
from Bayesian (additive) tree models belong to neither of these two
scenarios and it is unclear how to subset the data and ensure the
validity of the aggregated result for tree model posterior distributions.
Echoing the existing literature, our starting point is the consensus Monte Carlo (CMC) algorithm by \citet{scott_bayes_2016}.
CMC proceeds by splitting the full dataset $\mathcal{X}\subset\mathbb{R}^{d}$
consisting of $n$ samples, with inputs $\bm{X}=(\bm{x}_{1}^{T},\cdots,\bm{x}_{n}^{T})^{T},\bm{x}_{1},\cdots,\bm{x}_{n}\in\mathbb{R}^{d}$
and responses $\bm{y}=(y_{1},\cdots,y_{n})^{T}$, into $B$ subsets $\mathcal{X}_{i},\ i=1,\ldots,B,$ each  
with $n_{i}$ samples ($\sum_{i}^{B}n_{i}=n$), called \emph{data
shards}, such that $\mathcal{X}=\cup_{i=1}^{B}\mathcal{X}_{i}\ensuremath{}
$. The idea of CMC is to distribute each of the $B$ shards across
$B$ sub-models distributed to separate compute nodes (i.e., worker machines), and construct $B$ chains for
sub-models based on the $\mathcal{X}_{i}$ for each worker machine.
Then, one draws Monte Carlo samples for parameters, denoted by $\theta_{j}^{(i)},i=1,2,\cdots,B,j=1,2,\cdots$,
from a posterior distribution $p(\theta\mid\mathcal{X}_{i})$ at a
much cheaper computational expense as compared to the expense of drawing
samples directly from the original posterior $p(\theta\mid\mathcal{X})$.

In this approach, the aggregate posterior sample is taken to be $\theta_{j}=\sum_{i=1}^{B}w_{i}\theta_{j}^{(i)},j=1,2,,\cdots$
where $w_{i}\in[0,1]$ are (static) weights. The parameters $\theta_{j}$'s
are considered to be consensus samples drawn from the posterior $p(\theta\mid\mathcal{X})$.
These consensus draws represent the consensus belief among all the
sub-models' posterior chains, and when the sharded sub-models and
full model posteriors are both Gaussian the weights can be determined
to recover the full model's posterior. %

\subsection{Scalability of Bayesian Modeling}

Scalability issues %
introduced by either sample size, high dimensionality or intractable
likelihoods \citep{craiu2022approximate}, are especially pervasive in the
Bayesian context \citep{wilkinson2022distance}, especially in the
Bayesian inference of nonparametric models \citep{LNP2022_SAGP,zhu2022radial,katzfuss2021general,pratola2014parallel},  which usually require MCMC for model fitting. 
And while the concept of the CMC inference procedure of \citet{scott_bayes_2016}
is tempting, it is not obvious how to best select  shards, nor how to best aggregate
the resulting sharded posteriors, to effectively estimate the original
true posterior of interest for sophisticated models.

In other words, precisely how to choose shards $\mathcal{X}_{i}$,
and how to appropriately choose the weights $w_{i}$ can be a challenge.
This affects the quality of the aggregated model as well as the mixing
rates.  Compared to the CMC approach, there are at least two additional considerations to contemplate: one is how to select the design and the  number of 
shards to improve the aggregated model prediction, the other is how
to weight the sub-models for better posterior convergence and contraction.

The optimality of design can also be crucial in the predictive performance
of statistical models %
\citep{drovandi2017principles,derezinski2020bayesian,Murray:EECS-2022-258}.
Usual optimal designs (e.g., Latin hyper-cube) are difficult to elicit
\citep{derezinski2020bayesian}, therefore, optimal designs via randomized
subsampling are proposed as a computationally efficient alternative
\citep{drovandi2017principles}. In what follows, we will derive and
prove that optimal design for our models is still attainable,  conditioned on the shards.%

Posterior sampling quality is another important concern.
In one initial attempt to apply the distribute-aggregate Monte Carlo
scheme in Bayesian modeling, \citet{huggins2016coresets} focuses
on Bayesian logistic regression, and seeks to shard the dataset in
such a manner that the sharded likelihoods are ``close''
to the full-data likelihood in a multiplicative sense. They %
proposed a \emph{single} weighted aggregated data subset drawn using
multinomial sampling based on weights reflecting the sensitivity of
each observation%
, thereby selecting observations that are ``representative'' of
each cluster of data %
for each sub-model. \citeauthor{huggins2016coresets} are able to show
good performance of their methods, %
implying that ``optimal'' selection of shards and weights in CMC
could be vital, although a clear connection remained elusive.

Our primary model investigated in this paper, the Bayesian Additive
Regression Tree (BART, \citet{chipman_bart_2010}), is additionally
constrained by the computationally heavy MCMC procedure used in the
posterior sampling of BART.  Tree regression
models have the natural ability to capture and express interactions
of a high-dimensional form without too much smoothness assumptions,
while other nonparametric models need explicit terms to capture any
assumed interactions. With ensemble constructions such as BART's sum-of-trees form, tree regressions  gain improved predictions and can handle uncertainty quantification. However, as an ensemble method, a tree ensemble also suffers from computational
bottlenecks.

We propose a modified BART model that brings the idea of sharding into a cohesive Bayesian model by taking advantange of BART's tree basis functions.
Observing the two sources of computational bottleneck -- ensemble
and MCMC -- the core insight of our proposal builds on that of Bayesian
regression trees seen to date: unlike in typical regression where
the basis functions are fixed and only the parameters are uncertain,
Bayesian regression trees learn both the parameters \emph{and} the
basis functions. In other words, the tree model can serve as basis
for and approximate both response and model space. 

There are two important assumptions in our distribute-aggregate model : First, we assume that each sub-model is fit using exactly one shard. %
Second, we assume that the shards are non-intersecting, $\mathcal{X}_{i}\cap\mathcal{X}_{j}=\emptyset$,
which can be relaxed in general. %
These two assumptions simplified
our formulations and implementation but are not so restrictive as to limit the applicability
of our methods.

In this work, we take a somewhat different perspective on the challenging
problem of sharding large datasets in a useful way to perform Bayesian
inference. Specifically, while many have focused on methods of approximating
a particular posterior distribution, we instead adhere to the notion
of including all forms of uncertainty, 
including data and model
uncertainty, into our Bayesian procedure. Since all models are wrong
but some are useful \citep{box1976science}, this seems a more pragmatic
and practically useful approach.

Applying this line of thought, our proposed model   controls and
learns a sharded model space (and relevant model weights) as well
as the basis functions and parameters for each model that makes up
our collection of models. It is the unique duality of partitions and
trees that allows this to be done in a computationally effective manner.
The model space is approximated by partitions represented by a tree,
whilst the regression basis function and parameters are approximated
by functional representation of trees. The idea of learning partitions in a  data-driven manner has been explored in previous works on non-parametric modeling \citep{hrluo_2021e,LNP2022_SAGP}.

Our approach could recover some reference model of perceived interest
if the data warrants it, but practically will recover models which
most effectively shard the data thereby allowing faster computation
while also providing a better fitting model. Interestingly, our method
makes an unexpected connection to the multinomial sampling of \citet{huggins2016coresets}
via an optimal design argument, while using all of the data in a computationally
efficient way. %

\subsection{Organization }

\begin{figure}
\centering

\includegraphics[width=0.7\textwidth]{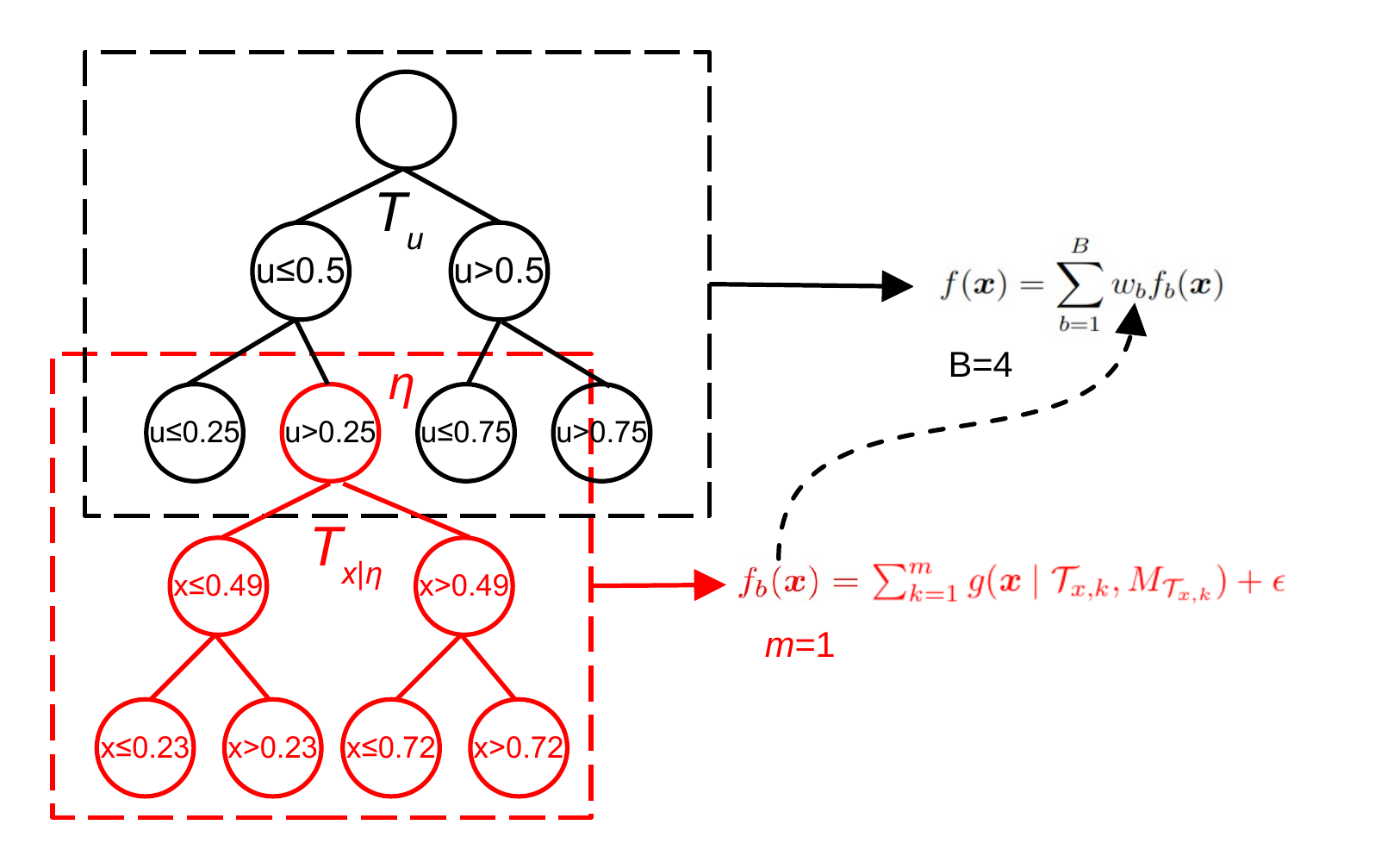} \caption{\label{fig:symbols}We graphically represent a tree (topology) $\mathcal{T}_{u}$ on the auxiliary variable $\bm{U}$ 
of depth $k=2$ and $B=4$ leaf nodes whose splitting rules are purely on variable $u$
illustrated in black, and we concatenate another tree (topology, i.e., BART with $m=1$ tree) $\mathcal{T}_{x\mid\eta_{u}}$
illustrated in red, whose root node is the leaf node $\eta=\eta_{u}$
of tree $\mathcal{T}_{u}$, but its splitting rules are purely on
input variable $\bm{X}$. We take the convention that $\mathcal{T}_{x\mid\emptyset}$
means the tree only splits on $\bm{X}$.}
\end{figure}

Our paper proceeds as follows. In Section \ref{sec:Of-Trees-and}
we review Bayesian tree models, the partitioning interpretation of trees, and develop optimal design results for single-tree models that we will
need in our sharded tree model development. In Section \ref{sec:Details-on-Model}, 
we introduce our sharded tree model by using a novel construction
involving a latent variable with an optimal sharding tree as described by algorithms; %
and then we provide some interpretation
on the model via notions of weights and marginalization, connecting
our construction back to the existing literature. %
In Section \ref{sec:Experiments-and-Applications} we explore some
examples of our proposed model and discuss complexity trade-offs. Finally, we conclude and point out future directions
in Section \ref{sec:Discussion}.

\section{Of Trees and Partitions\label{sec:Of-Trees-and}}

\subsection{Bayesian Tree Regression Models\label{sec:intro_to_BART}}

Tree models were popularized by the bagging and random forest techniques
\citep{breiman1996bagging}, and have gained attention in Bayesian forms
\citep{pratola2020heteroscedastic,chipman_bart_2010,gramacy2008bayesian}.
Tree models have been shown to be flexible in modeling complex relationships
in high-dimensional scenarios.  We briefly revisit the Bayesian tree model below, introducing
the notations we use, with an emphasis on the tree-induced partition
of the input domain.

\begin{example}
\label{exa:tree path}%
(Basis functions defined along a path) 
The general form of a tree $\mathcal{T}$ we will consider in this paper is depicted in Figure \ref{fig:symbols}, which is shown to consist of a subtree splitting on variable $u$, $\mathcal{T}_u,$ and descendant trees splitting on the input variables $\bm{X}$, shown as $\mathcal{T}_{x\vert\eta_u}.$  We will discuss $\mathcal{T}_u$ momentarily, but for now ignore this subtree and simply consider a single tree that only splits on $\bm{X}$, represented as $\mathcal{T}_{x\vert \emptyset}\equiv \mathcal{T}_x$, as shown in Figure \ref{fig:A-balanced-tree}. Each path in a tree in Figure \ref{fig:A-balanced-tree} defines a basis function splitted by the input $\bm{X}$, which is the usual setting in tree models \citep{wu2007bayesian,chipman_bart_2010}.

Given such a tree $\mathcal{T}_{u}$ 
consisting of $B$ terminal nodes, each path $\mathcal{P}_{b}$ from
terminal node $b$ back to the root node of $\mathcal{T}_u,$ say $\eta_{1},$ defines a basis
function $f_{b}$ that is formed as the product of indicator functions
resulting from the evaluation of each internal nodes rule, i.e., the
regression function corresponding to this node $b$ can be written
as a product of indicator functions where $\mathcal{I}(\cdot;S)$
means the indicator function conditioned on the indicator parameter $S$. For example, the sub-model can be an indicator function along a path $\mathcal{P}_b$ in a tree: 
\begin{align}
f_{b}({\bm{x}})=\prod_{\eta_{j}\in\mathcal{P}_{b}}\mathcal{I}\left(\bm{x}_{v_{j}};\rho_{j},c_{j}\right)=\mathcal{I}(\bm{x};R_{b})\label{eq:path_representation}
\end{align}
where $v_{j}$ is the index of splitting variable of node $\eta_{j}$,
$c_{j}$ is the corresponding splitting value and the symbol $\rho_{j}\in\{<,\geq\}$
corresponds to the appropriate inequality sign along the path $\mathcal{P}_{b}$.
As we only consider binary trees, then each interior node $\eta_{j}$
can only have a left-child or right-child. The internal structure of
$\mathcal{T}_u$ can then be fully determined by the collection $\{(v_{j},c_{j})\}$
of splitting indices $v_{j}$ and splitting values $c_{j}$ along
with these parent/child relationships.

The important observation here is that the partitioning implied by the product of indicator functions in (\ref{eq:path_representation}) induces
rectangular subregions, the $R_{b}$'s,  determined by the given structure of the tree $\mathcal{T}_x$.
Specifically, each path $\mathcal{P}_{b}$ from the root to a leaf
defines a rectangle $R_{b}$, where $\cup_{b=1}^{B}R_{b}=[0,1]^{d}$,
for which $f_{b}({\bm{x}})=1$ if ${\bm{x}}\in R_{b}$ and $0$ otherwise.
\end{example}

We model the response $y$ as $f(\bm{x})+\epsilon$ where the conditional mean
$\mathbb{E}(y\mid\bm{X}=\bm{x})=f(\bm{x})$ can be written in the following form: 
\begin{align}
f(\bm{x}) & =\sum_{b=1}^{B}w_bf_{b}(\bm{x}\mid \mu_b)=\sum_{b=1}^{B}w_bf_{b}(\bm{x})\label{eq:generic tree model}
\end{align}
where $w_b$ are the weights for single tree models (we can assume $w_b=1$ for simplicity now); $\mu_b$ is the tree terminal node parameter (as coefficients for sub-models) for terminal node $b$, the noise random variable $\epsilon\sim N(0,\sigma^{2})$ and
$f_{b}(\bm{x})$ is an indicator function defined by the rectangle
support set $R_{b}$ as defined in \eqref{eq:path_representation}.
Note that this generic tree regression model is conditioned on the tree
structure $\mathcal{T}_x$. %

We denote all the coefficients in \eqref{eq:generic tree model} as
a set $M_{\mathcal{T}_x}=\{\mu_{1},\cdots,\mu_{B}\}$. Estimating this
regression function in a Bayesian way can be achieved by putting priors
on the coefficients $\mu_{b}\in\mathbb{R}$ associated with $\mathcal{T}_x$'s leaf nodes $\eta_{x,b}$
and each of these sub-models $f_{b}$. %
Therefore, \eqref{eq:generic tree model} can simply be written as
$y(\bm{x})=g(\bm{x}\mid\mathcal{T},M_{\mathcal{T}})+\epsilon$ when
there is only one tree $\mathcal{T}$. %
For a more complex dependence scenario, \citet{chipman_bart_2010}
proposed to stack single tree models
to obtain BART, where there are $m$ tree regression functions (i.e., number of trees in a BART model) in the regression model: 
\begin{align}
y(\bm{x}) & =\sum_{j=1}^{m}g(\bm{x}\mid\mathcal{T}_{x,j},M_{\mathcal{T}_{x,j}})+\epsilon\label{eq:BART model}
\end{align}
where $\epsilon\sim N(0,\sigma^2).$ %
Usually, zero mean normal priors are assumed for the coefficients
in $M_{\mathcal{T}_{x,k}}$ along with chi-squared variance priors.
In what follows, we assume that the noise is known for simplicity,
unless otherwise is stated. With \eqref{eq:BART model}, it is not
hard to see that we are actually creating an ensemble consisting of
$B\cdot m$ simple indicator functions.

Our basic and straightforward approach of modifying the BART model \eqref{eq:BART model}
is to fit the model using 
\emph{data
shards} $\mathcal{X}_{1},\cdots,\mathcal{X}_{B}$, %
where each BART random function $g(\bm{x}\vert \mathcal{T}_{x,k},M_{\mathcal{T}_{x,k}})$ is fitted using shard $\mathcal{X}_k$ (instead of the much larger $\mathcal{X}$) to reduce the computational cost of fitting the model. %
In this way, the data shards and BART  are in one-to-one correspondence
so that we fit each shard to a BART model (i.e., $B=2^{\text{depth}}$  in generic weighted model in \eqref{eq:generic tree model}, so $m$ single trees are fitted for each of $B$ leaf nodes):
\begin{align}
y(\bm{x}) & =\sum_{k=1}^{B}\sum_{j=1}^{m}g(\bm{x}\mid\mathcal{T}_{x,j},M_{\mathcal{T}_{x,j}}, \mathcal{X}_{k})+\epsilon,\label{eq:BART model-ind2}
\end{align}
In this approach, within each BART, all $m$ trees share the same data shard. 

However, as an alternative of using the same shard across $m$ trees for one BART, we can use different shards across $m$ trees (in fact, the current codebase was most naturally extended in this way, of which (\ref{eq:BART model-ind2}) is a special case, upon which our theoretical work focuses, for simplicity of exposition).  Furthermore, different BART at different leaf nodes do not have to assume the same $m$ but different $m_1,\cdots,m_B$, and different single trees in BART can be fitted on different shards $\mathcal{X}_{k,j}$ instead of using the same  $\mathcal{X}_{k}$, namely:
\begin{align}
y(\bm{x}) & =\sum_{k=1}^{B}\sum_{j=1}^{m_k}g(\bm{x}\mid\mathcal{T}_{x,j},M_{\mathcal{T}_{x,j}}, \mathcal{X}_{k,j})+\epsilon,\label{eq:BART model-ind}
\end{align}
and the analysis of this more general form will be left as future work.

The challenge is how to do this in a way that we retain a cohesive estimation and prediction Bayesian model, facilitating both computational efficiency and straightfoward Bayesian inference.  Our approach achieves this  by
recognizing that sharding  BART %
can be induced by 
representing
the data sharding using a tree. To formalize this idea, we examine
the equivalence between a tree and a partition next.

\subsection{Equivalence between a Tree and a Partition }

Our insight begins with recognizing that a tree structure is equivalent
to a partition of a metric space (e.g., \citet{talagrand2014upper})
as shown in Example \ref{exa:tree path}, which allows us to use the
tree to approximate the model space as well as the target function
at the same time. However, binary regression trees cannot induce
 arbitrary partitions. In what follows, we focus on the rectangular
partitions that \emph{can} be induced by regression trees.

The particular partitioning of a dataset could be determined by the
construction of a tree structure, %
in particular its depth, number of terminal nodes, and the selection
of splitting points. 
For balanced trees, we note that any
balanced $2^{k}$ partitioning can be represented as a balanced binary
tree of depth $k$. %
Homogeneous partitions can be induced by balanced trees while heterogeneous
partitions can be induced by unbalanced trees, as in Figure \ref{fig:A-balanced-tree}.

\begin{figure}
\centering
\includegraphics[width=0.8\textwidth]{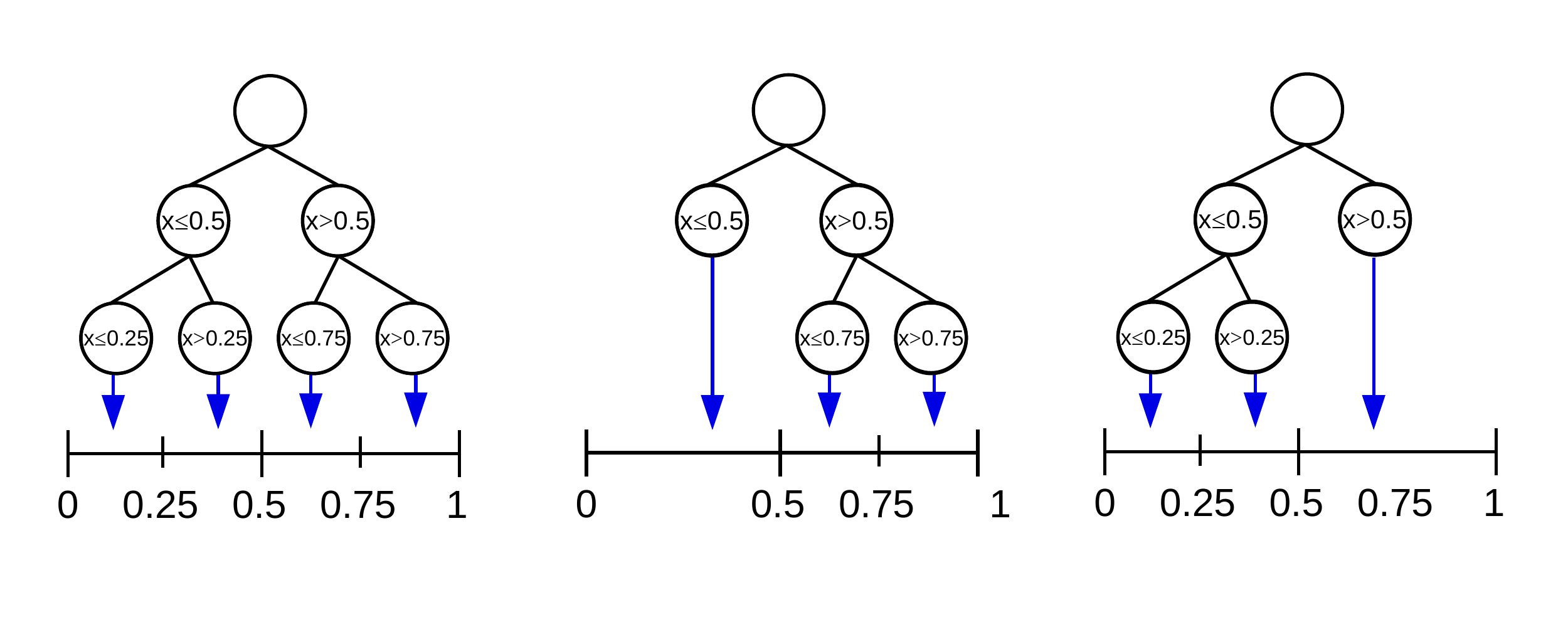} \caption{\label{fig:A-balanced-tree}A balanced tree of depth $k=2$ and its
pruned sub-trees %
which induced partition on $[0,1]$ (left), and similarly unbalanced trees (middle) and (right). In each of the interior node,
we present the splitting rule at $\eta_{j}$ using $\bm{x}_{i,v_{j}}[\rho_{j}]c_{j}$.
For simplicity, when the inequality of an interior node holds, we
proceed through the left-child node, otherwise we proceed through
the right-child node. %
}
\end{figure}

\subsection{Representing Sharding as a Tree Partition}
Now we are ready to take advantage of the partition induced by a
tree and use such a partition in creating the data shards in \eqref{eq:BART model-ind}. 
In order to improve
the scalability of model fitting through sharding, %
we will outline a model-based construction to sharding motivated by 
an optimal design criteria. In this way, we develop a novel tree model
that incorporates both modeling and sharding for the input variable
$\bm{x}$ in one shot, and we propose to learn the sharding with another
tree structure involving an auxilliary random variable $u$. 

The sharding created in this way will allow us to attain the distribute-aggregate
scheme suggested in \citet{scott_bayes_2016} %
in one unifying model where both the degree of sharding and the sharding weights {\em are informed by the data}.

If we take \eqref{eq:BART model-ind} as the regression model, then
the sharding on the input $\bm{X}$ cannot be separated from the
estimating procedure of $\mathbb{E}\left(y\mid\bm{X}\right)$. To decouple the
sharding from the estimation without breaking the model, we observe
that the shard partitions can be generated based on an independent
random variable %
$\bm{U}$ while the regression basis functions
can be generated based on observations $\bm{X}$. This allows more
flexibility in the creation of the partitions, which allocates the
observations into different shards, while the regression is still
exclusively based on $\bm{X}$. This seemingly straightforward step
essentially allows us to present the following regression model with $B$ (the same number as the number of leaves in $\mathcal{T}_u$) BART models:
\begin{align}
y(\bm{x},\bm{u}) & =\sum_{k=1}^{B}g_{\text{BART}}(\bm{x}\mid\mathcal{T}_{x\mid\eta,k},M_{\mathcal{T}_{x\mid\eta,k}},\mathcal{T}_{u})%
+\epsilon.\label{eq:BART model-ind-uvar}
\end{align}

More precisely, let us introduce the auxiliary vector $\bm{U}=(U_{1},\ldots,U_{n})^{T}$
where each $U_{i}\sim_{iid}\text{Unif}(0,1)$ (or another distribution
as illustrated in Appendix \ref{distribution example}),
which is used to augment the observations $\bm{X}$ of size $n$
and define the augmented $n\times(p+1)$ matrix of inputs as $\bm{V}=[\bm{X};\bm{U}].$
We introduce $\bm{u}$ as a device to induce the data shards
via a \emph{partitioning tree} $\mathcal{T}_{u}$ that only splits
on this auxiliary variable $\bm{U}$, as depicted in Figure \ref{fig:symbols}. From expression \eqref{eq:BART model-ind-uvar},
$\mathcal{T}_u$ effectively partitions the dataset  $\bm{X}$ as well, since
the augmented input $\bm{V}=[\bm{X};\bm{U}]$ binds the observed values
of $\bm{X}$ and $U$ together. Because the $\bm{X}$ and auxiliary variable $\bm{U}$ are binded, the sample counts can be done on either $\bm{U}$ or $\bm{X}$. As we will see later,
the effect of $\bm{U}$ on the posterior of interest can be removed simply
by marginalizing it out.

Details of fitting and predicting from our proposed model described
by \eqref{eq:BART model-ind-uvar} are formalized in Algorithms \ref{alg:RST-Algorithm-Fitting}
and \ref{alg:RST-Algorithm-Predict}, and its graphical representation
is postponed to Figure \ref{fig:Example-of-product-partition}. For
now, we show how the model can be constructed using a tree $\mathcal{T}_{u}$
dedicated to creating data shards, as shown in Figure \ref{fig:Balanced-tree-Tu-for-partition} in Appendix \ref{distribution example}.

In the rest of the paper, we will focus on the setting where $U\sim\text{Unif}(0,1)$,
but point out that there are many possibilities here with different pros and cons, depending on the goal.
We focus on one specific sharding tree $\mathcal{T}_{u}$ motivated by an optimal design argument, 
and show that such a sharding  improves
posterior concentration rates in the aggregated model.

\subsection{\label{sec:OD_in_reg_tree}Optimal Designs in a Tree }

After establishing the basic notations and notions used in our tree models,
it is natural to consider what an optimal design \citep{fedorov_theory_2013}
means for this specific tree model $\mathcal{T}_{u}$ for creating
data shards. We will start by considering the optimal design in the
fixed location setting (i.e., $\bm{U}$ is deterministic and fixed),
then we generalize our discussion to random location setting (i.e.,
$\bm{U}$ follows a probability distribution $\mathbb{P}$). Considering  all terminal node trees $\mathcal{T}_{x\mid \eta_u}$ to have only root node for the moment, we focus on the design of $\mathcal{T}_u$. 
This setting is conditionally equivalent to the following generic linear regression model for the data with inputs
$\bm{u}\in\mathbb{R}^{d}$ and response $y\in\mathbb{R}^{1}$ as follows:
\begin{align}
y(\bm{u}) & =F(\bm{u})^{T}\cdot\bm{\beta}+\epsilon,\label{eqn:regmodel}
\end{align}
where the normal noise $\epsilon\sim N(0,\sigma^{2})$ and suppose
the variance parameter $\sigma^{2}>0$ is fixed and known.
In the context of regression
model \eqref{eqn:regmodel}, we use the following vector notations $\bm{u}_{i}  =\left(\bm{u}_{i,1},\cdots,\bm{u}_{i,d}\right)^T\in\mathbb{R}^{d\times 1},$ and $F({U}_{i}) =\left(f_{1}(\bm{u}_{i}),\cdots,f_{B}(\bm{u}_{i})\right)^T\in\mathbb{R}^{d\times 1},\bm{\beta}\in\mathbb{R}^{B\times1}.$ The $\bm{beta}$ is the coefficient for the single tree $\mathcal{T}_u$ in the optimal design discussion in this section. Then we can write the relevant matrices as 
\begin{align}
\bm{U} & =\left(\begin{array}{c}
{U}_{1}\\
\vdots\\
{U}_{n}
\end{array}\right)=\left(\begin{array}{ccc}
\bm{u}_{1,1} & \cdots & \bm{u}_{1,d}\\
\vdots &  & \vdots\\
\bm{u}_{n,1} & \cdots & \bm{u}_{n,d}
\end{array}\right)\in\mathbb{R}^{n\times d},\\
\bm{F}(\bm{U}) & =\left(\begin{array}{c}
F(U_{1})\\
\vdots\\
F(U_{n})
\end{array}\right)=\left(\begin{array}{ccc}
f_{1}(\bm{u}_{1}) & \cdots & f_{B}(\bm{u}_{1})\\
\vdots &  & \vdots\\
f_{1}(\bm{u}_{n}) & \cdots & f_{B}(\bm{u}_{n})
\end{array}\right)\in\mathbb{R}^{n\times B},
\end{align}

If we use a partition induced by a tree $\mathcal{T}_{u}$, then the
number of partitions $B$ also defines $B$ basis indicator function
corresponding to each partition component. A popular design-of-experiments
criterion, known as \emph{D-optimality} %
\citep{shewry_maximum_1987,chaloner1995bayesian,fedorov_theory_2013},
has the optimality criterion function defined as $\phi(\bm{U})=\texttt{det}\left(\sigma^{-2}\bm{F}(\bm{U})^{T}\bm{F}(\bm{U})\right)$
for a linear model, %
and $\bm{F}(\bm{U})$ corresponds to the design matrix formed by
evaluating $B$ regression functions $f_{j}(\bm{U}),j=1,\ldots,B$
at the designed input settings $\bm{U}$.

Therefore, the tree model defined in \eqref{eq:generic tree model}
can be formulated in the form of \eqref{eqn:regmodel} by using indicator
functions as the basis function in the design matrix $F(\bm{U})$, using indicator functions of leaves. 
The matrix $\bm{F}(\bm{U})^{T}\bm{F}(\bm{U})$ reduces to 

{\scriptsize{}{}{} 
\[
\bm{F}(\bm{U})^{T}\bm{F}(\bm{U})=\left(\begin{array}{cc}
\sum_{k=1}^{n}\bm{1}(u_{k}\in\eta_{1})\bm{1}(u_{k}\in\eta_{1}) & \sum_{k=1}^{n}\bm{1}(u_{k}\in\eta_{1})\bm{1}(u_{k}\in\eta_{2})\\
\sum_{k=1}^{n}\bm{1}(u_{k}\in\eta_{2})\bm{1}(u_{k}\in\eta_{1}) & \sum_{k=1}^{n}\bm{1}(u_{k}\in\eta_{2})\bm{1}(u_{k}\in\eta_{2})\\
\vdots\\
\sum_{k=1}^{n}\bm{1}(u_{k}\in\eta_{5})\bm{1}(u_{k}\in\eta_{1}) & \sum_{k=1}^{n}\bm{1}(u_{k}\in\eta_{5})\bm{1}(u_{k}\in\eta_{2})
\end{array}\right).
\]
}%

In the current setup, the partitions are  determined by the leaves of tree
$\mathcal{T}_{u}$ and are non-overlapping. Therefore, the $\sum_{k=1}^{n}\bm{1}(u_{k}\in\eta_{i})\bm{1}(u_{k}\in\eta_{j})=\sum_{k=1}^{n}\delta_{ij}$
will have a range in  $[0,n]$. If we introduce overlapping partitions,
then the upper bound of this range becomes $n\times B$ since each
observation can be counted up to $B$ times in each of the partition
component. This is a major difference between non-overlapping and
overlapping partitions (a.k.a. overlapping data shards) in terms of
experimental design, since the optimal criteria becomes more vague
if we can arbitrarily re-count the samples (i.e., allow partitions
to share observations).

As we noted above, we want to modify the classical optimality criteria
for tree models. To start with, we examine how the classical optimality
criteria like D-optimality behaves for the tree regression model.
If we consider the optimal designs for the tree structure shown in
Figure \ref{fig:A-balanced-tree} with $n=4$ samples, for any $n_{j}>1$
we must have some $n_{j^{\prime}}=0,j^{\prime}\neq j,$ for which
$\phi$ would evaluate to $0.$

A regression tree can be expressed in such a linear formulation with
the functional representation explained in \eqref{eq:path_representation}.
Besides the viewpoint of taking indicator function of leaf components,
we can take an alternative formulation tracing along the paths in
a tree. This viewpoint allows us to understand the multi-resolution
nature provided by the tree regression.

\subsubsection{Tree Optimal Design for Fixed Locations}

First, we resume our discussion on D-optimality and consider the situation
where the tree structure $\mathcal{T}$ is fixed and we want to choose
the optimal design for $\bm{U}$ deterministically.

For an $n$-sample design in the inputs $\bm{u}_{1},\ldots,\bm{u}_{n}$
with $n_{1}$ inputs mapping to terminal node region $\bm{R}_{1}$,
$n_{2}$ inputs mapping to terminal node $\bm{R}_{2}$, etc., the criterion simplifies to %
: 
\begin{align}
\phi_{n}(\bm{u}_{1},\ldots,\bm{u}_{n}\mid\mathcal{T})\coloneqq\prod_{b=1}^{B}\frac{n_{b}}{n}\propto\texttt{det}{\tiny\left(\begin{matrix}n_{1} & 0 & 0 & 0 & \cdots & 0\\
0 & n_{2} & 0 & 0 & \cdots & 0\\
0 & 0 & n_{3} & 0 & \cdots & 0\\
0 & 0 & 0 & n_{4} & \cdots & 0\\
\vdots & \vdots & \vdots & \vdots & \ddots & 0\\
0 & 0 & 0 & 0 & 0 & n_{B}
\end{matrix}\right)}\geq0,\label{eq:phi_x_T}
\end{align}

Assume that $\sum_{b=1}^{B}n_{b}=n$ and $n/B\in\mathbb{Z}$, then
by arithmetic-geometric mean inequality or Lagrange multiplier, the
product $\phi$ is bounded from above by $\left(\frac{n}{B}\right)^{B}$,
which is attained by letting $n_{1}=\cdots=n_{B}=\frac{n}{B}$. This
means that we want to put the same number of observed sample points
in each node partition of $\mathcal{T}$. Generally, the following
lemma is self-evident via Euclidean division. This result is different
from the usual regression setting where the locations can be in $\mathbb{R}$
\citep{fedorov_theory_2013}, as here the entries can only assume
values in $\mathbb{N}$. In other words, when conditioned on the tree
structure $\mathcal{T}=\mathcal{T}_{u}$, a D-optimal design assigns
``as equal as possible'' number of sample points to each leaf node. 
\begin{lem}
\label{lem:maximize_CTOD} Assume that the number of sample $n=q\cdot B+r,r<B\leq n,q\in\mathbb{N}$
where $B$ is the number of leaf nodes in a fixed tree $\mathcal{T}$.
Then, the optimal criterion function $\phi_{n}(\bm{u}_{1},\ldots,\bm{u}_{n}\mid\mathcal{T})$
defined in \eqref{eq:phi_x_T} is maximized by assigning $q$ samples
to any of $(B-r)$ leaf nodes; and $(q+1)$ samples to the rest $r$
nodes. The maximum is $\phi_{n}^{\max}(\mathcal{T})\coloneqq\left(\frac{q}{n}\right)^{B-r}\cdot\left(\frac{q+1}{n}\right)^{r}$,
where the maximizer is unique up to a permutation of the leaf indices. 
\end{lem}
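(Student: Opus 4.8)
The plan is to recognize \eqref{eq:phi_x_T} as a purely discrete optimization problem. Since the prefactor $1/n^{B}$ is constant and the matrix there is diagonal, its determinant is simply $\prod_{b=1}^{B} n_{b}$, so maximizing $\phi_{n}(\bm{u}_{1},\ldots,\bm{u}_{n}\mid\mathcal{T})$ over admissible designs is equivalent to maximizing the product $P(n_{1},\ldots,n_{B})=\prod_{b=1}^{B} n_{b}$ over the finite, nonempty set of integer tuples $(n_{1},\ldots,n_{B})\in\mathbb{N}^{B}$ with $\sum_{b} n_{b}=n$. Finiteness guarantees a maximizer exists, so the whole task is to characterize it. I would remark that the continuous relaxation gives the arithmetic--geometric mean bound $P\le (n/B)^{B}$, attained only at $n_{1}=\cdots=n_{B}=n/B$, which is unavailable when $B\nmid n$; this motivates an integer refinement rather than a direct AM--GM conclusion.

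The key step is a smoothing (exchange) argument. First I would note $q\ge 1$: if $q=0$ then $n=r<B\le n$, a contradiction, so every node in the proposed assignment is nonempty. Now suppose $(n_{1},\ldots,n_{B})$ is a maximizer and there exist indices $i,j$ with $n_{i}\ge n_{j}+2$. Replacing the pair $(n_{i},n_{j})$ by $(n_{i}-1,n_{j}+1)$ preserves the constraint $\sum_{b} n_{b}=n$ and multiplies the product by the factor $(n_{i}-1)(n_{j}+1)/(n_{i}n_{j})$. Since $(n_{i}-1)(n_{j}+1)=n_{i}n_{j}+(n_{i}-n_{j}-1)>n_{i}n_{j}$ whenever $n_{i}-n_{j}\ge 2$, the product strictly increases, contradicting maximality. (This also excludes an empty leaf, for if some $n_{j}=0$ then, because $\sum_{b} n_{b}=n\ge B$ spread over $B$ coordinates forces some $n_{i}\ge 2$, the exchange applies.) Hence at any maximizer all pairwise differences satisfy $|n_{i}-n_{j}|\le 1$; equivalently, the multiset $\{n_{b}\}$ takes values in two consecutive integers $\{a,a+1\}$.

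Finally I would pin down the value $a$ and the multiplicities via Euclidean division. If exactly $t$ of the counts equal $a+1$ and the remaining $B-t$ equal $a$, with $0\le t\le B$, then $\sum_{b} n_{b}=Ba+t=n$; by uniqueness of quotient and remainder in the division of $n$ by $B$ this forces $a=q$ and $t=r$. Thus the counts are exactly $q$ on $B-r$ nodes and $q+1$ on $r$ nodes, giving $P_{\max}=q^{B-r}(q+1)^{r}$ and hence $\phi_{n}^{\max}(\mathcal{T})=\left(\frac{q}{n}\right)^{B-r}\left(\frac{q+1}{n}\right)^{r}$. Because the maximizing multiset of counts is uniquely determined while only the assignment of counts to particular leaf indices is free, the maximizer is unique up to a permutation of the leaf indices, as claimed. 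I do not expect a serious obstacle here: the only genuine content is the strict exchange inequality together with the observation that the two-value structure plus the sum constraint is rigid; the one point warranting care is the edge case $r=0$ (all counts equal $q=n/B$) and the exclusion of empty leaves, both handled by $q\ge 1$.
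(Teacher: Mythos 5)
Your proof is correct, and it is in fact more complete than what the paper offers: the paper proves only the divisible case ($n/B\in\mathbb{Z}$) via the arithmetic--geometric mean inequality or Lagrange multipliers, and then simply declares the general lemma ``self-evident via Euclidean division'' with no written argument. Your exchange (smoothing) step --- showing that any maximizer of $\prod_b n_b$ subject to $\sum_b n_b = n$ must have $\lvert n_i - n_j\rvert \le 1$ for all pairs, since $(n_i-1)(n_j+1) > n_i n_j$ whenever $n_i - n_j \ge 2$ --- is precisely the missing content that the AM--GM bound cannot deliver when $B \nmid n$, and your observation that AM--GM is only a motivation rather than a conclusion in the integer setting is exactly right. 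The Euclidean-division pin-down of $(a,t)=(q,r)$ matches the paper's one-line hint, so your argument can be read as the rigorous formalization the authors had in mind.

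Two small points warrant tightening, though neither is a genuine gap. First, in your parenthetical handling of an empty leaf: if $n_j = 0$ the overall product is $0$, and a single exchange need not \emph{strictly} increase it when another coordinate is also zero; the cleaner statement is that since $n \ge B$ an assignment with all $n_b \ge 1$ exists and has product $\ge 1 > 0$, so no maximizer has an empty leaf, after which your ratio argument applies with all counts positive. Second, in the final step the two-value description $\{a,a+1\}$ with multiplicity $t$ of $a+1$ is not unique when the counts are constant (the case $t=B$ is the same multiset as $t'=0$ with $a'=a+1$); uniqueness of quotient and remainder requires normalizing to $0 \le t < B$, which costs only a relabeling and does not affect the conclusion that the multiset of counts, and hence the maximizer up to permutation of leaf indices, is uniquely determined.
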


This lemma points out
that: in the case $n\geq B$, the design criterion is maximized by
a balanced design, i.e. placing inputs as fairly as possible in each
of the terminal nodes. This latter case $n\geq B$ is what we consider
in the rest of the paper.

Note that if we consider random assignment of $n$ samples to $B$
different leaf nodes, the chance of getting $\phi_{n}^{\max}$ can
be modeled by a multinomial model. As the number of leaves $B$ increases, the chance that a random assignment attains optimal design tends to zero (see Figure \ref{fig:Lemma_pics} in Appendix \ref{sec:Optimal-Design-for}).
Therefore, a random sharding strategy would have a very slim chance
of getting any optimal design (that reaches $\phi_{n}^{\max}$), especially
for a large $n$. See Appendix \ref{sec:Optimal-Design-for} for discussions
on trees with constrained leaves and Appendix \ref{sec:Equivalence-Theorem}
for other optimality criteria.

\subsubsection{Tree Optimal Design for Random Locations}

Second, we consider the situation where the tree structure $\mathcal{T}$
can be fixed, but the observed locations are random $\bm{u}_{1},\ldots,\bm{u}_{n}\overset{\text{i.i.d.}}{\sim}\mathbb{P}$.

To generalize from \eqref{eq:phi_x_T}, we use $\bm{U}$ to denote
these random locations and study the expected optimality  for random
locations, where $\phi_{n}(\mathbb{P}\mid\mathcal{T})\coloneqq\mathbb{E}\phi(\bm{u}_{1},\ldots,\bm{u}_{n}\mid\mathcal{T})$
needs to be maximized. Now it is not possible to optimize the observed axiliary 
locations $\bm{U}$ since they are random; instead, we want to consider
what kind of tree structure $\mathcal{T}$ may offer an optimal design
on average. Therefore, we take expectation of \eqref{eq:phi_x_T}
with respect to $\mathbb{P}$, where random variables $n_{b}$ are
  the number of samples in the leaf node $\eta_{b}$ corresponding
to rectangle $R_{b}$: 
\begin{align}
& %
\mathbb{E}\phi(\bm{u}_{1},\ldots,\bm{u}_{n}\mid\mathcal{T}) =\mathbb{E}\prod_{b=1}^{B}\frac{n_{b}}{n}\text{ (by definition }n_{b}\text{'s are r.v.s.)}  =\frac{n!}{n^{B}}\prod_{b=1}^{B}\mathbb{P}\left(\bm{u}\in R_{b}\right)\label{eq:UTOD_derivation}
\end{align}

Equation \eqref{eq:UTOD_derivation}  generalizes the \emph{D-optimality}
criterion \eqref{eq:phi_x_T} when we can only determine the distribution,
instead of exact locations of $\bm{U}$. Conditioned on the tree structure
$\mathcal{T}$ we want to remove the dependence of optimality criterion
on the sample size $n$. Note that the $B$ partitions induced by
$R_{1},\cdots,R_{B}$ are determined, and the probability of getting
$n_{j}$ samples in node $\eta_{j}$ for $j=1,2,\cdots,B$~%
is $\frac{n!}{n_{1}!\cdots n_{B}!}\prod_{b=1}^{B}\mathbb{P}\left(\bm{u}\in R_{b}\right)^{n_{b}}$,
where $\bm{u}$ is one sample from $\mathbb{P}$. The random variables
$n_{1},\cdots,n_{B}$ are not independent, but their joint distribution
is multinomial. Note that the moment generating function of this multinomial
is $M(t_{1},\cdots,t_{B})=\biggl(\sum_{b=1}^{B}\mathbb{P}\left(\bm{u}\in R_{b}\right)\cdot\exp(t_{b})\biggr)^{n}$
\citep{kolchin_random_1978}, then the product expectation $\mathbb{E}\prod_{b=1}^{B}\frac{n_{b}}{n}=\left(\frac{1}{n}\right)^{B}\left(\mathbb{E}\prod_{b=1}^{B}n_{b}\right)$
can be computed as $\left(\frac{1}{n}\right)^{B}\frac{\partial}{\partial t_{1}\cdots\partial t_{B}}M(0,\cdots,0)=\left(\frac{1}{n}\right)^{B}\prod_{b=1}^{B}\mathbb{P}\left(\bm{u}\in R_{b}\right)\cdot n!$.
As an optimality criterion, we want to remove the effect of sample
size so that designs of different sizes are comparable. 
Dropping the dependence on $n$, our optimality criterion can be described
as follows.

\begin{defn}
($B$-expected optimal tree) Under the assumption where the observed
locations are drawn from $\mathbb{P}$ and let us fix the number of
partition components $B$ first, and consider the following optimality
criterion (dropping the multiplier depending on the sample size $n$),
which is a functional of the tree structure $\mathcal{T}$: 
\begin{align}
\phi(\mathcal{T}\mid\mathbb{P}) & \coloneqq\prod_{b=1}^{B}\mathbb{P}\left(\bm{u}\in R_{b}\right).\label{eq:phi_T}
\end{align}
We call the tree $\mathcal{T}_{B}^{\max}$ that maximizes \eqref{eq:phi_T}
the \textit{$B$-expected optimal tree} with respect to $\mathbb{P}$. 
\end{defn}
We take the convention that the product is taken over all non-zero
probabilities. The convention is needed for continuous input. If the
cut-points are chosen from a discrete set (as is typically the case in Bayesian tree models), then the
probability of getting a null partition set is zero. 

To develop some intuition of finding the optimal partition under \eqref{eq:phi_T},
let us first forget about the tree structure $\mathcal{T}$ and only
choose a partition consisting of $R_{b},b=1,\cdots,B$. It follows
from arithmetic-geometric-mean inequality that: 
\begin{prop}
\label{prop:general optimal}For the number of partition components
$B$ fixed, the optimality criterion \eqref{eq:phi_T} is maximized
by taking $\mathbb{P}\left(\bm{u}\in R_{1}\right)=\cdots=\mathbb{P}\left(\bm{u}\in R_{B}\right)$,
but there may not exist any $\mathcal{T}_{B}^{\max}$ that attains
this maximum value. \label{prop:general_opt_phi_T} 
\end{prop}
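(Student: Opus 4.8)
The plan is to treat the statement as two logically independent assertions: a soft optimization claim that pins down the maximizing configuration over \emph{all} partitions, and a realizability obstruction showing that a binary tree may fail to reach it. For the first claim I would work directly with the cell masses $p_b \coloneqq \mathbb{P}\left(\bm{u}\in R_b\right)$. Because $R_1,\ldots,R_B$ partition the support of $\mathbb{P}$, these satisfy $p_b\ge 0$ and $\sum_{b=1}^B p_b = 1$, so the criterion \eqref{eq:phi_T} is the product $\prod_{b=1}^B p_b$ restricted to the probability simplex. The arithmetic--geometric-mean inequality then gives
\[
\left(\prod_{b=1}^B p_b\right)^{1/B}\le \frac{1}{B}\sum_{b=1}^B p_b = \frac{1}{B},
\]
so $\phi(\mathcal{T}\mid\mathbb{P})\le B^{-B}$ with equality if and only if $p_1=\cdots=p_B=1/B$. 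This is the same mechanism used in Lemma \ref{lem:maximize_CTOD}, now phrased for probabilities rather than counts, and it immediately identifies the equiprobable partition as the unique maximizer among all partitions into $B$ nonnull cells.

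The second claim carries the real content, and I would settle it with an explicit counterexample showing that the equiprobable vector $(1/B,\ldots,1/B)$ need not be realizable by any admissible $\mathcal{T}$. The key point is that a binary regression tree only produces axis-aligned rectangular cells built through nested splits whose split values are drawn from a discrete admissible set, as flagged in the convention following the definition. I would take the simplest instance: $d=1$, $B=2$, and a distribution $\mathbb{P}$ placing atomic masses $2/3$ and $1/3$ at two distinct locations. Any single admissible split either yields cells of mass $(2/3,1/3)$ or leaves one cell null (excluded by the nonzero-probability convention), so the best attainable value is $2/9$, strictly below the theoretical optimum $B^{-B}=1/4$. More generally, when the cut-points range over a finite grid the attainable mass vectors form a finite subset of the simplex, which for generic $\mathbb{P}$ misses the single point $(1/B,\ldots,1/B)$; hence no maximizing tree $\mathcal{T}_B^{\max}$ exists even though the supremum of \eqref{eq:phi_T} over partitions equals $B^{-B}$.

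The main obstacle lies entirely in this second part. Part one is a routine AM--GM computation, but the realizability statement must be formulated with care, since for an \emph{atomless} $\mathbb{P}$ in one dimension the equiprobable interval partition \emph{can} always be obtained via quantiles, and an unguarded counterexample would be vacuous. The construction therefore has to exploit either atoms of $\mathbb{P}$ or the discreteness of the admissible cut-point set, as these are the two features that genuinely separate tree-representable partitions from arbitrary ones. I would be explicit that it is precisely these structural constraints, and not the soft optimization, that drive the nonexistence, so that the ``may not exist'' in the statement is read as the assertion that such obstructing $\mathbb{P}$ and grids do occur.
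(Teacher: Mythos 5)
Your proof is correct, and its first half coincides with the paper's: the paper introduces the proposition with ``It follows from arithmetic-geometric-mean inequality that,'' which is precisely your simplex computation bounding $\prod_{b=1}^{B}p_{b}$ by $B^{-B}$ with equality iff $p_{1}=\cdots=p_{B}=1/B$. Where you genuinely depart is the nonexistence half. The paper offers no counterexample; it argues in the surrounding text that tree-induced partitions must consist of axis-aligned rectangles, so that the optimal partition of Corollary \ref{coro:Finv_partition} ``cannot always be realized by a tree.'' Strictly speaking, that remark only shows a \emph{particular} equiprobable partition need not be tree-representable, which does not by itself rule out some \emph{other} tree attaining $B^{-B}$ --- indeed, for atomless $\mathbb{P}$ in one dimension, quantile splits always succeed, a caveat you rightly flag and the paper glosses over. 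Your atomic counterexample ($d=1$, $B=2$, masses $2/3$ and $1/3$) closes this gap cleanly: every admissible split yields the mass vector $\left(2/3,1/3\right)$, so the supremum of \eqref{eq:phi_T} over trees is $2/9<1/4=B^{-B}$, and no $\mathcal{T}_{B}^{\max}$ attaining the unconstrained maximum exists. Comparing what each approach buys: the paper's geometric remark conveys the right intuition in $d\geq2$ and dovetails with Corollaries \ref{cor:uniform optimal} and \ref{cor:cdf optimal}, but your construction is the only one of the two that is airtight as a proof of ``may not exist,'' and it correctly isolates atoms and discrete cut-point grids as the structural features that actually defeat realizability. One small caution: the paper's convention that ``the product is taken over all non-zero probabilities'' would, read literally, let a degenerate partition with a null cell score above $2/9$ (the null cell simply drops out of the product); your reading --- that null cells are inadmissible rather than silently omitted --- is the one under which both your counterexample and the proposition itself are coherent, and you should state that reading explicitly rather than leave it implicit in a parenthetical.
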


While the conclusion of Proposition \ref{prop:general_opt_phi_T}
remains true, the partition described later in Corollary \ref{coro:Finv_partition}
cannot always be realized by a tree $\mathcal{T}$. %
But any partition induced by a tree $\mathcal{T}$
must have rectangles as partition components (or boundaries of partition
components must be parallel to coordinate axes). Therefore, \textit{not
all} $B$-expected optimal partitions can be induced by a tree. In the case of a uniform measure, we have the following corollary.

\begin{cor}
\label{coro:uniform_measure}When $\mathbb{P}$ is a uniform measure
on a compact set (e.g., $[0,1]^{d}$) on $\mathbb{R}^{d}$, the criterion
\eqref{eq:phi_T} can be written as $\phi(\mathcal{T}\mid\mathbb{P})=\prod_{b=1}^{B}Vol\left(R_{b}\right)$. 
\end{cor}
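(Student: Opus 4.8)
The plan is to unwind the definition of the uniform measure and substitute it directly into the optimality criterion \eqref{eq:phi_T}; the statement is essentially a one-line consequence once the normalization is handled carefully. First I would recall that the uniform probability measure $\mathbb{P}$ on a compact set $K\subset\mathbb{R}^d$ is, by definition, absolutely continuous with respect to Lebesgue measure with constant density $1/Vol(K)$ on $K$ and zero elsewhere, so that for any measurable set $A$ we have $\mathbb{P}(A)=Vol(A\cap K)/Vol(K)$, where $Vol$ denotes Lebesgue volume.

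Next I would invoke the partition structure established in Example \ref{exa:tree path}: the rectangles $R_1,\ldots,R_B$ induced by the tree $\mathcal{T}$ satisfy $\cup_{b=1}^B R_b=[0,1]^d$, so in particular each $R_b$ is contained in the domain $[0,1]^d$. Taking $K=[0,1]^d$ (the running example), we have $R_b\cap K=R_b$ and $Vol(K)=1$, whence $\mathbb{P}(\bm{u}\in R_b)=Vol(R_b)$ for every $b$. Substituting this identity into \eqref{eq:phi_T} gives
\begin{align*}
\phi(\mathcal{T}\mid\mathbb{P})=\prod_{b=1}^B\mathbb{P}(\bm{u}\in R_b)=\prod_{b=1}^B Vol(R_b),
\end{align*}
which is the claimed expression.

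The only point requiring care --- and the closest thing to an obstacle --- is the role of the normalizing volume $Vol(K)$. For a general compact domain $K$ the same computation yields $\phi(\mathcal{T}\mid\mathbb{P})=Vol(K)^{-B}\prod_{b=1}^B Vol(R_b)$, so the clean identity in the statement holds literally only when $Vol(K)=1$, as for $[0,1]^d$. For any other compact $K$ the factor $Vol(K)^{-B}$ is a positive constant that does not depend on the tree $\mathcal{T}$, hence it leaves the maximizer of \eqref{eq:phi_T} unchanged; I would make this explicit so that the corollary is correctly read as an equality of optimality criteria up to an inessential multiplicative constant.
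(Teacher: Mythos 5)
Your proof is correct and takes essentially the same route as the paper, whose entire proof is the one-line substitution $\mathbb{P}\left(\bm{u}\in R_{b}\right)=Vol(R_{b})$ into \eqref{eq:phi_T}. Your closing remark about the normalizing factor $Vol(K)^{-B}$ for a general compact $K$ is a correct refinement that the paper's terse proof glosses over: the stated identity holds literally only when $Vol(K)=1$, and otherwise the constant factor is immaterial to the maximizer, exactly as you observe.
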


\begin{proof}
It follows directly from \eqref{eq:phi_T} and $\mathbb{P}\left(\bm{u}\in R_{b}\right)=Vol(R_{b})$. 
\end{proof}

Obviously, the existence of $\mathcal{T}_{B}^{\max}$ depends on both
the geometry of domain and the number of partitions $B$. However,
a special case of particular interest in application \citep{balog2015mondrian}
is when $\mathbb{P}$ is a uniform measure on the unit hypercube. Below
is our first main result, stating that on the unit hypercube $[0,1]^{d}$,
$B$-expected optimal partitions are always realizable through some
tree. 

\begin{cor}
\label{cor:uniform optimal}When $\mathbb{P}$ is a uniform measure
on $[0,1]^{d}\subset\mathbb{R}^{d}$, $\mathcal{T}_{B}^{\max}$ exists. 
\end{cor}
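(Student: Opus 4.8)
The plan is to prove existence constructively by exhibiting an explicit tree that attains the upper bound already identified in Proposition~\ref{prop:general_opt_phi_T}. Combining that proposition with Corollary~\ref{coro:uniform_measure}, under the uniform measure on $[0,1]^d$ the criterion is $\phi(\mathcal{T}\mid\mathbb{P})=\prod_{b=1}^{B}Vol(R_b)$, and since any tree-induced partition of $[0,1]^d$ satisfies $\sum_{b=1}^{B}Vol(R_b)=Vol([0,1]^d)=1$, the arithmetic--geometric mean inequality gives $\phi(\mathcal{T}\mid\mathbb{P})\le (1/B)^B$ with equality precisely when $Vol(R_b)=1/B$ for every $b$. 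Thus it suffices to produce one tree $\mathcal{T}$ whose $B$ leaf rectangles each have volume $1/B$.

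First I would fix a single coordinate, say $u_1$, and split only on it at the cut-points $1/B,\,2/B,\,\ldots,\,(B-1)/B$. The resulting partition of $[0,1]^d$ consists of the $B$ slabs $R_k=\{\bm{u}:(k-1)/B\le u_1<k/B\}$ for $k=1,\ldots,B$, each of volume $(1/B)\cdot 1^{\,d-1}=1/B$. Since a single binary split produces only two children, I realize this $B$-way partition by an unbalanced ``caterpillar'' tree: split $[0,1]$ at $1/B$, declare the left child a leaf, recurse on the right child $[1/B,1]$ at $2/B$, and continue, terminating after $B-1$ interior splits with exactly $B$ leaves. This is precisely the heterogeneous (unbalanced) tree construction already noted in Section~\ref{sec:Of-Trees-and} and illustrated in Figure~\ref{fig:A-balanced-tree}. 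Substituting these volumes into $\phi$ yields $\phi(\mathcal{T}\mid\mathbb{P})=(1/B)^B$, matching the AM--GM upper bound, so this $\mathcal{T}$ is a $B$-expected optimal tree and $\mathcal{T}_B^{\max}$ exists.

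The main obstacle --- and the only place where the unit-hypercube hypothesis is essential --- is the step from ``equal probabilities are optimal'' to ``an equal-probability partition is realizable by a tree.'' Proposition~\ref{prop:general_opt_phi_T} warns that in general no maximizing tree need exist, because tree-induced cells must be axis-aligned rectangles and equalizing $\mathbb{P}(\bm{u}\in R_b)$ under an arbitrary measure or an awkward domain geometry may be impossible with such cells. On $[0,1]^d$ with the uniform (product) measure this obstruction disappears: equalizing probability reduces to equalizing one-dimensional length along a single axis, and the identity $Vol([0,1]^d)=1$ makes arbitrary $B$ (not merely powers of $2$) reachable through the unbalanced comb tree. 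I would close by remarking that the optimal tree is far from unique --- any axis and any permutation of the slabs works --- which is consistent with the ``unique up to permutation of the leaf indices'' phenomenon observed for the fixed-location criterion in Lemma~\ref{lem:maximize_CTOD}.
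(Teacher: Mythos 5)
Your proof is correct, and it takes a genuinely different route from the paper's. The paper's proof is a two-line reduction: it invokes Corollary \ref{coro:uniform_measure} to turn the criterion into a product of volumes, and then cites Propositions 1' and 2' of \citet{kong1987decomposition} for the existence of a decomposition of $[0,1]^d$ into $B$ rectangles of equal volume. You instead construct the maximizer explicitly: AM--GM gives the upper bound $\phi(\mathcal{T}\mid\mathbb{P})\le(1/B)^{B}$ with equality iff all volumes equal $1/B$, and the unbalanced ``comb'' tree splitting a single coordinate at $1/B,2/B,\ldots,(B-1)/B$ attains it. Your construction buys two things. First, self-containedness --- no external decomposition theorem is needed, and the argument works for arbitrary $B$, not just special values. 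Second, and more substantively, your version is watertight on a point the paper's proof glosses over: the paper itself emphasizes (just before Corollary \ref{coro:uniform_measure}) that \emph{not every} rectangular partition is realizable by a binary tree, so citing an equal-volume rectangular decomposition does not by itself yield a tree $\mathcal{T}_{B}^{\max}$ unless that decomposition is hierarchical (guillotine), which the paper does not verify. Your comb tree is tree-induced by construction, so this gap never arises. What the paper's route buys in exchange is brevity and a pointer to the general literature on cube decompositions, which covers a richer family of equal-volume partitions than one-dimensional slabs; your closing remark about non-uniqueness (any axis, any permutation of slabs) recovers a slice of that richness and is consistent with the permutation-invariance in Lemma \ref{lem:maximize_CTOD}.
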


\begin{proof}
By Corollary \ref{coro:uniform_measure}, it suffices to partition
the $[0,1]^{d}\subset\mathbb{R}^{d}$ into $B$ rectangles with equal
volumes, which is ensured by the Proposition 1' and 2' in \citet{kong1987decomposition}. 
\end{proof}
The following corollary below follows from taking grid sub-division
(for each of $d$ dimensions) and the definition of multivariate c.d.f.
$\mathbb{F}(\alpha_{1},\cdots,\alpha_{d})\coloneqq\mathbb{P}(u_{1}\leq\alpha_{1},\cdots,u_{d}\leq\alpha_{d})$.
Although it works for a wider class of measures, it comes with a stricter
restriction on $B$. 
\begin{cor}
\label{cor:cdf optimal}When $\mathbb{P}$ is a probability measure
where its cumulative distribution function $\mathbb{F}$ exists and
is invertible (with probability one) with inverse function $\mathbb{F}^{-1}$
on $\mathbb{R}^{d}$, and the number of partition components is $B=B_{0}^{d}$
for some positive integer $B_{0}$, $\mathcal{T}_{B}^{\max}$ exists.
\label{coro:Finv_partition} 
\end{cor}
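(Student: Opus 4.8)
The plan is to reduce the existence claim to the equal-mass condition already characterized earlier and then to exhibit an explicit tree realizing it. By Proposition~\ref{prop:general_opt_phi_T}, the criterion \eqref{eq:phi_T} attains its maximum $\left(1/B\right)^{B}$ exactly when every cell satisfies $\mathbb{P}\left(\bm{u}\in R_{b}\right)=1/B$. Hence it suffices to construct a single tree $\mathcal{T}$ whose induced rectangular partition $\{R_{1},\ldots,R_{B}\}$ has $B=B_{0}^{d}$ components, each of probability mass exactly $1/B$; any such tree is then a valid $\mathcal{T}_{B}^{\max}$, and the existence statement follows.

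First I would treat the independent-margin case, which makes the ``grid sub-division'' literal and connects directly to Corollary~\ref{cor:uniform optimal}. Using the componentwise inverse of $\mathbb{F}$, place cut points $\mathbb{F}_{i}^{-1}\!\left(k/B_{0}\right)$ for $k=1,\ldots,B_{0}-1$ along each coordinate $i=1,\ldots,d$, so that each coordinate axis is split into $B_{0}$ slabs of marginal mass $1/B_{0}$. The Cartesian product of these one-dimensional partitions is a regular grid of $B_{0}^{d}=B$ rectangles, and under independence each rectangle has mass $\left(1/B_{0}\right)^{d}=1/B$; equivalently, this grid is the image under $\mathbb{F}^{-1}$ of the equal-volume grid on $[0,1]^{d}$ supplied by Corollary~\ref{cor:uniform optimal}. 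I would then realize this grid as a binary tree by performing the coordinate-aligned cuts one dimension at a time, expanding each $B_{0}$-way split into a chain of $B_{0}-1$ binary splits; each leaf is exactly one grid cell, so the partition is tree-induced and attains the bound.

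The main obstacle is dependence between coordinates: when $\mathbb{P}$ does not factor, the regular product grid from marginal quantiles no longer yields equal-mass cells. I would resolve this with a nested, branch-dependent construction driven by conditional c.d.f.s. Split coordinate $1$ at its marginal quantiles into $B_{0}$ slabs of mass $1/B_{0}$; within each slab split coordinate $2$ at the quantiles of the conditional law of $u_{2}$ given that slab, and continue recursively through all $d$ coordinates. By the chain rule of probability each of the resulting $B_{0}^{d}=B$ leaves carries mass $\left(1/B_{0}\right)^{d}=1/B$, and because every cut is a single-coordinate threshold each leaf region remains an axis-aligned rectangle realizable by binary splits. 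The invertibility of $\mathbb{F}$ (with probability one) is precisely what guarantees that these marginal and conditional quantiles exist and are well defined almost surely, so the construction never produces a null or degenerate cell; verifying this, together with the exact leaf count $B_{0}^{d}$, is the crux of the argument and is where the hypotheses $B=B_{0}^{d}$ and invertibility of $\mathbb{F}$ are consumed.
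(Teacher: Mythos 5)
Your proof is correct, and it takes a genuinely different---in fact stronger---route than the paper's. Both arguments share the same reduction: by Proposition \ref{prop:general_opt_phi_T}, any tree-realizable partition into $B$ cells of mass exactly $1/B$ attains the maximum of \eqref{eq:phi_T}, so existence reduces to exhibiting one such tree. The paper's proof then stops at your \emph{first} construction: it places cuts at the marginal quantiles $\mathbb{F}(1,\cdots,1,a_{j}^{i},1,\cdots,1)$ in each coordinate, forms the regular product grid of $B_{0}^{d}$ rectangles, and asserts each cell has joint mass $1/B_{0}^{d}$. That assertion holds only when the coordinates are independent: for a dependent measure---e.g.\ $d=2$ with $u_{1}=u_{2}$ uniform, whose joint c.d.f.\ $\min(x,y)$ is perfectly well defined---the marginal-quantile grid gives diagonal cells of mass $1/B_{0}$ and off-diagonal cells of mass $0$, not $1/B_{0}^{2}$. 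Your nested, branch-dependent construction via conditional quantiles repairs exactly this gap: by the chain rule it yields $B_{0}^{d}$ rectangles of mass $1/B$ for arbitrary dependent $\mathbb{P}$, and it is precisely the kind of partition a binary tree (with branch-specific cut points) can realize, though a product grid cannot. The one point to make explicit is existence of the conditional quantiles at every level; this does follow, since atomlessness of the marginals (the operative content of the invertibility hypothesis, which cannot literally mean bijectivity of a map $\mathbb{R}^{d}\to[0,1]$ for $d>1$) is inherited under conditioning on any positive-mass cell, as $\mathbb{P}(u_{i}=c\mid A)\leq\mathbb{P}(u_{i}=c)/\mathbb{P}(A)=0$, so the conditional c.d.f.s are continuous and exact $1/B_{0}$-quantiles exist by the intermediate value theorem. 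In short, your independent-margin case is the paper's proof verbatim, while your recursive construction extends the corollary's validity to dependent measures at the modest cost of conditional-quantile bookkeeping.
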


\begin{proof}
The criterion \eqref{eq:phi_T} is maximized by taking the partition
components $R_{b}=S_{1}^{b}\times\cdots\times S_{d}^{b},b=1,\cdots,B=B_{0}^{d}$,
where 
\[
S_{i}^{b}\in\{(a_{j}^{i},a_{j+1}^{i}]\mid\mathbb{F}(1,\cdots,1,a_{j+1}^{i},1,\cdots,1)-\mathbb{F}(1,\cdots,1,a_{j}^{i},1,\cdots,1)=1/B_{0},j=1,\cdots,B_{0}\},
\]
and $i=1,\cdots,d$. That is, the partition components $S_{i}$ are
selected from the marginal partition on the $i$-th dimension $[a_{1}^{i},a_{2}^{i}),[a_{2}^{i},a_{3}^{i}),\cdots,[a_{B_{0}}^{i},a_{B_{0}+1}^{i}]$
such that marginally each interval would have $1/B_{0}$ probability
mass w.r.t. the marginal measure in the $i$-th dimension. Therefore,
each $R_{b}$ would have probability mass $1/B=1/B_{0}^{d}$ w.r.t.
the joint measure $\mathbb{P}$. 
\end{proof}
By Corollary \ref{coro:uniform_measure}, if the observed locations
are randomly selected from a uniform distribution on $[0,1]^{d}$,
then the optimal sharding consisting of $B=B_{0}^{d}$ components
is just regular grid partition with equal number of samples assigned
to each component as in Lemma \ref{lem:maximize_CTOD}. We only discuss
finitely many leaves, i.e., $B<\infty$ in this paper, but briefly
discuss the design when $B\rightarrow\infty$ in Appendix \ref{sec:Asymptotic-Optimal-Designs}. 

Conditioned on the tree structure, an expected B-optimal design assigns
``as equal probability mass as possible'' to each leaf node, where
the probability mass is computed using the probability measure for
splitting variable $u$. To sum up our findings so far, we define
an optimality criterion that is more suitable for tree models, and
derive results on what an optimal design looks like in these scenarios.
In addition, we also discuss whether an optimal design, if it exists,
can be attained by a partition induced by a tree. 
When the distribution is uniform, an evenly (in terms of volume) sized
sharding is optimal from regression design point of view (Corollary
\ref{cor:uniform optimal}). 
When the distribution is not uniform but is absolutely continuous with invertible cumulative
distribution functions, an optimal design always exists and can
be attained by a tree (Corollary \ref{cor:cdf optimal}); otherwise, the optimal design may
exist, but  it may not be attained by a splitting tree $\mathcal{T}_{u}$. %

An optimal design for $\mathcal{T}_{u}$ that maximizes \eqref{eq:phi_T}
splits the observations according to the summary as above and coincide with the suggestions by \citet{huggins2016coresets}. We do not
have to control the locations of $\bm{X}$ in each shard, since it
can be automatically determined once the structure of $\mathcal{T}_{u}$
is determined (hence the partition on the $\bm{U}$ domain).  In essence, we convert the problem of constructing a sharding into
the problem of fitting a tree $\mathcal{T}_{u}$.

\subsection{Product Partition and Intersection Trees}

Now, we have justified that the size of sharding follows the ``probability
inverted'' rule in Corollary \ref{cor:uniform optimal} and \ref{cor:cdf optimal}. %
To automatically create the sharding in practice, we introduce
our notion of intersection tree below. The idea is to introduce the
auxilliary variable $\bm{U}$, which acts as a device that allows us to
utilize the partition generated by $\mathcal{T}_{u}$ to create the
sharding. %
The optimal design for $\bm{U}$ will create the associated partition on $\bm{U}$, therefore also on $\bm{X}$. Uniformly distributed $\bm{U}$ has equal-volume optimal design on $\bm{U}$ domain, by changing the distribution of $\bm{U}$ (or correlate with $\bm{X}$) we can attain different shardings on $\bm{X}$ domain. 

The advantage of binding the latent variable $\bm{U}$ with $\bm{X}$ is that the model can learn the best partition of $\bm{X}$ in terms of the topology of $\mathcal{T}_u$. This reduces the problem of sharding to choosing a best apriori distribution for $\bm{U}$.  Then, during the model fitting, the optimal sharding can also be learned.%

In this section, we formalize the idea of ``using auxiliary $\bm{U}$
to form a partition scheme on $\bm{X}$ domain'' as illustrated by Figure \ref{fig:Example-of-product-partition}.
Following our convention, we know that $\mathcal{T}_{x\mid\emptyset}$
means there is not any $\mathcal{T}_{u}$ and we are studying the
single tree for $\bm{X}$. 
Considering a single-tree regression model
based on $\mathcal{T}_{x}=\mathcal{T}_{x\mid\emptyset}$ for observations
$\bm{y}$ and inputs $\bm{X}$, the model (\ref{eqn:regmodel})
becomes
\begin{align}
y(\bm{x})=g(\bm{x}\mid\mathcal{T}_{x},\bm{\beta})+\epsilon
\end{align}
where $g(\bm{x};\mathcal{T}_{x},\bm{\beta})=F(\bm{x})^{T}\bm{\beta}$
is the approximating function, in the form of linear combinations
of indicators induced by the particular tree structure $\mathcal{T}_{x}=\mathcal{T}_{x\mid\emptyset}$
splitting only on inputs $\bm{X}$ with corresponding coefficients
$\bm{\beta}$ as described earlier.%
To embed our sharding notion within the modeling framework, we instead
consider the augmented model 
\[
y(\bm{x}\mid \bm{u})=g(\bm{x}\mid \bm{u};h(\mathcal{T}_{x\mid\eta_{u}},\mathcal{T}_{u}),\bm{\beta})+\epsilon
\]
where the augmented model involves the sharding tree $\mathcal{T}_{u}.$
There are many forms one might consider for combining the sharding
tree $\mathcal{T}_{u}$ with the regression tree $\mathcal{T}_{x}=\mathcal{T}_{x\mid\eta_{u}}$
For instance, $h(\mathcal{T}_{x\mid\eta_{u}},\mathcal{T}_{u}):=\mathcal{T}_{x}$
reduces to $\mathcal{T}_{x\mid\emptyset}$ and implies no sharding
at all.%
\begin{figure}
\centering

\includegraphics[width=0.8\textwidth]{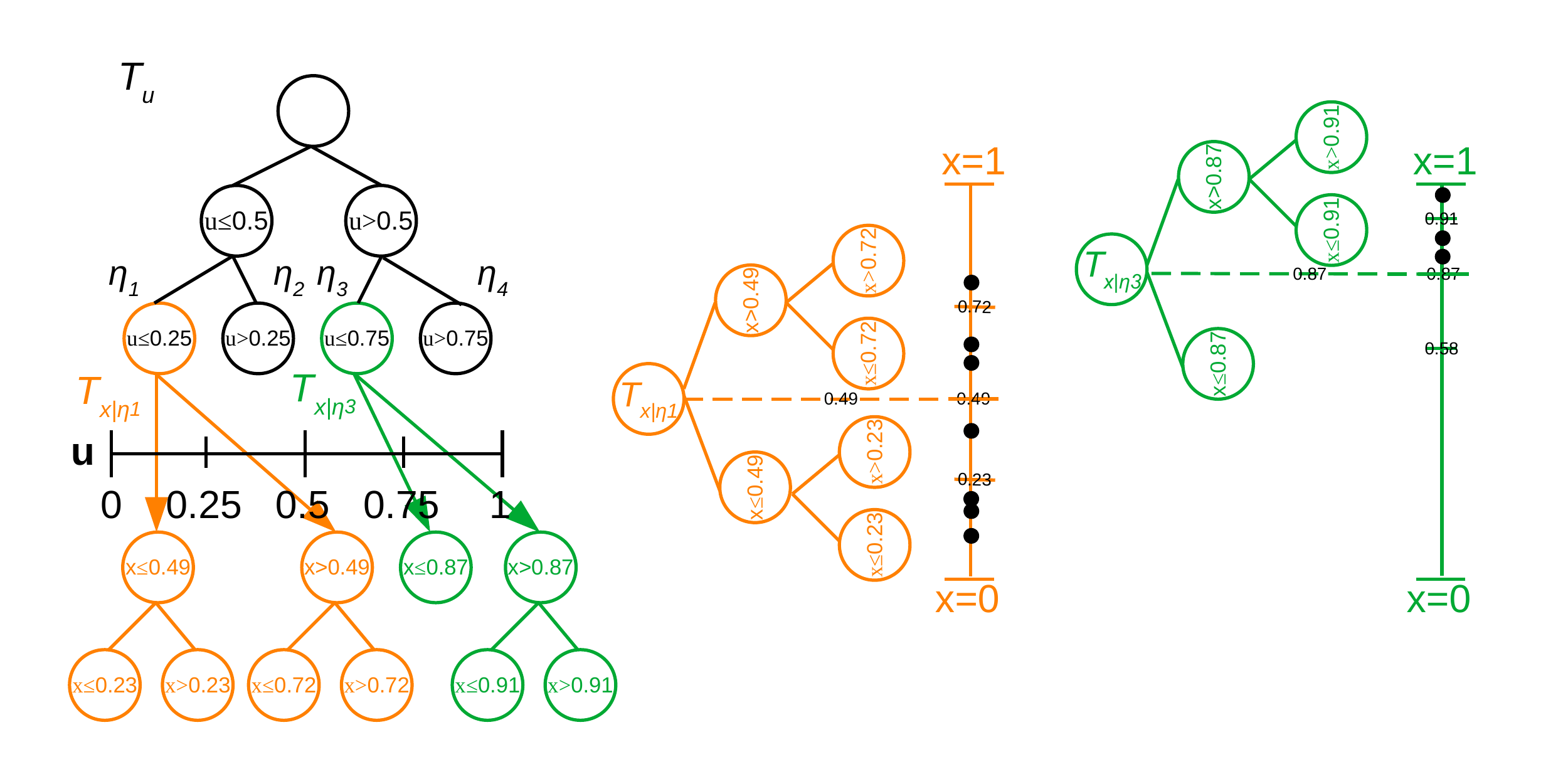}\caption{\label{fig:Example-of-product-partition}An example of intersection
tree based on the partition. We show two BART models represented by $m=1$ tree.}
\end{figure}

Let $\mathcal{T}_{u}$ be a tree splitting on the latent variable
$u$ inducing a partitioning of $[0,1]$. Then we use $\mathcal{T}_{x\mid\eta_{u}}$
to represent the tree splitting on the input variable $\bm{X}$ on the
shard partition %
corresponding to the node $\eta_{u}$. The graphical representation
is in Figure \ref{fig:Example-of-product-partition}. Our sharded
tree model is thus defined using $\mathcal{T}$ as 
\begin{equation}
y(\bm{x}\mid \bm{u})=g(\bm{x},\bm{u};\mathcal{T},\bm{\beta})+\epsilon\label{eq:single-tree}
\end{equation}
and the posterior is, 
\[
\pi(\mathcal{T},\bm{\beta},\sigma^{2}\vert\bm{y},\bm{X},U)\propto L(\bm{y};\bm{X},\bm{U},\mathcal{T},\bm{\beta},\sigma^{2})\pi(\mathcal{T})\pi(\bm{\beta})\pi(\sigma^{2})
\]
where $\pi(\bm{\beta}),\pi(\sigma^{2})$ follow their usual choices
and $\pi(\mathcal{T})=\pi(h(\mathcal{T}_{x},\mathcal{T}_{u}))=\pi(\mathcal{T}_{x}\cap\mathcal{T}_{u})=\pi(\mathcal{T}_{x\vert\eta_{u}})\pi(\mathcal{T}_{u})=\pi(\mathcal{T}_{x})\pi(\mathcal{T}_{u})$,
indicating that the sharding is independent of $\mathcal{T}_{x}$
a priori, and furthermore, taking $\pi(\mathcal{T}_{x\vert\eta_{u}})=\pi(\mathcal{T}_{x})$
informs independence of the prior on $\mathcal{T}_{x},$ i.e. the
usual tree prior for $\mathcal{T}_{x}$ would be used.  In particular, the typical depth prior for the regression tree component of the model is with respect to the root of the $\mathcal{T}_{x\vert \eta_u},$ not the root of the intersecting tree $\mathcal{T}.$

The posterior of interest is then recovered by marginalizing out $\bm{U}$,
therefore, we are averaging over different possible shardings induced
by $\mathcal{T}_{u}$: 
\[
\pi(\mathcal{T}_{x},\bm{\beta},\sigma^{2}\vert\bm{y},\bm{X})=\int_{\mathcal{T}_{u}}\int_{\bm{U}}\pi(\mathcal{T},\bm{\beta},\sigma^{2}\vert\bm{y},\bm{X},\bm{U})\pi(\mathcal{T}_{u})\pi(\bm{U})d\bm{U}d\mathcal{T}_{u}.
\]
As foreshadowed, the choice of prior on $\mathcal{T}_{u}$ and the
way how $\bm{U}$ is generated plays the key role in how the sharding affects
the model. 
When $\mathcal{T}_{x\mid\eta}$ are all containing nothing more than a root node, then the intersected tree model admits a design that is completely determined by the sharding tree $\mathcal{T}_u$. In this situation, we want to optimize $B$-expected optimality criteria for the sharding tree (e.g.,\eqref{eq:constrained_optimal_design}).

However, instead of carrying out a ``frequentist direct optimization'' to yield an optimal design for the tree, we take a Bayesian approach to sample from all possible designs for $\mathcal{T}_u$. When the $\mathcal{T}_{x\mid\eta}$ is no longer a root tree but with their own BART sum-of-tree  structures, the problem of maximizing an optimality criteria of the whole intersected tree $\mathcal{T}_u\cap\mathcal{T}_{x\mid\eta}$ becomes intractable due to the size of search space. The Bayesian intuition in intersection tree is that if we conditioned on the structure  $\mathcal{T}_{x\mid\eta}$'s, the optimal design of $\mathcal{T}_u$ will allow the posterior sampling from each $\mathcal{T}_{x\mid\eta}$ to converge much faster to the posterior mode. Nonetheless, a lot of insights could be achieved by considering the sharding induced by optimal design of $\mathcal{T}_u$, which coincides with  the suggestion in \citet{huggins2016coresets}.

\section{Sharded Bayesian Additive Tree \label{sec:Details-on-Model}}

In this section, we extend our construction of intersection tree model
to what we call \emph{sharded Bayesian additive regression trees} (SBT). %
The SBT model can be summarized as: Induce sharding using a single tree $\mathcal{T}_u$, and fit each shard using a BART $\mathcal{T}_{x\mid\eta}$ and aggregate.
\subsection{Fitting and Prediction Algorithms}
\label{sec:fitting and predicting}

From the generic form of \eqref{eq:generic tree model} and the specification
in \eqref{eq:single-tree}, we can write down our ensemble model,
\begin{equation}
y(\bm{x},\bm{u})   =\sum_{k=1}^{B}\sum_{j=1}^{m}g(\bm{x}; \bm{u}\mid\mathcal{T}_{x,j},M_{\mathcal{T}_{x,j}}, \mathcal{X}_{k}, \mathcal{T}_u)+\epsilon,\label{eq:BART model-ind2-copy}
\end{equation}
Assuming $m=1$ for simplicity and marginalizing out the variable $u$,  yield the formal expression: 
\begin{equation}
y(\bm{x})=\sum_{b=1}^{B}w_{b}\cdot f_{b}(\bm{x};\mathcal{T}_{x\mid \eta _b},\bm{\beta})+\epsilon,\label{eq:ensemble-tree}
\end{equation}
where we consider a set of weights $w_{1},\cdots,w_{B}$ from marginalization, whose
sum $\sum_{b=1}^{B}w_{b}=1$ for a fixed number of shards $B$. %
We use $f_{b}(\bm{x};\mathcal{T}_{x\mid \eta _b},\bm{\beta})=\mathcal{T}_{x\mid\eta}$  (i.e., BART models for each $\eta_u$ leaf node) to emphasize that the additive components are from the collection
$\mathcal{E}$, as defined by conditionally independent (but combined
additively) BART models. The weights can be understood as ``relative
contribution'' of each shard sub-model, and it is natural to ask
the question of how to choose each of these $w_{b}$'s to attain
an ``optimal overall model''. 

With our intersection tree topology in Figures \ref{fig:symbols}, conditioning on each sharding induces a set of weights in our ensemble. When the distribution of auxilliary
variable $\bm{U}$ and the tree structure of $\mathcal{T}_u$ are  well-chosen, we can show that the resulting sharding tree model  achieves the B-expected optimal design
we defined above. In other words, the changing $\mathcal{T}_{u}$ (as $\mathcal{T}_u$ updates during MCMC iterations)
serves as a collection of latent designs for weighting the
BART models $f_{b}$ in the ensemble under consideration. To clarify the idea, we
will introduce our fitting and prediction algorithms first, then provide
different explanations of this procedure.
\begin{algorithm}[ht!]
\rule[0.5ex]{1\columnwidth}{1pt}
\begin{itemize}
\item \textbf{Input.} The full data set consisting of both $\bm{X},\bm{y}$.
Tree priors for $\mathcal{T}_{u}$ and $\mathcal{T}_{x\mid\eta}$'s.
The number of MCMC iterations $N_{MCMC}$ (See Appendix \ref{sec:API_SBT} for APIs). 
\item \textbf{Output.} Samples from the posteriors of $\mathcal{T}_{u}$
and $\mathcal{T}_{x\mid\eta}$'s. 
\end{itemize}
\begin{enumerate}
\item Generate auxiliary variables $\bm{U}$. Sample $\bm{U}\sim\text{Unif(0,1)}$
 i.i.d. and make the auxilliary dataset $\bm{V}=[\bm{X},\bm{U}]$. %
\item For $k$ in $1:N_{MCMC}$ do 
\begin{enumerate}
\item Let $\mathcal{T}_{u}^{\text{old}}=\mathcal{T}_{u}$. 
\item Propose a regression tree $\mathcal{T}_{u}^{new}$ using some proposal.%
\item Partition $[0,1]$ using the partition representation of $\mathcal{T}_{u}$
by terminal nodes $\eta_{u}^{1},\cdots,\eta_{u}^{B}$ of $\mathcal{T}_{u}$,
therefore, partition the data into $\bm{X}=\cup_{j=1}^{B}\bm{X}_{j}$.\label{enu:Partition--using} 
\item Compute the acceptance ratio $\tau=\max(1,\tau^{*})$.\label{enu:Fit-accept-Ratio}%
We use Metropolis-Hasting step $\text{Unif}(0,1)<\tau$ to decide whether
accept: 

\item If accept, let $\mathcal{T}_{u}=\mathcal{T}_{u}^{\text{new}}$. %
If reject, let $\mathcal{T}_{u}=\mathcal{T}_{u}^{\text{old}}$.%

\item For each terminal node $\eta_{u}^{j}$ of $\mathcal{T}_{u}$ do\label{enu:Inner-loop} 
\begin{enumerate}
\item Let $\mathcal{T^{\text{old}}}_{x\mid\eta_{u}^{j}}=\mathcal{T}_{x\mid\eta_{u}^{j}}$. 
\item Propose a regression tree $\mathcal{T}_{x\mid\eta_{u}^{j}}$ with
$\bm{X}_{j}$ as input; $\bm{y}_{j}$ as output (with the tree priors
for $\mathcal{T}_{x\mid\eta_{u}^{j}}$) including splitting point.
Denote this $\mathcal{T}_{x\mid\eta_{u}^{j}}$ as $\mathcal{T}_{x\mid\eta_{u}^{j}}^{\text{new}}$.\label{enu:Fit-.-Propose} 
\item Compute the acceptance ratio\label{enu:Fit-accept-Ratio-1} {\footnotesize{}{}{}$$\tau_{j}=\max\left(1,\frac{\ell\left(\left.\mathcal{T}_{x\mid\eta_{u}^{j}}^{\text{new}}\right|\bm{X}_{j},\bm{y}_{j},\bm{u},\mathcal{T}_u\right)}{\ell\left(\left.\mathcal{T}_{x\mid\eta_{u}^{j}}^{\text{old}}\right|\bm{X}_{j},\bm{y}_{j},\bm{u},\mathcal{T}_u\right)}\right),$$}
and use Metropolis-Hasting step $Unif(0,1)<\tau_{j}$ to decide whether
accept.
\item If accept, let $\mathcal{T}_{x\mid\eta_{u}^{j}}=\mathcal{T}_{x\mid\eta_{u}^{j}}^{\text{new}}$
and update the likelihoods. 
If reject, let $\mathcal{T}_{x\mid\eta_{u}^{j}}=\mathcal{T}_{x\mid\eta_{u}^{j}}^{\text{old}}$. 
\end{enumerate}
\item Update the BART sub-model parameters as usual, and append the $\mathcal{T}_{u}$ and $\mathcal{T}_{x\mid\eta_{u}^{j}}$'s
to the output array. 
\end{enumerate}
\end{enumerate}
\rule[0.5ex]{1\columnwidth}{1pt}

\caption{\label{alg:RST-Algorithm-Fitting}Sharded Bayesian regression trees
 fitting algorithm.}
\end{algorithm}

Algorithm \ref{alg:RST-Algorithm-Fitting} allows us to draw samples
from the posterior of the $\mathcal{T}_{u}$ and $\mathcal{T}_{x\mid\eta_{u}^{j}}$,
conditioned on the data only. %
In Algorithm \ref{alg:RST-Algorithm-Fitting}
step \ref{enu:Fit-accept-Ratio}, the proposal distribution factors $p\left(\left.\mathcal{T}_{u}^{\text{new}}\right|\bm{u}\right)$
and $p\left(\left.\mathcal{T}_{u}^{\text{old}}\right|\bm{u}\right)$
represent how likely the sharding tree structure $\mathcal{T}_{u}$
can be proposed given the auxilliary variable
$u$. 
The sharding tree structure $\mathcal{T}_{u}$ partitions
the input $\bm{X}$ into shards $\bm{X}_{j},j=1,\cdots B$; and simultaneously
partitions responses $\bm{y}$ into shards $\bm{y}_{j},j=1,\cdots,B$.
This step immediately reveals the fact that we
cannot perform fitting of $\mathcal{T}_{u}$ and $\mathcal{T}_{x\mid\eta_{u}^{j}}$
separately, since a newly proposed $\mathcal{T}_{u}$ immediately
determines the $\ell\left(\left.\mathcal{T}_{x\mid\eta_{u}^{j}}\right|\bm{X}_{j},\bm{y}_{j}\right)$
in the same expression. 

Note that we do not update the whole intersection tree structure $\mathcal{T}_{u}\cap\mathcal{T}_{x}$
jointly, but update $\mathcal{T}_{u}$ and $\mathcal{T}_{x\mid\eta_{u}}$'s
sequentially. Theoretically, we can equivalently perform a joint update
using the following accept-reject ratio $\max(1,\tau^{*})$ where
\[
\tau^{*}= \frac{\left[\prod_{j=1}^{B}\ell\left(\left.\mathcal{T}_{x\mid\eta^{j}_u}\right|\bm{X}_{j}^{\text{new}},\bm{y}_{j}^{\text{new}}\right)\right]\cdot p\left(\left.\mathcal{T}_{u}^{\text{new}}\right|\bm{u}\right)}{\left[\prod_{j=1}^{B}\ell\left(\left.\mathcal{T}_{x\mid\eta^{j}_u}\right|\bm{X}_{j}^{\text{old}},\bm{y}_{j}^{\text{old}}\right)\right]\cdot p \left(\left.\mathcal{T}_{u}^{\text{old}}\right|\bm{u}\right)}.
\]
This requires us to propose not only $\mathcal{T}_{u}^{\text{new}}$
but also all $\mathcal{T}_{x\mid\eta_{u}^{j}}^{\text{new}},j=1,\cdots,B$
jointly at the same time since we want to obtain a new intersection tree structure. 
This formulation would eliminate the need of additional inner loop
in step \ref{enu:Inner-loop} and the computation of accept-reject
ratio in step \ref{enu:Fit-accept-Ratio-1}, compared to the current
Algorithm \ref{alg:RST-Algorithm-Fitting}. However, this joint proposal
would practically result in slow computation and mixing in MCMC due
to its high dimensional nature. Also, such a joint proposal is not suitable for parallel
computation. Instead, we utilize the Metropolis-Hasting step as shown below. 

In Algorithm \ref{alg:RST-Algorithm-Fitting} step \ref{enu:Fit-.-Propose},
we note that such a tree $\mathcal{T}_{u}$ is still partitioning
the whole domain $\mathcal{X}$, although its structure is completely
dependent on $\bm{X}_{j}$. After fitting the SBT model, we want to
use the posterior sample sequence of $\mathcal{T}_{u}$ and $\mathcal{T}_{x\mid\eta_{u}}$'s.
The prediction algorithm is slightly different since, as shown below, the
auxiliary variable $u_{*}$ needs to be drawn as well. This Algorithm
\eqref{alg:RST-Algorithm-Predict} gives one sample of $g(x_{*}\mid\mathcal{T}_{x\mid\eta_{x_{*}}},\mathcal{T}_{u},u_{*})$.

\noindent 
\begin{algorithm}
\rule[0.5ex]{1\columnwidth}{1pt}
\begin{itemize}
\item \textbf{Input.} Samples from the posteriors of $\mathcal{T}_{u}$
and $\mathcal{T}_{x\mid\eta}$'s. Predictive location $x_{*}$. The
number of MCMC iterations $N_{MCMC}$. 
\item \textbf{Output.} Predictive value from SBT at location $x_{*}$. 
\end{itemize}
\begin{enumerate}
\item For $k$ in $1:N_{MCMC}$ do 
\begin{enumerate}
\item Sample $u_{*}\sim\text{Unif(0,1)}$ and make the auxilliary point
$(x_{*},u_{*})$. %
\item Input $u_{*}$ to the regression tree $\mathcal{T}_{u}$ with $u_{*}$
as input; %
Denote
the terminal node where $u_{*}$ falls as $\eta_{u_{*}}$ 
\item Input $x_{*}$ to the regression tree $\mathcal{T}_{x\mid\eta_{u_{*}}}$
with $x_{*}$ as input; $\hat{y}(x_{*})=\mathcal{T}_{x\mid\eta_{u_{*}}}(x_{*})$
as output. 
\item Append $\hat{y}(x_{*})$ as predicted value to the prediction sample sequence. 
\end{enumerate}
\item Take the average of prediction sample sequence as predictive value
at location $x_{*}$. 
\end{enumerate}
\rule[0.5ex]{1\columnwidth}{1pt}

\caption{\label{alg:RST-Algorithm-Predict}Sharded Bayesian regression trees predicting algorithm.}
\end{algorithm}

%
%
%
%
%
%
%
%

%
%
%
%
%
%
%
%
%
%
%
%
%
%
%
%
%
%
%
%
%
%
%
%
%
%
%
%
%
%
%
%
%
%
\subsection{Perspectives on BART Ensembles}\label{sec:perspectives on ensemble}

Since we are building SBT as a Bayesian model, more insightful perspectives
are needed. The following two perspectives are two sides of one coin,
depending on whether we want to study the empirical measure derived
from individual sharded trees (``as it is'') or we want to infer
the marginalized measure for the tree population first, and then take
inference. Both perspectives happen on the space of tree structures,
precisely for the tree structure $\mathcal{T}_{u}$, admitting the
same algorithms above.

Perspective one is to treat the ensemble model as a weighted aggregation.
In this perspective, the weights determined by the tree $\mathcal{T}_{u}$
does not marginalize this tree structure out but treat this fixed
structure as if it is the MAP of the tree posterior. This perspective
suggests that the sharded models are individual but not independent,
and an aggregated model produced by (re-)weighting of these representatives
would do us a better job. 

Perspective two is to treat the ensemble model as averaging over the
probability distribution defined by the (normalized) weights. In this
perspective, the weights determined by the tree $\mathcal{T}_{u}$
are considered as an empirical approximation to a marginalized model, where
$\mathcal{T}_{x\mid\eta_{u}}$'s are fitted as new tree models %
and we want to marginalize the
effect of sharding introduced by $\mathcal{T}_u$. If we use regression tree as an ``interpretable''
decision rule in the sense of \citet{rudin2022interpretable}, then
this perspective ask us to derive the decision rule by coming up with
just one margnalized rule: for a new hypothetical $x^{*}$, run through
the marginalized measure and compute the mean model, then we have
$1$ result as our final rule.

\subsection{Weights of sub-models}

Although we have provided two different perspectives on the ensemble
model, we have not yet described the effect of our weights. There are
two lines of literature we briefly recall below.

First, from the generalized additive model literature \citep{mccullagh2019generalized,strutz2011data},
the choice of weights are motivated by reducing the uncertainty in
prediction or the heteroscedasticity in observations based on the
design \citep{cressie1985fitting}. For example, in the weighted least
square regression, the weights are chosen to be inverse proportional
to the location variances, meeting D-optimality in the regression setting.

Second, from the model aggregation literature, research has focused on how
to improve the accuracy of prediction \citep{barutccuouglu2003comparison}
and introduce adaptive and dynamic weighting to improve the overall
accuracy of the ensemble \citep{kolter2007dynamic}. And more recent works
formulate the choice of weights in regression ensembles as an optimization
meta-problem to be solved \citep{shahhosseini2022optimizing}.

Under the assumption of a $B$-expected optimal design, we can achieve a uniform posterior rate for the aggregated model when the underlying function has homogeneous smoothness. This echoes the claim
by \citet{pmlr-v89-rockova19a} that the Galton-Watson prior would ensure a nearly
optimal rate not only for single BART but also in an aggregated scheme.
We restate their main result using our notations below and provide a 
brief intuitive explanation afterwards. 
\begin{thm}
\label{thm:(Theorem-7.1-in-Rockva-Saha-2019)}(Theorem 7.1 in \citet{pmlr-v89-rockova19a}
on the posterior concentration for BART) Assume that the ground-truth $f_{0}$ is $\nu$-Holder
continuous with $\nu\in(0,1]$ where $\|f_{0}\|_{\infty}\apprle\log^{1/2}n$.
Denote the true function as $f_{0}$ and the BART model based on an
ensemble $\mathcal{E}$ as $f_{b}$, and the empirical
norm $\|f\|_{n}\coloneqq\frac{1}{n}\sum_{i=1}^{n}f(\bm{x}_{i})^{2}$.

Assume a regular design $\bm{X}=\{\bm{x}_{1},\cdots,\bm{x}_{n}\}\subset\mathbb{R}^{d}$
(in the sense of Definition 3.3 of \citet{rockova_posterior_2019})
where $d\lesssim\log^{1/2}n$. Assume the BART prior with the number
of trees $T$ fixed, and with node $\eta$ splitting probability  $p_{split}(\eta)=\alpha^{\text{depth}(\eta)}$
for $\alpha\in[1/n,1/2)$. With $\varepsilon_{n}=n^{-\alpha/(2\alpha+d)}\log^{1/2}n$
we have 
\begin{align*}
\prod\left(\left.f_{b}\in\mathcal{F}:\left\Vert f_{0}-f_{b}\right\Vert _{n}>M_{n}\varepsilon_{n}\right|\bm{y}_{n}\right) & \rightarrow0,
\end{align*}
for any sequence $M_{n}\rightarrow0$ in $P_{f_{0}}$-probability,
as the sample size $n$ and dimensionality $d\rightarrow\infty$. 
\end{thm}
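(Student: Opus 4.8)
The plan is to recognize this as an instance of the general posterior contraction framework of Ghosal--Ghosh--van der Vaart specialized to the sum-of-trees prior, and to verify its three standard sufficient conditions: a \emph{prior mass} condition near $f_0$, a \emph{sieve} (entropy) condition, and a \emph{testing} condition. Since the statement is a restatement of Theorem 7.1 of \citet{pmlr-v89-rockova19a}, I would follow their argument, organizing it around these three pillars, with Holder approximation theory supplying the bridge between the analytic smoothness of $f_0$ and the combinatorial prior on tree topologies.

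First I would establish the \textbf{approximation step}. Because $f_0$ is $\nu$-Holder, it can be approximated in the empirical norm $\|f\|_n$ by a piecewise-constant function on a tree-induced rectangular partition: a partition with $K$ cells of diameter of order $K^{-1/d}$ yields approximation error of order $K^{-\nu/d}$. Balancing this bias against the stochastic error of order $\sqrt{K \log n / n}$ selects the effective number of leaves $K_n$ and hence pins down the rate $\varepsilon_n$. The subtlety is that BART trees realize only partitions obtainable by recursive axis-aligned splits, so the approximant must be built from a partition actually reachable by the prior; this is where the regular-design assumption on $\bm{X}$ (Definition 3.3 of \citet{rockova_posterior_2019}) and the restriction $d \lesssim \log^{1/2} n$ enter.

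Second I would verify the \textbf{prior mass condition}, namely that the BART prior assigns probability at least $e^{-c\,n \varepsilon_n^2}$ to a Kullback--Leibler-type neighborhood of $f_0$. This decomposes into (i) the Galton--Watson topology prior, with split probability $p_{split}(\eta) = \alpha^{\mathrm{depth}(\eta)}$ and $\alpha \in [1/n, 1/2)$, charging trees of the correct complexity $K_n$ with enough mass, and (ii) the Gaussian leaf-coefficient prior placing enough mass near the approximating step heights. In parallel I would construct a \textbf{sieve} $\mathcal{F}_n$ of sum-of-trees with a bounded total number of leaves and bounded coefficients whose prior complement has exponentially small mass, and bound its metric entropy by $\log N(\varepsilon_n, \mathcal{F}_n, \|\cdot\|_n) \lesssim n \varepsilon_n^2$. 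Standard Le Cam testing theory then converts this entropy bound into exponentially powerful tests separating $f_0$ from $\{f : \|f_0 - f\|_n > M_n \varepsilon_n\}$, and the three ingredients combine through the general contraction theorem to yield the stated conclusion.

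The main obstacle I anticipate is the interplay in the prior mass step between the branching-process prior and the geometry of the partition: one must show that trees of the right \emph{resolution}---fine enough to approximate $f_0$, coarse enough to control complexity---receive non-negligible weight under the depth-decaying split probability, which requires carefully tracking how $\alpha$ and the depth profile of a near-balanced partition trade off. A secondary difficulty is that the relevant norm is the random empirical norm $\|\cdot\|_n$ rather than a fixed $L_2$ norm, so the entropy and testing arguments must be carried out conditionally on the design, with the regularity of $\bm{X}$ invoked to pass between the empirical and population geometries. I would note that the aggregation claim in the surrounding discussion is not part of this theorem itself; as restated it concerns a single sum-of-$T$-trees BART, and its extension to the sharded ensemble would need to be argued separately.
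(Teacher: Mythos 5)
You should know at the outset that the paper contains no proof of this statement: it is imported verbatim (up to notation) as Theorem 7.1 of \citet{pmlr-v89-rockova19a} and then used as an ingredient in Theorem \ref{thm:posterior_proof_thm}, so there is no in-paper argument to compare against. Your outline is a faithful reconstruction of the proof in the cited source, which does run exactly through the Ghosal--Ghosh--van der Vaart triple you describe --- prior mass from the Galton--Watson topology prior combined with the Gaussian leaf heights, a sieve of sum-of-trees with controlled leaf counts whose entropy is bounded by $n\varepsilon_n^2$, and tests carried out conditionally on the design in the empirical norm, with the H\"older-approximation-by-recursive-partitions step (where the regular-design assumption and $d\lesssim \log^{1/2} n$ enter) pinning down the rate. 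Your closing caveat is also correct: the theorem concerns a single sum-of-$T$-trees model, and the sharded aggregation is argued separately by the paper in Theorem \ref{thm:posterior_proof_thm}.

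One thing worth making explicit, since your argument silently corrects it: the statement as printed carries two transcription slips relative to the source. The exponent in $\varepsilon_n = n^{-\alpha/(2\alpha+d)}\log^{1/2} n$ reuses $\alpha$, the split-probability parameter, where the smoothness $\nu$ is meant --- your bias--variance balancing of $K^{-\nu/d}$ against $\sqrt{K\log n/n}$ yields $n^{-\nu/(2\nu+d)}$ up to logarithmic factors --- and the quantifier ``for any sequence $M_n \rightarrow 0$'' should read $M_n \rightarrow \infty$; as printed the claim is false in general (shrinking $M_n$ enlarges the exceptional event), and the paper itself invokes the $M_{n_b}\rightarrow\infty$ version inside the proof of Theorem \ref{thm:posterior_proof_thm}. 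Any proof along the lines you propose establishes the corrected statement, not the printed one.
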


The functional space $\mathcal{F}$ of step-functions is defined the same as in \cite{rockova_posterior_2019}.
The regularity assumption for the design points aims at ensuring that
the underlying true signal $f_{0}$ can be approximated by the BART
model. Based on this result, we can extend the posterior concentration
to the sharded-aggregation model. And then from this result below,
we know the choice of weights would affect this rate. 
\begin{thm}
\label{thm:posterior_proof_thm}Under the same assumptions and notations
as in Theorem \ref{thm:(Theorem-7.1-in-Rockva-Saha-2019)}, we suppose
that the sharding tree $\mathcal{T}_{u}$ partitions the full dataset
$\bm{X}=\{\bm{x}_{1},\cdots,\bm{x}_{n}\}\subset\mathbb{R}^{d}$, $\bm{y}_{n}$
into $B$ shards $\mathcal{X}_{b},b=1,\cdots,B$ and corresponding
responses $\bm{y}_{b,n_{b}},\cup_{b=1}^{B}\bm{y}_{b,n_{b}}=\bm{y}_{n}$.
We designate a set of weights $w_{1},\cdots,w_{b}$ whose sum
$\sum_{b=1}^{B}w_{b}=1$ for fixed number of shards $B$. Then,
our sharded-aggregation model would take the form of $\sum_{b=1}^{B}w_{b}f_{b}$
where $f_{b}$ is the BART based on the shard $\bm{y}_{b,n_{b}}$. 
\begin{align}
 & \prod\left(\left.\left\Vert f_{0}-\sum_{b=1}^{B}w_{b}f_{b}\right\Vert _{n}>M_{n}B\cdot\varepsilon_{n}\right|\bm{y}_{n}\right)\label{eq:main_bound}\\
 & \leq B\cdot\max_{b=1,\cdots,B}\prod\left(\left.f_{b}\in\mathcal{F}:\left\Vert f_{0}-f_{b}\right\Vert _{n_{b}}>w_{b}^{-1}\cdot M_{n}\varepsilon_{b,n}\right|\bm{y}_{b,n_{b}}\right)\leq B\varepsilon_{n}.\nonumber 
\end{align}
Then, the aggregation of sharded tree posteriors also concentrate
to the  the ground-truth $f_{0}$. 
\end{thm}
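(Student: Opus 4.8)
The plan is to deduce the concentration of the aggregated estimator $\sum_{b=1}^{B} w_b f_b$ from the $B$ single-shard concentration statements furnished by Theorem~\ref{thm:(Theorem-7.1-in-Rockva-Saha-2019)}, using a convexity decomposition of the error followed by a union bound over shards. Throughout I would work conditionally on the sharding tree $\mathcal{T}_u$, so that the shards $\mathcal{X}_b$ and their sizes $n_b$ are fixed; by the model construction (and the factorization of the joint accept--reject ratio noted in Algorithm~\ref{alg:RST-Algorithm-Fitting}) the sub-models $f_b$ are then conditionally independent, each depending only on its own shard response $\bm{y}_{b,n_b}$. This conditional independence is what legitimizes passing from the full-data posterior on the left of \eqref{eq:main_bound} to the $B$ shard-data posteriors on the right.

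First I would use $\sum_{b=1}^{B} w_b = 1$ to express the aggregation error as a convex combination of the single-shard errors,
\begin{equation}
f_0 - \sum_{b=1}^{B} w_b f_b = \sum_{b=1}^{B} w_b\,(f_0 - f_b),
\end{equation}
and apply the triangle inequality for $\|\cdot\|_n$ to obtain $\bigl\|f_0 - \sum_b w_b f_b\bigr\|_n \le \sum_b w_b \|f_0 - f_b\|_n$. The decisive deterministic step is the contrapositive: if each summand were controlled at its own radius, $\|f_0 - f_b\|_n \le w_b^{-1} M_n \varepsilon_{b,n}$, then $\sum_b w_b \|f_0 - f_b\|_n \le M_n \sum_b \varepsilon_{b,n} \le M_n B \varepsilon_n$, where the last step uses that under homogeneous smoothness the shards share a common rate $\varepsilon_{b,n} \le \varepsilon_n$. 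Consequently the event $\bigl\{\|f_0 - \sum_b w_b f_b\|_n > M_n B \varepsilon_n\bigr\}$ forces at least one shard to violate its radius, i.e.\ it is contained in $\bigcup_{b=1}^{B}\bigl\{\|f_0 - f_b\|_n > w_b^{-1} M_n \varepsilon_{b,n}\bigr\}$. This simultaneously explains the reweighting $w_b^{-1}$ inside each per-shard event and the inflation factor $B$ in the aggregate radius.

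Taking posterior probabilities of this containment and applying the union bound, together with the conditional independence of the $f_b$ given $\mathcal{T}_u$, yields $\sum_{b=1}^{B}(\text{per-shard posterior}) \le B\max_b(\text{per-shard posterior})$, which is the first inequality of \eqref{eq:main_bound}. I would then bound each per-shard factor by invoking Theorem~\ref{thm:(Theorem-7.1-in-Rockva-Saha-2019)} on shard $b$: because a $B$-expected optimal (balanced) design endows each shard with a regular design of size $n_b$ and the same smoothness of $f_0$, the shard posterior mass outside an $\varepsilon_{b,n}$-ball vanishes, which schematically gives $\max_b(\cdots)\le \varepsilon_n$ and hence the stated final bound $B\varepsilon_n$.

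The main obstacle is the norm mismatch visible in \eqref{eq:main_bound} itself: the triangle inequality produces the full empirical norm $\|f_0 - f_b\|_n$, whereas Theorem~\ref{thm:(Theorem-7.1-in-Rockva-Saha-2019)} controls only the restricted shard norm $\|f_0 - f_b\|_{n_b}$ appearing in the middle term. Bridging these is exactly where the design of $\bm{U}$ enters: since $\bm{U}\sim\mathrm{Unif}(0,1)$ is independent of $\bm{X}$, each shard is a uniform random subsample of the full design, so under the regular-design assumption of \citet{rockova_posterior_2019} the empirical measure on shard $b$ approximates the full empirical measure uniformly over the step-function class $\mathcal{F}$, making $\|f_0 - f_b\|_{n_b}$ and $\|f_0 - f_b\|_n$ comparable up to constants. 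A secondary bookkeeping point is that $\varepsilon_{b,n}$ nominally reflects the reduced size $n_b \asymp n/B$; holding $B$ fixed, the induced inflation $B^{\nu/(2\nu+d)}$ is an absolute constant, so the aggregate still concentrates at order $\varepsilon_n$ up to the fixed multiplicative factor $B$.
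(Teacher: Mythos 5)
Your proposal follows essentially the same route as the paper's proof in Appendix \ref{sec:Proof-to-Theorem-posterior}: invoke Theorem \ref{thm:(Theorem-7.1-in-Rockva-Saha-2019)} separately on each shard with the factor $w_b^{-1}$ absorbed into the radius (the paper makes this absorption explicit, noting that since $M_{n_b}$ may be \emph{any} diverging sequence one can insert $w_b(n)^{-1}$ without loss of generality --- you should state this too, since the theorem as quoted is for a fixed radius); then use $\sum_b w_b=1$ and the triangle inequality to decompose the aggregate error, the contrapositive containment of the aggregate deviation event in the union of per-shard events, the union bound to produce the $B\max_b$ factor, and the observation that $f_b$ depends on the data only through $\bm{y}_{b,n_b}$ to replace the conditioning on $\bm{y}_n$ by the shard posteriors. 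All of this matches the paper step for step.

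The one substantive divergence is your bridge between $\Vert\cdot\Vert_{n_b}$ and $\Vert\cdot\Vert_n$. The paper handles the norm mismatch deterministically via \eqref{eq:triangle_norm_emp}: since $1/n\le 1/n_b$ termwise, $\Vert f\Vert_n\le\sum_{b=1}^{B}\Vert f\Vert_{n_b}$, an inequality that requires no randomness in $\bm{U}$, no regular-design or subsampling argument, and (as the paper remarks) holds even for overlapping shards. Your proposed substitute --- that uniform subsampling makes the shard empirical measure comparable to the full empirical measure uniformly over the step-function class $\mathcal{F}$ --- is a much heavier empirical-process claim that you neither prove nor need, and its ``up to constants'' slack would propagate into the radii of \eqref{eq:main_bound}, changing the stated bound. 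Two smaller points: your step $M_n\sum_b\varepsilon_{b,n}\le M_nB\varepsilon_n$ tacitly takes $\varepsilon_n=\max_b\varepsilon_{b,n}$, whereas the paper defines $\varepsilon_n\coloneqq\min_b\varepsilon_{b,n}$; under the balanced ($B$-expected optimal) sharding with $n_b\asymp n/B$ all the $\varepsilon_{b,n}$ agree up to constants, so the discrepancy is cosmetic here, but as written one convention must be fixed for the threshold-replacement step to point in the right direction. Finally, like the paper, you treat the terminal bound $\max_b(\cdot)\le\varepsilon_n$ only schematically; the appendix version instead fixes an arbitrary $\epsilon>0$, obtains $N_{b,\epsilon}$ for each shard, and takes all $n_b\ge N_\epsilon\coloneqq\max_b N_{b,\epsilon}$, which is the cleaner way to justify that the right-hand side vanishes.
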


\begin{proof}
See Appendix \ref{sec:Proof-to-Theorem-posterior}. 
\end{proof}
For single tree regression, we motivate the optimal choice of sharding
sizes to be inversely proportional to the probability mass contained
in the sharded region (see Proposition \ref{prop:general optimal}),
which aligns with the spirit of the aforementioned literature. As
a Bayesian model, the posterior contraction rate is more or less a
key element in ensuring the accuracy of posterior prediction. %

The following corollary follows from the exchangeability of data shards among different sub tree model, %
when each sub-model is based on
sharding induced by $\mathcal{T}_u$. The rationale for the corollary is that the RHS of \eqref{eq:main_bound} is a maximum bound that depends
on $\varepsilon_{n}\coloneqq\min_{b=1,\cdots,B}\varepsilon_{b,n}$ and $\varepsilon_{b,n}\coloneqq n_{b}^{-\alpha/(2\alpha+d)}\log^{1/2}n_{b}$ 
subject to $\sum_{b=1}^{B}n_{b}=n$. %

Regardless of the choice of $M_n$, the $\varepsilon_{n}$ is minimized  
when all $w_{b}^{-1}\cdot\varepsilon_{b,n}$ are ``as close to each other''
as possible, following a similar argument like Lemma \ref{lem:maximize_CTOD}, and we know that $w_{b}^{-1}\cdot \varepsilon_{b,n}$ need to be all equal as well in order to minimize RHS. 
\begin{cor}
Under the same assumptions as in Theorem \ref{thm:posterior_proof_thm},
the RHS of \eqref{eq:main_bound} is minimized if the products of
weights and shard sizes,  $w_{b}^{-1}\cdot n_{b}^{-\alpha/(2\alpha+d)}\log^{1/2}n_{b}$,  are all equal. 
\end{cor}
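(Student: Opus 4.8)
The plan is to reduce the corollary to a balancing problem on the weight simplex. First I would isolate the weight-dependent quantity inside \eqref{eq:main_bound}: by the intermediate bound of Theorem \ref{thm:posterior_proof_thm}, the right-hand side is $B\max_{b}P_b$, where $P_b$ is the shard-$b$ posterior probability that $\|f_0-f_b\|_{n_b}>w_b^{-1}M_n\varepsilon_{b,n}$. For fixed $M_n$ and a fixed shard, $P_b$ is decreasing in its threshold, hence decreasing in $t_b:=w_b^{-1}\varepsilon_{b,n}$; consequently $\max_b P_b$ is controlled by the shard with the \emph{smallest} $t_b$. Minimizing the right-hand side over $\{w_b>0:\sum_b w_b=1\}$ thus reduces to maximizing the bottleneck $\min_b t_b$, with the shard sizes $n_b$ --- and therefore $\varepsilon_{b,n}=n_b^{-\alpha/(2\alpha+d)}\log^{1/2}n_b$ --- held fixed.

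The core step is then a one-line averaging inequality, in the same spirit as the balancing in Lemma \ref{lem:maximize_CTOD}. Since $(w_b)$ lies in the simplex, a minimum never exceeds a convex combination, so
\[
\min_{b} t_b \;\le\; \sum_{b=1}^{B} w_b\,t_b \;=\; \sum_{b=1}^{B}\varepsilon_{b,n},
\]
and the right-hand side does not depend on the weights. Equality holds precisely when all $t_b$ coincide, so $\max_{\bm{w}}\min_b t_b=\sum_b\varepsilon_{b,n}$ is attained exactly when $t_b=w_b^{-1}n_b^{-\alpha/(2\alpha+d)}\log^{1/2}n_b$ is constant in $b$, which is the assertion of the corollary; the minimizing weights are $w_b\propto\varepsilon_{b,n}$.

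I expect the main obstacle to be the reduction rather than the optimization. The right-hand side displayed in \eqref{eq:main_bound} is written as $B\varepsilon_n$ with $\varepsilon_n=\min_b\varepsilon_{b,n}$ and does not visibly contain the weights, so the argument must carefully justify that the quantity actually being minimized is the worst reweighted shard rate $\min_b w_b^{-1}\varepsilon_{b,n}$, and that each $P_b$ is monotone in its threshold uniformly across shards. Once that is granted, the optimization is immediate; and if a purely variational proof is preferred over the averaging inequality, a transfer argument closes it --- whenever $t_b<t_{b'}$ for some pair, shifting an infinitesimal amount of weight from $b'$ to $b$ preserves $\sum_b w_b=1$ and strictly increases $\min_b t_b$, so no configuration with unequal products $w_b^{-1}\varepsilon_{b,n}$ can be optimal.
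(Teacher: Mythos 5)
Your proposal is correct and rests on the same balancing principle as the paper, but the key step is carried out by a different (and cleaner) device. The paper's own justification is the paragraph preceding the corollary: it asserts that the RHS of \eqref{eq:main_bound} is a maximum bound governed by the thresholds $w_b^{-1}\varepsilon_{b,n}$, appeals to ``exchangeability of data shards'' to make the shards comparable, and then gestures at an argument ``like Lemma \ref{lem:maximize_CTOD}'' --- the integer balancing result for maximizing a product --- to conclude the thresholds should be equalized. You instead prove the optimization step exactly: the identity $w_b t_b=\varepsilon_{b,n}$ turns $\min_b t_b\le\sum_{b=1}^{B}w_b t_b=\sum_{b=1}^{B}\varepsilon_{b,n}$ into a weight-free upper bound on the bottleneck, with equality iff all $t_b$ coincide, yielding both the optimal value $\sum_b\varepsilon_{b,n}$ and the explicit optimizer $w_b\propto\varepsilon_{b,n}$, which matches the paper's follow-up remark that the weights should be roughly proportional to $\varepsilon_{b,n}$. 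Your route is arguably better suited to the statement: Lemma \ref{lem:maximize_CTOD} concerns integer allocations under a sum constraint, whereas here the weights are continuous on the simplex, so the averaging inequality (or your infinitesimal weight-transfer argument) gives an exact continuous max--min solution rather than an analogy. You also correctly isolate the one genuine soft spot shared by both arguments: passing from minimizing $B\max_b P_b$ to maximizing $\min_b t_b$ requires the shard posteriors to respond to their thresholds in a uniformly monotone, comparable way, since $P_b$ also depends on the shard data and $n_b$; the paper disposes of this with the single unexamined word ``exchangeability,'' and your explicit flagging of that reduction as the real obstacle is, if anything, more honest than the original.
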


Immediately, this asks us to put weights that are roughly proportional to
$\varepsilon_{b,n}$, and when the $\mathcal{T}_{u}$ satisfies $B$-expected
optimal design when $U$ is uniform, $n_{b}$ are ``roughly equal'', and therefore the
optimal design would require us to put roughly equal weights onto
each shard BART. In other words,  $\mathcal{T}_{u}$ having a $B$-expected
optimal design is a sufficient and necessary condition for equally-weighted
aggregation to have optimal posterior concentration rate when the underlying function is homogeneously smooth.

This also indicates that if it happens that we need to create shards
of different sizes (e.g., we have different amount of computational
resources for different shards), we can adjust the weights for each
shards $w_{b}(n)$ (as a function of sample size $n$) to account for different growth rates.

\section{Experiments and Applications\label{sec:Experiments-and-Applications}}

We first focus on examining the empirical performance based on
 simulations, where our dataset is drawn from synthetic functions
\emph{without} any amount of noise.
We focus on varying the parameter $\mathtt{shardepth}$ which defines the depth of $\mathcal{T}_u$, and therefore the (maximum) number of shards (see Appendix \ref{sec:API_SBT}).

For clarity in figures, we use letters A to F associated with the
boxplots for BART and SBT comparisons in the current section.
For A, B and C, we fit and predict using BART with 25\%, 50\%
and all of the training set as baseline models. For D, E and F, we fit
our SBT model using full training set but different combinations of
model parameters $\mathtt{shardepth}=0,1,2$. Precisely, we use: A:
BART with 25\% training set, B: BART with 50\% training set, C: BART
with full training set, D: SBT with $\mathtt{shardepth}=0$, E: SBT
with $\mathtt{shardepth}=1$, F: SBT with $\mathtt{shardepth}=2$.
We summarize the trade-off between model complexity and goodness-of-fit using out-of-sample RMSE and coverage, and also show that the SBT
model is less computationally expensive compared to the standard BART model. 

\begin{figure}[ht]
\begin{centering}
\includegraphics[width=0.9\textwidth]{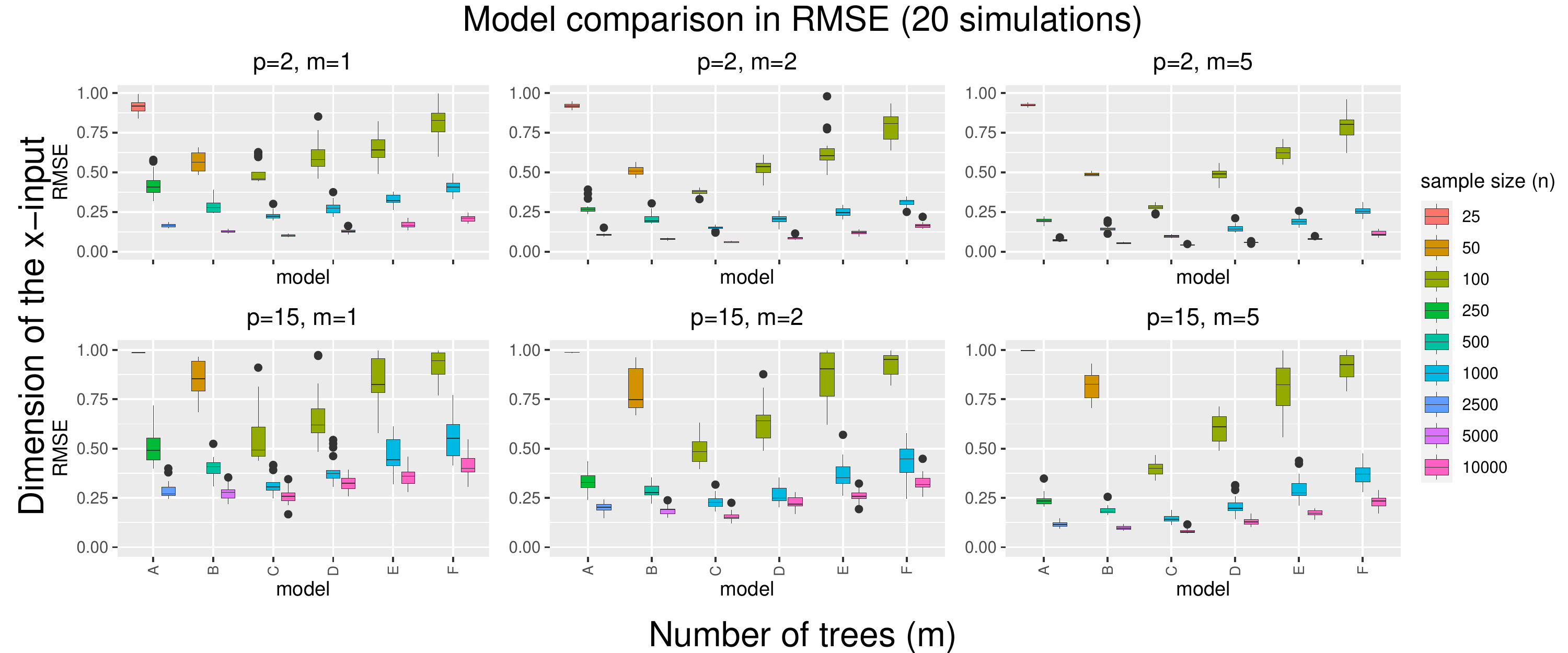}\\
 \includegraphics[width=0.9\textwidth]{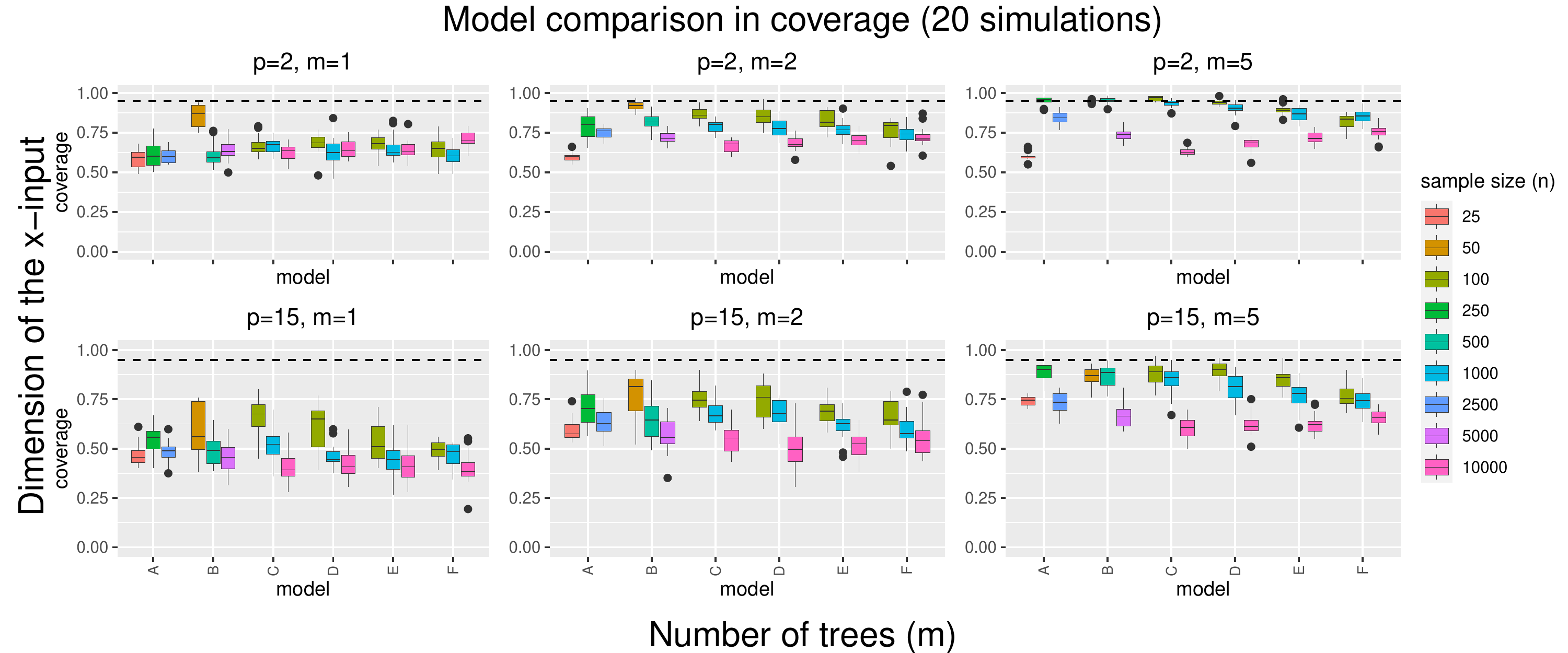}\\
 
\par\end{centering}
\caption{Comparison (in RMSE and 95\% (indicated by dashed line) confidence
interval coverage) between standard BART (with 100\%, 50\% and 25\%
of all samples) and our sharded BART model with different $\mathtt{shardepth}$(=0,1,2).
We design two experiments using branin function defined on $p=2,15$
dimensional domains (displayed in panel rows); with each kind of tree
model with $m=1,2,5,10,20$ trees (displayed in panel columns). In
each panel, we study the sample of sizes $n=100,1000,10000$.}
\label{fig:branin_comprison} 
\end{figure}

\begin{figure}[ht]
\begin{centering}
\includegraphics[width=0.9\textwidth]{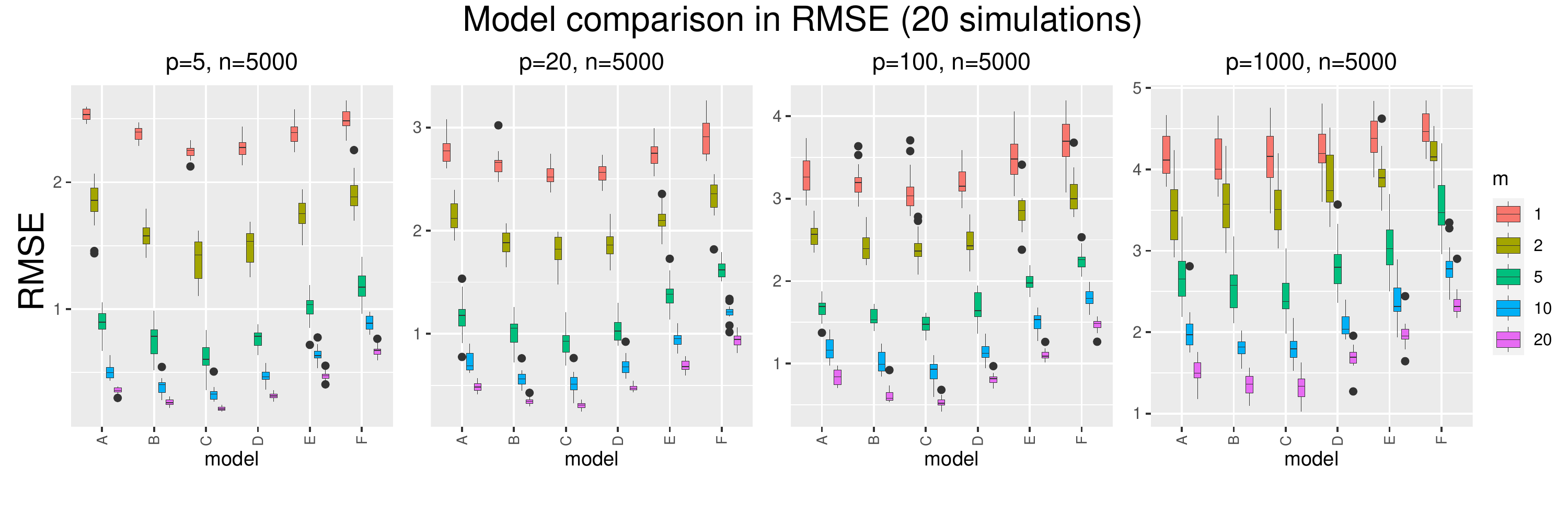}\\
 \includegraphics[width=0.9\textwidth]{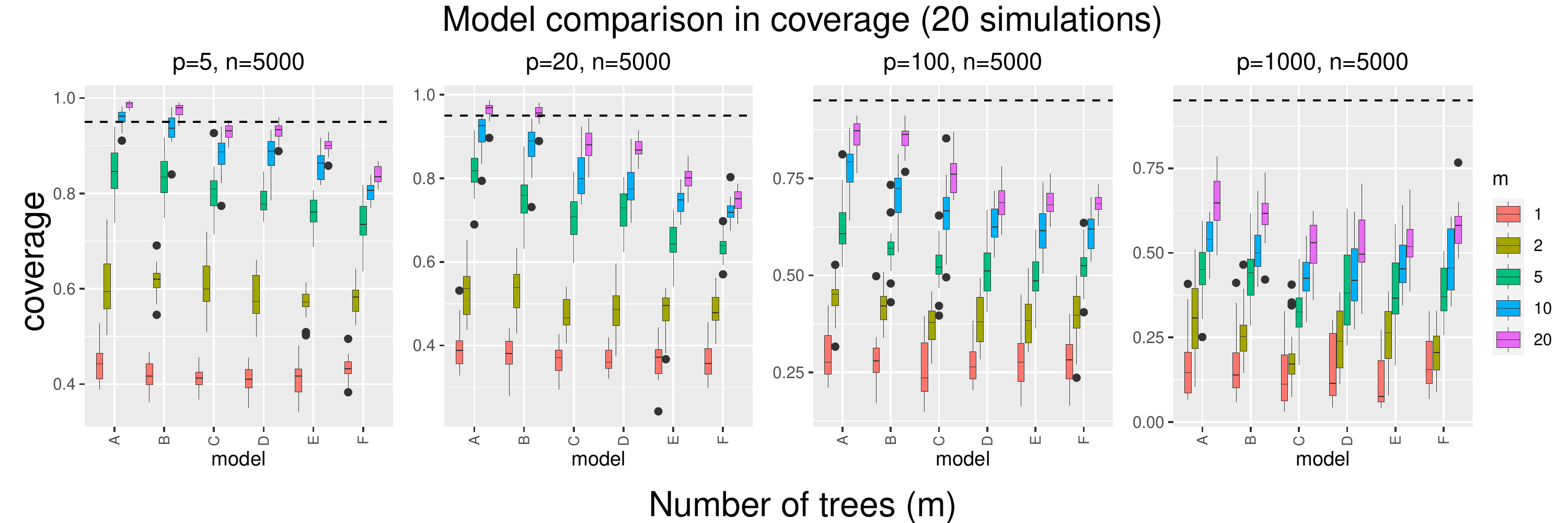}\\
 
\par\end{centering}
\caption{Comparison (in RMSE and 95\% (indicated by dashed line) confidence
interval coverage) between standard BART (with 100\%, 50\% and 25\%
of all samples) and our sharded BART model with different $\mathtt{shardepth}$(=0,1,2).
We design four experiments using $n=5000$ samples from friedman function
defined on $p=5,20,100,1000$ dimensional domains (displayed in panel
columns). In each panel, we study the number of trees $m=1,2,5,10,20$
in each model. }
\label{fig:fried_comprison} 
\end{figure}

\subsection{Small-sample on synthetic functions}

In Figure \ref{fig:branin_comprison}, we present model comparison
based on 20 different fits on the same dataset with different sample
sizes $n$ and dimensionality $p$, drawn from the Branin function defined
on $p=2$ and $p=15$ dimensional domains. For the low dimensional situation ($p=2$),
we compare the SBT with $\mathtt{shardepth}=1$ with BART using $0.25n$ samples, and analogously
for $\mathtt{shardepth}\in\lbrace0,2\rbrace$. This shows that the RMSE of SBT and
standard BART (with the same expected sample size) are  comparable regardless
of the number of trees $m$. Meanwhile, the 95\% confidence interval
coverages decreases with increasing sample size $n$, and increasing
number of trees $m$. However, when $n=10000$, the SBT model has
improved coverage compared to a standard BART of the corresponding
sample sizes. For the high dimensional situation ($p=15$), the trends
observed in the low dimensional situation remains to be true, but
the RMSE is inflated and the coverage is deflated, as expected. 

In Figure \ref{fig:fried_comprison}, we present model comparison
based on 20 different fits on the same dataset with the same sample
size $n=5000$ but different dimensionality , drawn from the Friedman
function defined on $p=5,20,100,1000$ dimensional domains. Here we
fix the sample size, and observe that  SBT generally gives a worse
RMSE compared to the standard BART model, but the difference in RMSE
decreases as dimensionality increases. Increasing the number of trees
in each BART or SBT model will improve the RMSE performance but the
effect of $\mathtt{shardepth}$ is less obvious as $p$ changes.
In terms of 95\% credible interval coverages, the SBT is closer
to standard BART with full sample size $n$, avoiding the inflated
coverage caused by using a smaller sample size $0.5n$ or $0.25n$.

This experiment shows that with small samples, our algorithm could
also work well: (i) ideally $\mathcal{T}_{u}$ shows no sharding
behavior; and  (ii) even with some coarse sharding, the aggregated
posterior is not bad for small samples. %

\subsection{Real-world data: redshift simulation}

\begin{figure}[ht!]
\begin{center}
\includegraphics[height=0.5\textheight]{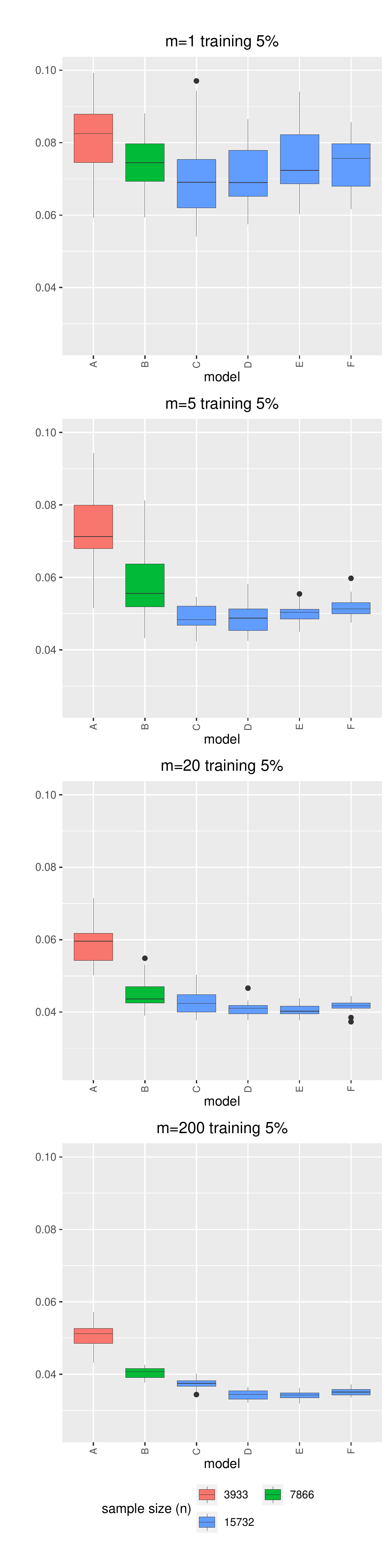}
\includegraphics[height=0.5\textheight]{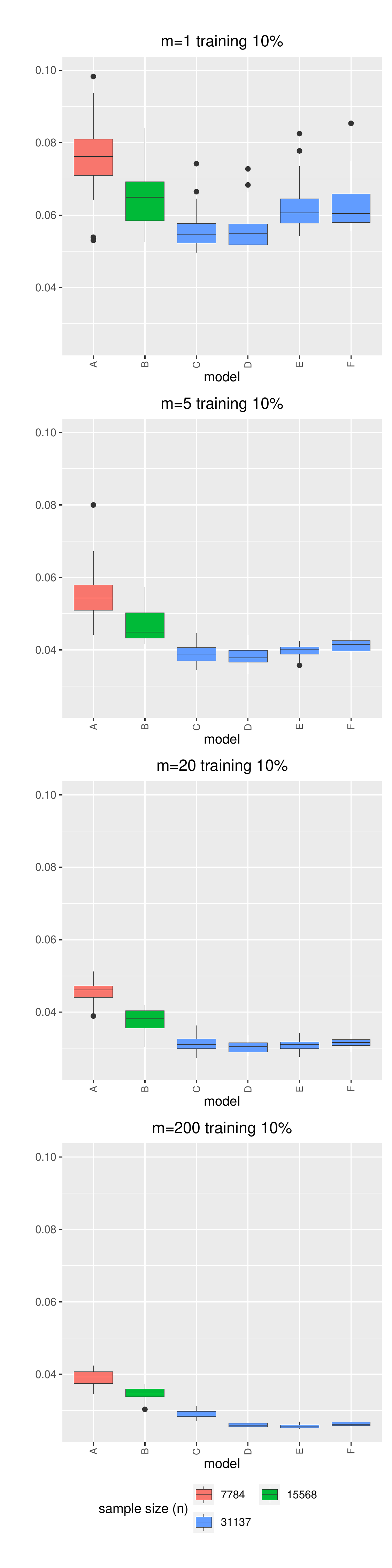}
\includegraphics[height=0.5\textheight]{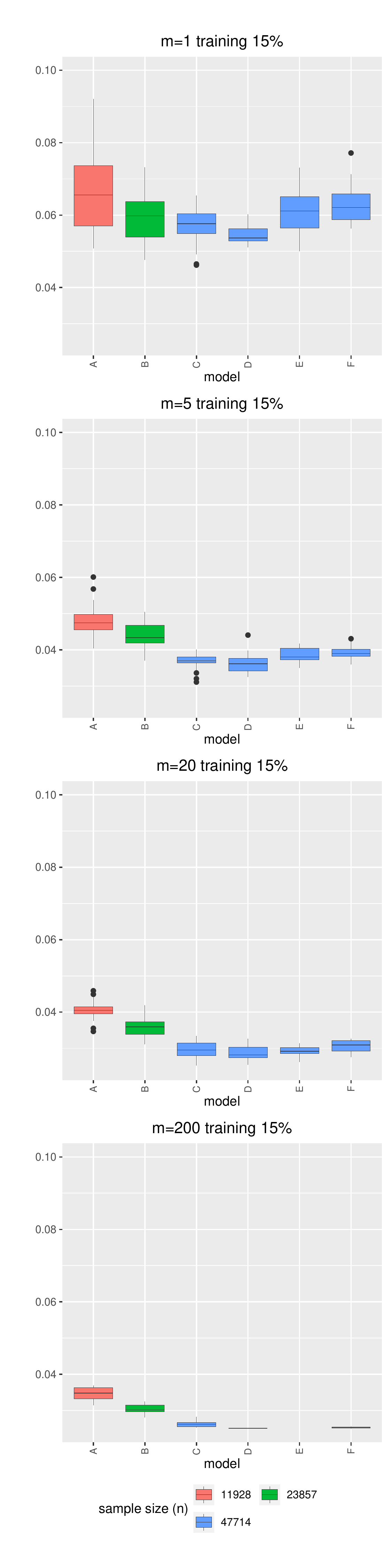}
\includegraphics[height=0.5\textheight]{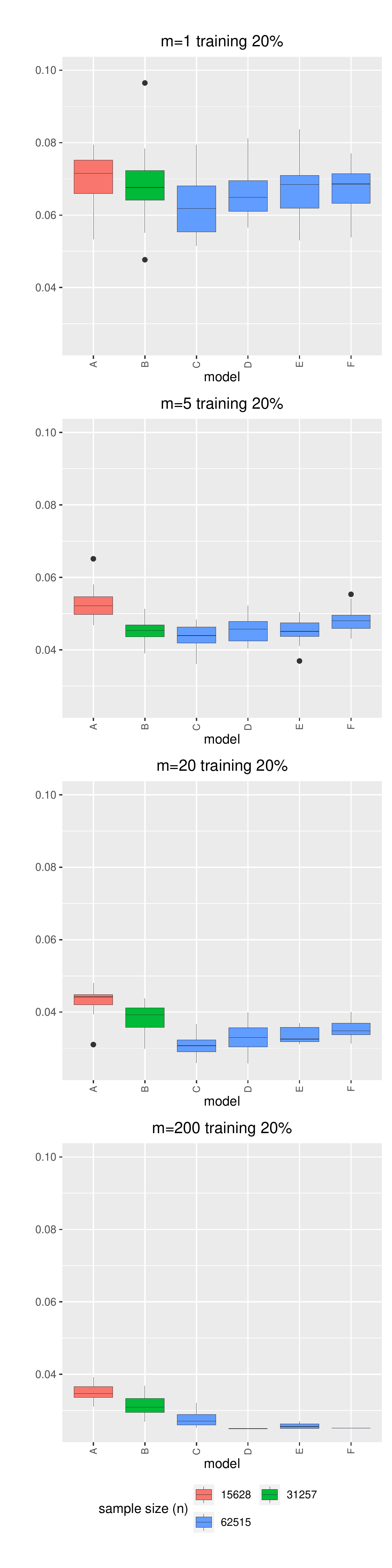}
\includegraphics[height=0.5\textheight]{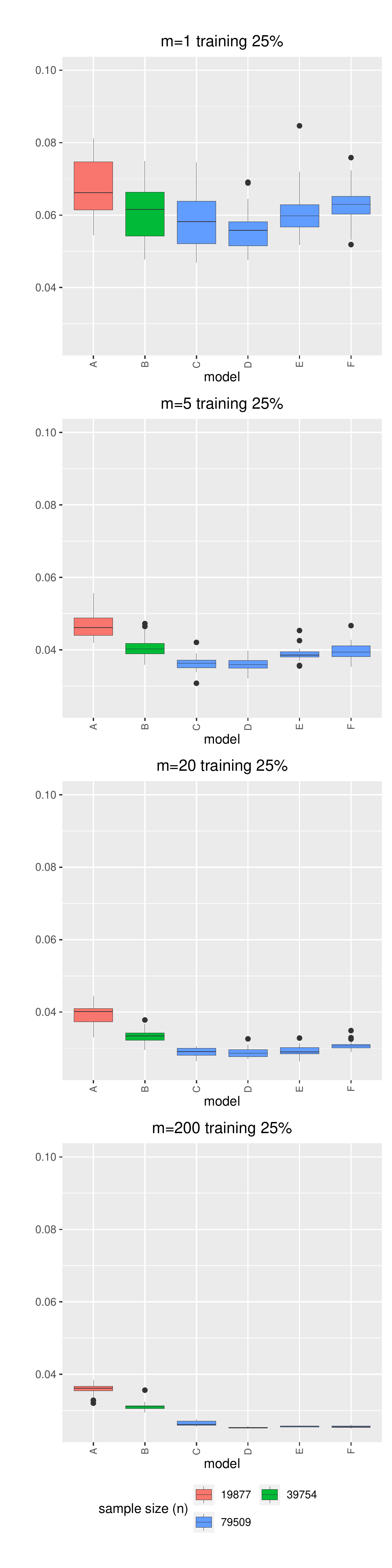}
\end{center}

\caption{\label{fig:redshift_RMSE}Model comparison in terms of RMSE on testing
sets (as complement set of training set from the original redshift
dataset). From the left to the right columns, the training set is taken
as 5\%, 10\%, 15\%, 20\%, 25\% randomly selected samples from the original
dataset; from the top to the bottom rows, the number of trees varies
from $m=1,5,20,200$. In each of the panels, we use boxplots to display
the performance metric distribution from 20 simulations; and color
to denote the actual training set sample sizes corresponding to each
model. }
\end{figure}

\begin{figure}[ht!] 
\begin{center}
\includegraphics[height=0.5\textheight]{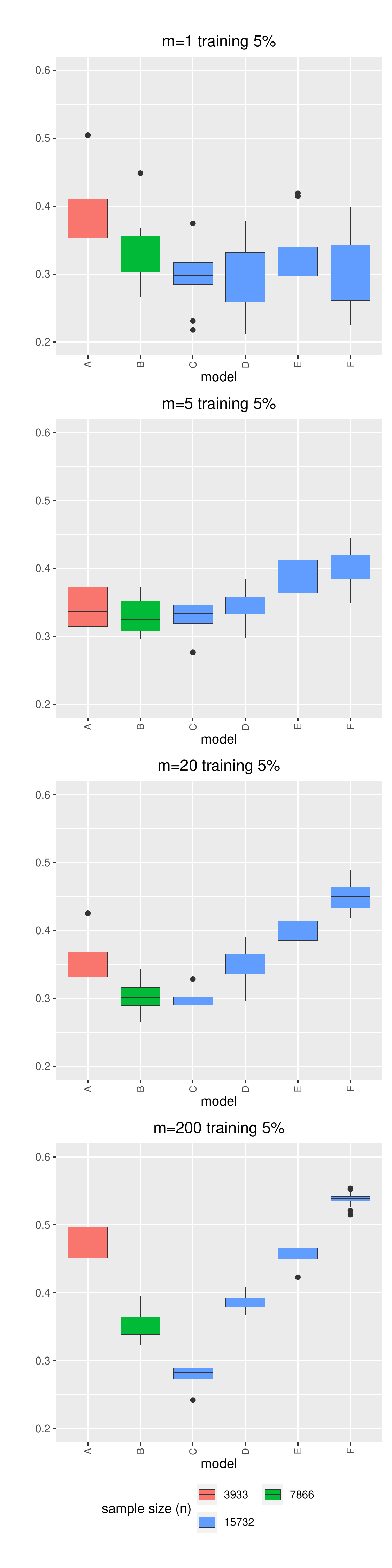} 
\includegraphics[height=0.5\textheight]{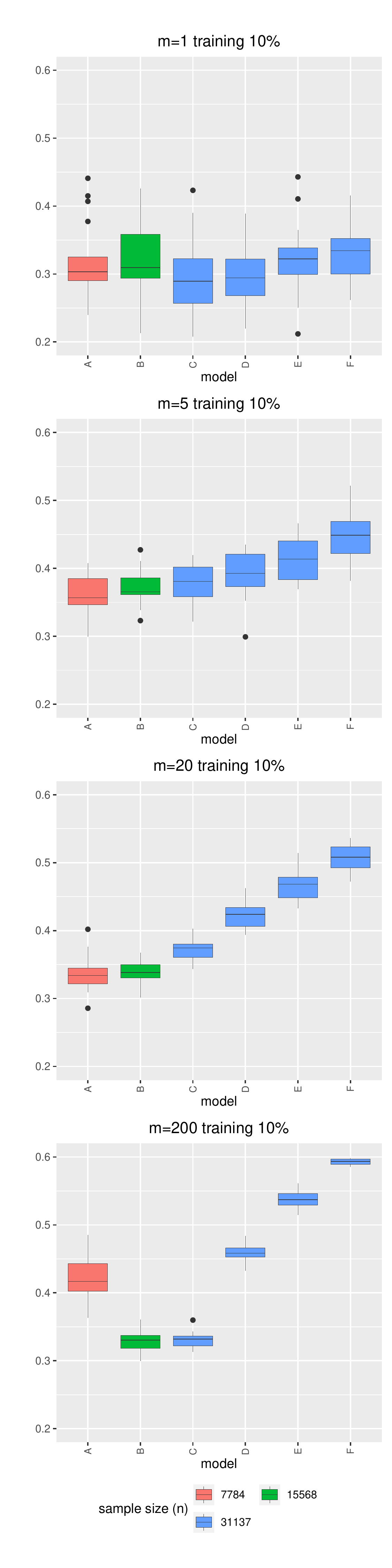}
\includegraphics[height=0.5\textheight]{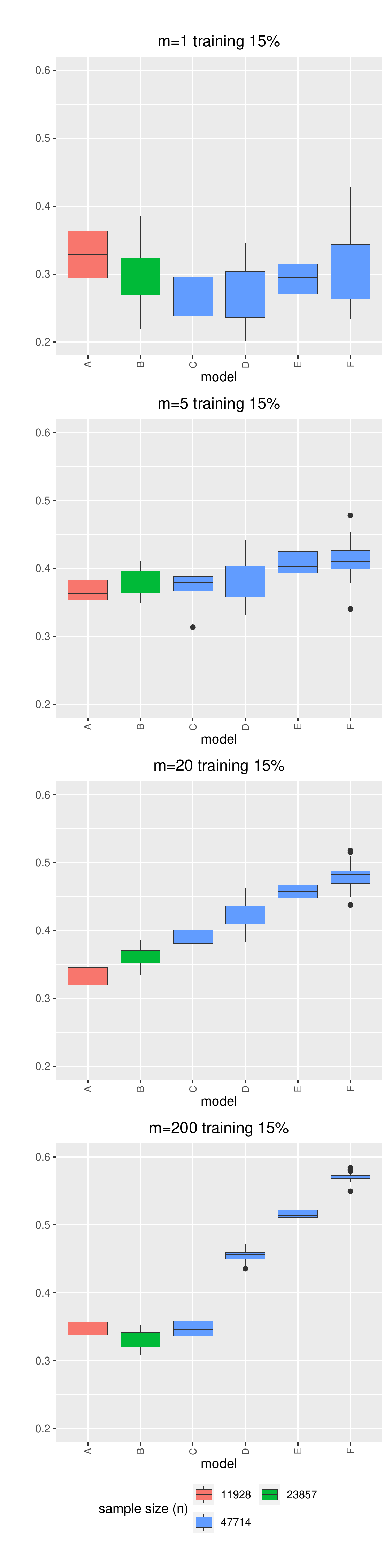}
\includegraphics[height=0.5\textheight]{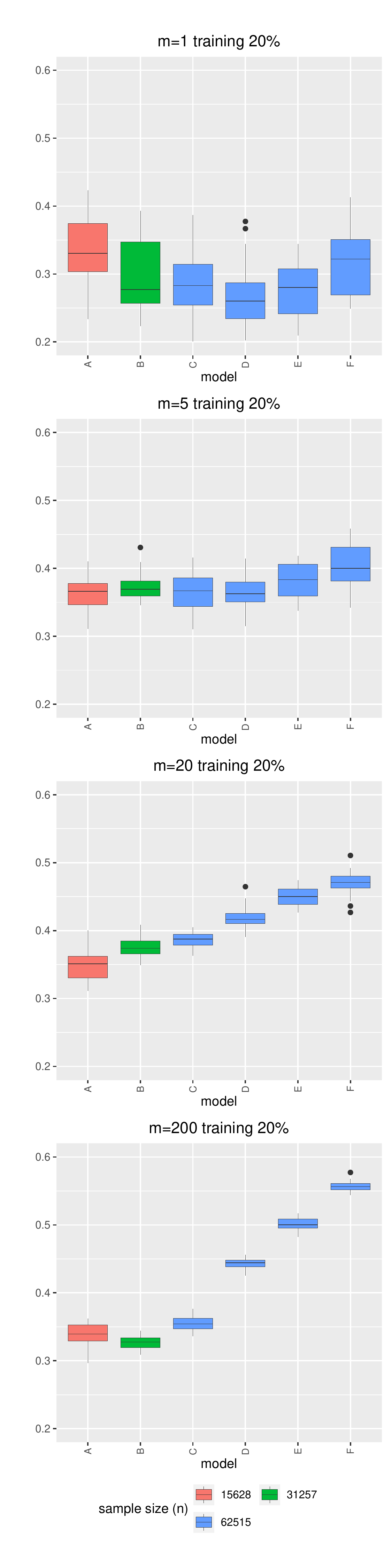}
\includegraphics[height=0.5\textheight]{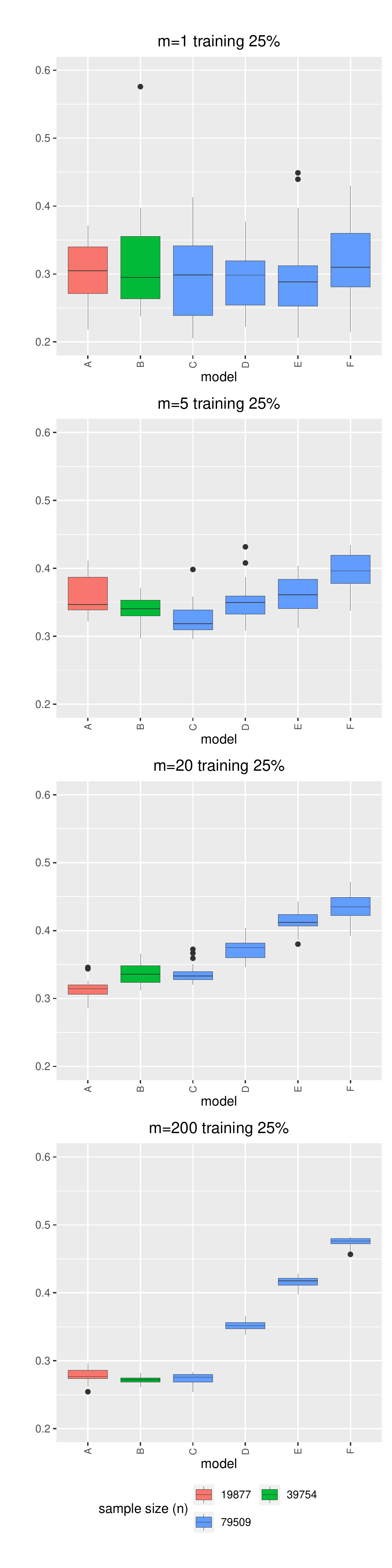}
\end{center}
\caption{\label{fig:redshift_coverage}Model comparison in terms of coverage
of 95\% confidence intervals. From the left to the right columns, the
training set is taken as 5\%, 10\%, 15\%, 20\%, 25\% randomly selected samples
from the original redshift dataset; from the top to the bottom rows,
the number of trees varies from $m=1,5,20,200$. In each of panels,
we use boxplots to display the performance metric distribution from
20 simulations; and color to denote the actual training set sample
sizes corresponding to each model.}
\end{figure}

In this real-world application, we studied a redshift simulator dataset
from cosmology \citep{ramachandra2022machine}. The dataset contains 6,001 functional observations
from a redshift spectrum simulator. For a combination of 7 different
cosmological parameters (e.g., metallicity, dust young, dust old and
ionization etc.), we can observe and collect the spectrum function
as vectors of different lengths (ranging from 48 to 100 points), under
the configuration defined by these set of cosmological parameters.
This presents essential challenges to the modeling task due to its multi-output
nature and observation size.
Typically, one emulates the multivariate vector output depending on
the cosmological parameters using statistical models, in our case BART
and SBT. 

We provide the RMSE for different models in Figure \ref{fig:redshift_RMSE},
where we can see that BART and SBT model with full training sets always
has the overall smaller RMSE. Regardless of the training set size
and model parameters, as soon as the number of trees $m$ is greater
than 2, the SBT has similar RMSE and the difference between BART and
SBT is negligible when $m\geq20$, and SBT is slightly better for larger $m$'s. This ensures that SBT has competitive
estimation and prediction performance compared to BART. %
And the RMSE also slightly decreases as the $m$ increases in SBT and BART.

The more interesting observations come from Figure \ref{fig:redshift_coverage}.
When $m\geq2$, the SBT can have the best performance when the $\mathtt{shardepth}\geq1$,
indicating that a deeper sharding tree $\mathcal{T}_{u}$ leads to better
coverage in confidence intervals. When $m\geq 5$, SBT always
shows better coverage than BART regardless of the choice of other SBT model parameters,
but a deeper sharding tree $\mathcal{T}_{u}$ still presents the best
coverage. With all the rest model parameters fixed, the coverage also
increases significantly as $m$ increases in SBT; but not so obviously
in BART.

From this analysis, we confirm that SBT can fit as well as BART but
provides much better fitting efficiency and coverage with deeper $\mathcal{T}_{u}$
on real-world large datasets.

\subsection{Complexity Trade-offs}

The model complexity \citep{kapelner2013bartmachine,bleich2014bayesian}
and computational complexity \citep{pratola2014parallel} are two
important topics in BART and its variants. In the original BART
algorithm, the complexity can be derived as follows. Our result is
parallel to the computational complexity for spanning trees as shown
in \citet{luo2021bast}.

From the previous experiments, we show how the depth of the tree $\mathcal{T}_{u}$
in SBT trades off with goodness-of-fit of each $\mathcal{T}_{x\mid\eta}$.
Fixing the structures of (each of) the trees $\mathcal{T}_{x\mid\eta}$,
when we have finer partitions induced by $\mathcal{T}_{u}$, the shard
model fit would become worse because each shard tends to have less
data. However, shards become smaller and hence speeds up the model
fitting, until it hits the communication cost bottleneck. 
\begin{lem}
\label{lem:(Computational-complexity-for}(Computational complexity
for BART) Assuming that there are at most $k_{i}$ nodes for the $i$-th
tree for $i=1,\cdots,m$ trees, the worst-case computational complexity
for each MCMC step in BART is $\mathcal{O}(m^{2}\cdot n^2+\sum_{i=1}^{m}k_{i})$. 
\end{lem}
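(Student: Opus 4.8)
The plan is to decompose a single MCMC sweep of the BART sampler for \eqref{eq:BART model} into its constituent operations and bound each in the worst case. One full step updates, in turn, each of the $m$ trees $\mathcal{T}_{x,i}$ together with its terminal-node parameters $M_{\mathcal{T}_{x,i}}$ (via a Metropolis--Hastings birth/death/change/swap proposal), and finally the error variance $\sigma^2$. Updating tree $i$ requires forming the partial residuals $R_i = \bm{y} - \sum_{j\neq i} g(\bm{x}\mid \mathcal{T}_{x,j}, M_{\mathcal{T}_{x,j}})$, computing the marginal-likelihood acceptance ratio, and resampling the leaf values. I would show that the residual computation supplies the $m^2 n^2$ term while the structural bookkeeping supplies the additive $\sum_{i=1}^m k_i$ term.

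First I would bound the cost of evaluating one tree on the data. Routing a single input to its terminal node tests one splitting rule per node along a root-to-leaf path, costing $\mathcal{O}(\mathrm{depth})$. Since each internal node of a binary regression tree has two children, a tree with $k_i$ nodes has depth $\mathcal{O}(k_i)$, and because every non-degenerate terminal node holds at least one observation we also have $k_i = \mathcal{O}(n)$; hence the worst-case depth is $\mathcal{O}(n)$ and evaluating all $n$ inputs on one tree costs $\mathcal{O}(n^2)$ (attained by a degenerate tree with $k_i \sim n$).

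Next I would assemble the sweep cost. Forming $R_i$ for a fixed $i$ naively requires evaluating the other $m-1$ trees on all $n$ inputs, i.e. $\mathcal{O}(m n^2)$; repeating this for each of the $m$ trees in the sweep gives $\mathcal{O}(m^2 n^2)$. Given the residuals, the terminal-node sufficient statistics (within-node residual sums entering the conjugate marginal likelihood) cost only $\mathcal{O}(n)$ per tree once points are assigned, which is dominated. The remaining per-tree operations—proposing a local structural modification, copying or traversing the tree, and resampling its leaf parameters—touch each node a constant number of times and hence cost $\mathcal{O}(k_i)$ for tree $i$, contributing $\sum_{i=1}^m k_i$ in total. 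Summing the dominant residual term with the structural term yields $\mathcal{O}(m^2 n^2 + \sum_{i=1}^m k_i)$, with the $\sigma^2$ update ($\mathcal{O}(n)$) absorbed.

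The main obstacle I anticipate is pinning down which cost is charged to the worst case: an incremental implementation that maintains the running sum-of-trees fit avoids recomputing the other $m-1$ trees, lowering the leading term to $\mathcal{O}(m n^2)$. The stated bound therefore reflects naive recomputation of the full fit at each tree update, so the delicate part is to argue that this recomputation governs the stated worst case and that the depth bound $\mathcal{O}(n)$ is simultaneously attainable with $k_i = \mathcal{O}(n)$.
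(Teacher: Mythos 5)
Your bound and overall skeleton match the paper's: both decompose one sweep into $m$ per-tree updates, each costing $\mathcal{O}(m\cdot n^2 + k_i)$, and sum to $\mathcal{O}(m^2 n^2 + \sum_{i=1}^m k_i)$, with the structural moves (grow/prune/change/swap plus leaf resampling) charged $\mathcal{O}(k_i)$ in both accounts. Where you genuinely diverge is in the mechanism producing the dominant $m\,n^2$ factor per tree update. The paper charges it to evaluating the acceptance-ratio likelihood, which factors as $\prod_{j=1}^{m}\bigl(\prod_{\eta\in\mathcal{T}_j \text{ leaf}}\ell(\mu_\eta\mid\bm{X},\bm{y})\bigr)p(\mathcal{T}_j)$: each tree contributes at most $n$ leaf factors (leaves cannot exceed the sample count), each costing $\mathcal{O}(n)$, giving $\mathcal{O}(m\cdot N_{j,\text{leaf}}\cdot n)\lesssim\mathcal{O}(m n^2)$ without any reference to tree depth. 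You instead charge it to naive recomputation of the partial residuals $R_i$, routing all $n$ points through the other $m-1$ trees, with per-point routing cost bounded by the worst-case depth $\Theta(n)$ of a degenerate path-like tree satisfying $k_i=\mathcal{O}(n)$. Both accountings are valid for a naive implementation and yield the identical bound; yours buys an explicit witness for where the worst case is attained (degenerate deep trees) and an honest caveat the paper omits, namely that incremental maintenance of the running sum-of-trees fit would drop the leading term to $\mathcal{O}(m n^2)$ --- which is consistent with the lemma, since a worst-case upper bound need only cover the naive scheme, but worth noting the paper's bound is likewise pinned to recomputing the full $m$-tree product at every tree update. The paper's leaf-counting route, by contrast, needs no depth argument at all, leaning instead on the factorized Gaussian likelihood form and the leaf bound $N_{j,\text{leaf}}\leq n$ (cf.\ its Lemma \ref{lem:n_k_2n}), which makes it slightly more robust: it does not require the depth-$\Theta(n)$ and $k_i=\Theta(n)$ extremes to be realized simultaneously, a joint attainability point your write-up correctly flags as the delicate step of your own route.
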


\begin{proof}
See Appendix \ref{sec:Proof-of-Lemma}. 
\end{proof}
This worst-case complexity does grow with the sample size $n$, %
but theoretically there is not an explicit relation between $\max_{i}n_{i}$,
$B$ and $n$ in BART. Based on this lemma, we have the following
result directly from the model SBT construction. 
\begin{lem}
\label{lem:n_k_2n}Suppose that there are $k$ nodes for a complete binary tree $\mathcal{T}$, and
there are $n$ samples fitted to the tree structure. The number of
all nodes $k$ and the sample size $n$ satisfies $1\leq k\leq2n$. 
\end{lem}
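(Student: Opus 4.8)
The plan is to establish the two inequalities $1 \leq k$ and $k \leq 2n$ separately, treating the lower bound as essentially trivial and focusing effort on the upper bound. For the lower bound, any nonempty tree has at least its root node, so $k \geq 1$ immediately; the assumption that $n$ samples are fitted to the tree implicitly forces the tree to be nonempty, so this direction requires only a remark.

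For the upper bound $k \leq 2n$, my approach is to relate the node count of a complete binary tree to its number of leaves, and then to bound the number of leaves by the number of samples. First I would recall the standard combinatorial fact that in a complete (i.e., full) binary tree, every internal node has exactly two children, so if $L$ denotes the number of leaves and $I$ the number of internal nodes, then $I = L - 1$, giving a total node count $k = I + L = 2L - 1$. The key modeling constraint is that each leaf of the fitted tree must contain at least one of the $n$ samples — an empty leaf would contribute nothing and can be ignored (or, equivalently, the partitioning induced by the tree in the sense of Example \ref{exa:tree path} is only meaningful when each terminal region receives data). This yields $L \leq n$, and combining gives $k = 2L - 1 \leq 2n - 1 < 2n$, which is even slightly stronger than the stated bound.

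The main obstacle, such as it is, lies in justifying the assumption that each leaf contains at least one sample, i.e., that $L \leq n$. Strictly speaking a binary tree could have leaves assigned zero samples, in which case $k$ could in principle exceed $2n$. I would address this by appealing to the convention already adopted in the paper, where the optimality and partition-counting arguments (e.g., the discussion following Lemma \ref{lem:maximize_CTOD} and the product-over-nonzero-probabilities convention after the $B$-expected optimal tree definition) implicitly treat terminal regions as data-bearing. In the present lemma the phrase ``there are $n$ samples fitted to the tree structure'' should be read as asserting that the tree's leaves partition the $n$ samples into nonempty cells, so that $L \leq n$ holds by a pigeonhole-type counting. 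Once that reading is fixed, the bound $k = 2L - 1 \leq 2n$ follows immediately and the remaining steps are purely arithmetic.

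I would conclude by noting that the bound is tight in the regime where the tree is maximally refined, i.e., when every leaf isolates exactly one sample ($L = n$), giving $k = 2n - 1$; this confirms both that the factor of $2$ is the correct leading constant and that the relationship to Lemma \ref{lem:(Computational-complexity-for} is consistent, since a tree fit to $n$ samples cannot have more than $\mathcal{O}(n)$ nodes regardless of its depth.
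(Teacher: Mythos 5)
Your proof is correct, but it takes a genuinely different route from the paper's. The paper bounds the node count layer-by-layer: it asserts that a complete binary tree fitted with nonempty leaves has at most $\log n$ layers, and sums the geometric series $n\cdot\sum_{\eta\ge 0}2^{-\eta}\leq 2n$ over levels, so its argument leans on the \emph{completeness} (level-filled, hence balanced-depth) structure of the tree. You instead invoke the exact combinatorial identity for full binary trees -- every internal node has two children, so $I=L-1$ and $k=2L-1$ -- and then bound $L\leq n$ by the nonempty-leaf pigeonhole. Both proofs hinge on the same modeling premise, namely that each terminal node contains at least one sample (the paper states this as its opening sentence; you correctly identify it as the only step needing justification and ground it in the paper's conventions). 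What your approach buys: it is exact rather than asymptotic, giving the tight bound $k=2L-1\leq 2n-1$ with the extremal case $L=n$ identified, and it does not require any balance or depth assumption, so it applies verbatim to arbitrary (possibly very unbalanced) binary regression trees, where the paper's ``at most $\log n$ layers'' premise would fail. What the paper's approach buys is a per-level counting picture that connects more directly to the depth-based complexity discussion around Proposition \ref{prop:SBT-complexity}, but as a proof of the stated inequality yours is the cleaner and more robust of the two.
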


\begin{proof}
The terminal nodes must be non-empty and contain at least one sample.
In a complete binary tree, there can be at most $\log n$ layers, hence
the total number of internal nodes is bounded by $n\cdot\sum_{\eta=0}^{\log n}2^{-\eta}=n\cdot(2-2^{-\log n})\leq2n$. 
\end{proof}
\begin{cor}
\label{lem:(Computational-complexity-for-2}(Computational complexity
for BART, continued) Assuming that there are at most $n_{i}=n$ samples
for the $i$-th tree for $i=1,\cdots,m$ trees, the worst-case computational
complexity for each MCMC step in BART is $\mathcal{O}(m^{2}n^2)$. 
\end{cor}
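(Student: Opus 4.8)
The plan is to combine the two preceding results directly. The starting point is Lemma~\ref{lem:(Computational-complexity-for}, which already establishes that a single MCMC step in BART costs $\mathcal{O}(m^{2}n^{2}+\sum_{i=1}^{m}k_{i})$, where $k_{i}$ is the number of nodes in the $i$-th tree. The only remaining task is to control the residual term $\sum_{i=1}^{m}k_{i}$ under the additional hypothesis that each tree is fit to at most $n_{i}=n$ samples.

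First I would invoke Lemma~\ref{lem:n_k_2n} tree by tree. Since the $i$-th tree carries at most $n_{i}=n$ data points and its terminal nodes must be non-empty, that lemma yields $k_{i}\le 2n_{i}=2n$ for each $i=1,\dots,m$. Summing over the $m$ trees then gives $\sum_{i=1}^{m}k_{i}\le 2mn$.

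Substituting this bound into the expression furnished by Lemma~\ref{lem:(Computational-complexity-for} produces a per-step cost of $\mathcal{O}(m^{2}n^{2}+2mn)$. Since $m,n\ge 1$, we have $2mn\le 2m^{2}n^{2}$, so the lower-order term is absorbed into the leading term and the complexity collapses to $\mathcal{O}(m^{2}n^{2})$, as claimed.

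There is essentially no obstacle here: the statement is a mechanical corollary of the two lemmas. The only point deserving a moment's care is that Lemma~\ref{lem:n_k_2n} is phrased for a complete binary tree, so I would remark that $k\le 2n$ persists as a valid upper bound for the binary trees arising in BART (the bound being saturated in the deepest admissible configuration), which legitimizes the termwise substitution above.
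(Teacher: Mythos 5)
Your proposal is correct and takes essentially the same route the paper intends: the corollary is stated without a separate proof precisely because it follows by substituting the bound $k_{i}\leq2n$ from Lemma \ref{lem:n_k_2n} into the $\mathcal{O}(m^{2}n^{2}+\sum_{i=1}^{m}k_{i})$ complexity of Lemma \ref{lem:(Computational-complexity-for}, with the resulting $\mathcal{O}(mn)$ term absorbed into the leading $\mathcal{O}(m^{2}n^{2})$. Your additional remark that the $k\leq2n$ bound persists for the binary trees arising in BART is a sensible point of care but does not alter the argument.
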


These two lemmas immediately gives the following complexity result
for SBT model. 
\begin{prop}
\label{prop:SBT-complexity} (Complexity for SBT) Let the $\mathcal{T}_{u}$
in SBT be of depth $k$ (at most $2^{k}$ leaf nodes), and assume
that there are at most $n_{j}$ samples for each of the $m_{j}$ single
trees in the $j$-th BART, the worst-case computational complexity
for each MCMC step in SBT is $\mathcal{O}\left(n^2+\sum_{j=1}^{2^{k}}m_{j}^{2}n^2_{j}\right)$.%
\ Specifically, when we choose $\bm{U}$ uniformly thus creating equal sized shards (in expectation) of size $n_j=n/2^k$ and take the same number $m$ of trees in each BART, the above complexity for SBT becomes $\mathcal{O}\left(n^2+m^2\cdot n^2/2^k\right)\rightarrow \mathcal{O}(n^2)$ as $k\rightarrow\infty $.
\end{prop}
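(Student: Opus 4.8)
The plan is to decompose each SBT MCMC sweep (Algorithm \ref{alg:RST-Algorithm-Fitting}) into two pieces whose costs add: (i) the update of the single sharding tree $\mathcal{T}_u$, which acts on all $n$ augmented observations, and (ii) the updates of the $B=2^{k}$ sub-BART models $\mathcal{T}_{x\mid\eta_u^j}$, one per terminal node of $\mathcal{T}_u$. Because Algorithm \ref{alg:RST-Algorithm-Fitting} performs these sequentially (the $\mathcal{T}_u$ step \ref{enu:Fit-accept-Ratio} followed by the inner loop \ref{enu:Inner-loop} over terminal nodes), the total per-step cost is the sum of the two parts, and I would bound each in turn by reducing it to the already-established single-BART complexity.

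First I would bound the sharding-tree update. The tree $\mathcal{T}_u$ splits only on the auxiliary variable and routes all $n$ observations to its leaves; treating it as a single-tree ($m=1$) Bayesian regression tree on $n$ points, Corollary \ref{lem:(Computational-complexity-for-2} (with $m=1$, $n_i=n$) gives a worst-case cost of $\mathcal{O}(1^2\cdot n^2)=\mathcal{O}(n^2)$ for proposing, partitioning the data, and computing the acceptance ratio, where by Lemma \ref{lem:n_k_2n} the node-count overhead of at most $2n$ is absorbed into $\mathcal{O}(n^2)$. Next I would bound the shard updates: the $j$-th sub-model is a BART with $m_j$ trees fit on at most $n_j$ samples, so Corollary \ref{lem:(Computational-complexity-for-2} gives each shard update a worst-case cost of $\mathcal{O}(m_j^2 n_j^2)$, the node-count term $\sum_i k_i\le 2m_j n_j$ from Lemma \ref{lem:n_k_2n} again being dominated. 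Summing over the $B=2^{k}$ shards and adding the $\mathcal{O}(n^2)$ sharding cost yields the general bound $\mathcal{O}\!\left(n^2+\sum_{j=1}^{2^{k}}m_j^2 n_j^2\right)$.

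For the uniform specialization I would substitute $n_j=n/2^{k}$ and $m_j=m$ into the sum:
\[
\sum_{j=1}^{2^{k}} m^2\left(\frac{n}{2^{k}}\right)^{2} = 2^{k}\cdot\frac{m^2 n^2}{2^{2k}} = \frac{m^2 n^2}{2^{k}},
\]
so that the total becomes $\mathcal{O}\!\left(n^2+m^2 n^2/2^{k}\right)$; letting $k\to\infty$ the second term vanishes, leaving $\mathcal{O}(n^2)$.

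The main obstacle I anticipate is the coupling in the acceptance ratio: the joint ratio $\tau^{*}$ quoted after Algorithm \ref{alg:RST-Algorithm-Fitting} involves the product $\prod_{j=1}^{B}\ell(\mathcal{T}_{x\mid\eta_u^j}\mid \bm{X}_j,\bm{y}_j)$ of all shard likelihoods, so I must verify that evaluating the $\mathcal{T}_u$ acceptance does not secretly re-incur the full shard-fitting cost and thereby spoil the clean two-part split. I would argue that, conditional on the fixed sub-tree structures, each marginal shard likelihood (integrating out the conjugate Gaussian leaf means) is assembled from per-leaf sufficient statistics at cost $\mathcal{O}(m_j n_j)$, so the extra work introduced in the sharding step is $\mathcal{O}\!\left(n^2+\sum_j m_j n_j\right)$, which is already dominated by the $\mathcal{O}(n^2)$ and $\sum_j m_j^2 n_j^2$ terms counted above. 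This confirms that no cross term beyond $\mathcal{O}(n^2)$ arises and that the stated worst-case bound is correct.
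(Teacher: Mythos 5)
Your proof is correct and follows essentially the same route as the paper, which treats Proposition \ref{prop:SBT-complexity} as an immediate consequence of Lemma \ref{lem:(Computational-complexity-for}, Lemma \ref{lem:n_k_2n}, and Corollary \ref{lem:(Computational-complexity-for-2}: the $\mathcal{T}_{u}$ update is a single tree on all $n$ samples costing $\mathcal{O}(n^2)$, each shard BART costs $\mathcal{O}(m_j^2 n_j^2)$, and the costs add, with the uniform specialization obtained by direct substitution exactly as you compute. Your closing paragraph verifying that the $\tau^{*}$ acceptance ratio for $\mathcal{T}_u$ does not re-incur the full shard-fitting cost is a useful detail the paper leaves implicit, but it does not change the argument's structure.
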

If we take equal sized shards and the same number of trees, it follows directly from the proposition that the depth $k$ of $\mathcal{T}_u$ controls the complexity of SBT, and when $k=0$ it reduces to BART complexity as expected. %
We can see that SBT is strictly better than BART in terms of complexity, but shallower $\mathcal{T}_u$ has less complexity advantage. 
However, the big advantage of SBT is that updates at and below each leaf node of $\mathcal{T}_u$ can be done in parallel and that implementation reduces the complexity of SBT to $\mathcal{O}\left(n^2+\max_{j}m_{j}^{2}n^2_{j}\right)$. 

\section{Discussion\label{sec:Discussion}}

In this paper, we introduce the (randomized) sharded Bayesian tree
(SBT) model, which is motivated from inducing data shards from the
optimal design tree $\mathcal{T}_{u}$ and assemble sub-models $\mathcal{T}_{x\mid\eta_{u}}$'s
within the same model fitting procedure. This idea describes a principled
way of designing a distribute-aggregate regime using tree-partition
duality. The closest relative in the literature is \citet{chowdhury2018parallel}
and \citet{scott_bayes_2016}. Compared to \citet{chowdhury2018parallel},
we did not parallelize at the MCMC level but instead at the model level and therefore we do not
have to enforce additional convergence conditions. Compared to \citet{scott_bayes_2016},
we propose a new model that links the optimal sharding and convergence
rate in an explicit manner. 

The SBT construction puts the marginalization into the model construction,
showing the additive structure can be leveraged to realize data parallelism
and model parallelism. Our modeling method shares a spirit from both
lines of research. On one hand, we define the partition by an random
measure $\mathbb{P}$, where \citet{chowdhury2018parallel} introduce
auxilliary variables to ensure MCMC convergence. In SBT, we perform
model averaging between shard models which is justified by  optimal designs from $\mathcal{T}_{u}$ and optimal posterior contraction rates of BART,
which is a novel perspective when applied in conjunction with randomization.
However, our averaging is done ``internal to the model'' and maintains
an interpretable partition rule (given by internal nodes of $\mathcal{T}_{u}$)
within this framework. 

Motivated by scaling up the popular BART model to much larger datasets,
we identify and address the following challenges for applying the
distribute-aggregate scheme for the regression tree model from theoretical
perspectives. 
\begin{enumerate}
\item It is unclear how to design data shards $\mathcal{X}_{i}$'s when
the parameter space is not finite dimensional (e.g., trees), and the
previous work focus on improving the convergence behavior \citep{chowdhury2018parallel}.
We answer this question from an optimal design (of $\mathcal{T}_{u}$)
perspective, that is, probability inverse allocation
is optimal. 
\item It is unclear how to choose the weights $w_{i}$ and the number of
shards $B$ that is appropriate to the dataset from a theoretically
justifiable perspective (e.g., reciprocal of variances) \citep{scott_bayes_2016}.
We answer this question by calibrating posterior contraction rates,
stating that for BART, the weights (in SBT) that lead to fastest posterior
convergence are determined by the sample size and the smoothness of
the actual function. 
\item It is unclear how to generalize the distribute-aggregate paradigm
to summarize and possibly reduce the sum-of-tree or general additive
(Bayesian) model \citep{chipman_bart_2010}. We design a novel additive tree model to incorporate this regime within the framework of the BART construction and show better complexity can be attained using SBT. %
\end{enumerate}

The SBT model introduces randomizations into the model fitting and
prediction procedure, and attain strictly lower complexity compared
to the original BART model. 
Our aggregation regime is justified by optimal design for $\mathcal{T}_u$ under uniform distribution and optimal posterior contraction rates. 
As supported by the experiments on simulation
and real-world big datasets, we have not only witnessed its high scalability
but also the improvement in terms of RMSE under appropriate parameter
configurations. The complexity advantage can be practically improved in the parallel setting.

\section*{Acknowledgment }

HL was supported by the Director, Office of Science, of the U.S. Department
of Energy under Contract No. DE-AC02-05CH11231. The work of MTP was supported in part by the National Science Foundation under Agreements DMS-1916231, OAC-2004601, and in part by the King Abdullah University of Science and Technology (KAUST) Office of Sponsored Research (OSR) under Award No. OSR-2018-CRG7-3800.3.  
This work makes use of work supported by the U.S. Department of Energy, Office of Science, Office of Advanced Scientific Computing Research and Office of High Energy Physics, Scientific Discovery through Advanced Computing (SciDAC) program.  We stored our code
at \url{https://github.com/hrluo/}.

\bibliographystyle{chicago}
\bibliography{Tree_ref}

\appendix
\section{\label{distribution example}Distribution of $\bm{U}$}
\begin{figure}[t]
\centering
\includegraphics[width=0.8\textwidth]{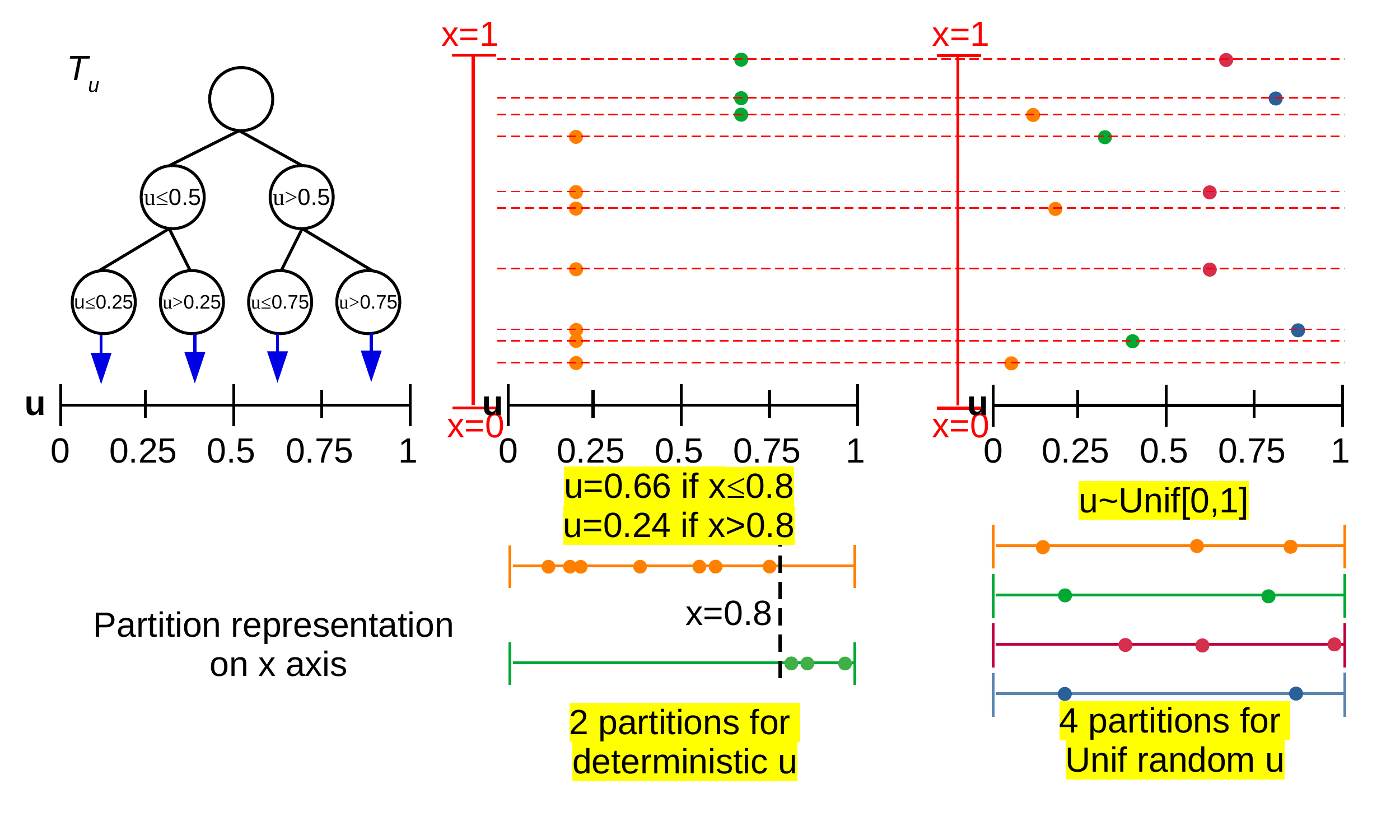} \caption{\label{fig:Balanced-tree-Tu-for-partition}A balanced tree of depth
$k=2$ and its pruned subtree which induced partition on $[0,1]$.
This figure shows how completely deterministic rule for $\bm{U}$ (middle
column) and how uniform random rule for $u$ (right column) create
different sharding on the same dataset $\bm{X}$ (10  dots) on
$[0,1]$.}
\end{figure}
This section shows an example on the effect
of choosing different distributions for the augment random variable
$u$. Suppose that $\mathcal{T}_{u}$ is a fully balanced binary tree
consisting of $B$ terminal nodes. Here we examine two different rules
conditioned on the balanced partition tree $\mathcal{T}_{u}$ at depth
$k=2$ as illustrated in Figure \ref{fig:Balanced-tree-Tu-for-partition}. 
Conditioning on the tree structure $\mathcal{T}_{u}$, if we choose
the uniform variable $\bm{U}\sim\text{Unif}(0,1)$ independent of $\bm{X}$,
then we partition the dataset into shards with approximately equal
sized 4 shards (where the approximation would become exact in the
limit $n\rightarrow\infty$). \\
 They are evenly distributed over $[0,1]$, overlap, and display completely
random designs. %

Conditioning on the same $\mathcal{T}_{u}$ structure, if we choose
the variable $\bm{U}$ determined by $\bm{X}$ in some manner, say using the rule $U=\begin{cases}
0.66 & x\leq0.8\\
0.24 & x>0.8
\end{cases}$, (which can be described by another tree) then we partition the dataset
into 2 shards of sizes 3 and 7 with no intersection. They are supported in $[0,1]$, do not overlap, and displays deterministic
designs. %
If we choose the $\bm{U}$ randomly %
as we will show below, the sharded datasets allocated
to each terminal leaf node can follow an optimal design within the
corresponding partition with a good chance.

\section{\label{sec:Optimal-Design-for}Optimal Design for Constrained Leaves}
\begin{figure}[t]
\centering \includegraphics[width=0.9\textwidth]{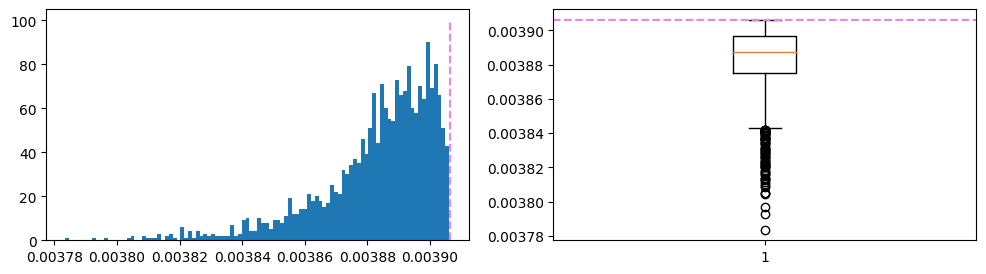}\\
\includegraphics[width=0.9\textwidth]{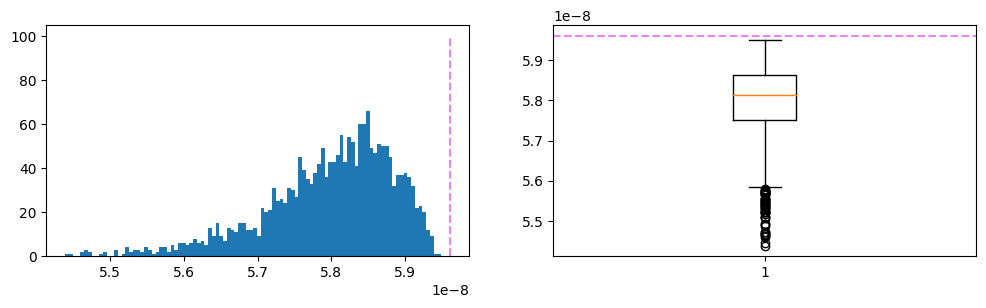} \caption{We generate $n=1000$ random multinomial samples with equally probabilities
$1/B$ being assigned to $B=4$ (top) and $B=8$ (bottom) leaf nodes, and then compute the
$\phi_{n}(\bm{u}_{1},\ldots,\bm{u}_{n}\mid\mathcal{T})$ defined in
\eqref{eq:phi_x_T} for 1000 different multinomial draws and assign them to these leaves. We display 2000 batches of assignments  using histogram (left column) and boxplot (right column). We indicate
the $\phi_{n}^{\max}$ using violet dashed lines in both plots. The
probability of getting at $\phi_{n}^{\max}=\frac{n!}{q!^{(B-r)}(q+1)!^{r}}\frac{1}{B^{n}}$ using the notation in Lemma \ref{lem:maximize_CTOD}.
}
\label{fig:Lemma_pics} 
\end{figure}
We have seen from Figure \ref{fig:Lemma_pics}, that ss the number of leaves $B$ increases, the chance that a random assignment attains optimal design tends to zero.

More generally, we want to solve the optimization problem $\max_{(n_{1},\cdots,n_{B})}\prod_{b=1}^{B}n_{b}$,
with constraints $\sum_{b=1}^{B}n_{b}=n$ and $n_{b}^{-}\leq n_{b}\leq n_{b}^{+}$,
i.e., we need to keep minimum and maximum sample sizes for each leaf.
This kind of scenario is not uncommon, for instance, the tree has
already contain several pilot samples and we want to find the optimal
design conditioned on these pilot samples. Note that this can be solved using integer programming as explained below, and we still assume
that $B\leq n$.

\paragraph*{Case I: Trivial constraints. }

When $n_{b}^{-}\leq\frac{n}{B}\leq n_{b}^{+}$ for all $b$, this
reduces to Lemma \ref{lem:maximize_CTOD}.

\paragraph*{Case II: Nontrivial constraints. }

Otherwise, there are some $b$ such that $\frac{n}{B}\notin[n_{b}^{-},n_{b}^{+}]$.
Then, we consider the equivalent problem 
\begin{align}
\max_{(n_{1},\cdots,n_{B})}\sum_{b=1}^{B}\log n_{b}\text{ s.t. } & \sum_{b=1}^{B}n_{b}=n,\label{eq:constrained_optimal_design}\\
 & n_{b}^{-}\leq n_{b}\leq n_{b}^{+},\nonumber 
\end{align}
where $n_{b}$ are integer indeterminates of the diagonal entries
in the design matrix, and also the number of samples assigned to leaf
$b$. This is closely related to the closest point search problem
on lattice $\mathbb{N}^{B}$\citep{agrell_closest_2002}, where a
polynomial time algorithm is possible when the dimension $B$ is fixed.

However, the problem \eqref{eq:constrained_optimal_design} comes
with non-trivial constraints, and the lattice $\mathbb{N}^{B}$ is
special since its reduced basis is the canonical basis. Therefore,
we take a different approach below.

To begin with, for $B$ indeterminates $n_{1},\cdots,n_{B}$, geometrically
we can perceive the problem as finding a point in 
\[
R\cap\mathbb{N}^{B}\cap C,R\coloneqq\left\{ \left(n_{1},\cdots,n_{B}\right)\mid\sum_{b=1}^{B}n_{b}=n\right\} 
\]
\[
C\coloneqq\left\{ \left(n_{1},\cdots,n_{B}\right)\mid n_{b}^{-}\leq n_{b}\leq n_{b}^{+},b=1,\cdots,B\right\} 
\]
such that it has the closest $L_{2}$ distance (or any convex distance
would work) to the point $\left(\frac{n}{B},\cdots,\frac{n}{B}\right)\in\mathbb{R}^{B}$,
which is possibly not in the $R\cap\mathbb{N}^{B}$.

Then, we can consider the following problem of finding the closest
point $\bm{n}=(\hat{n}_{1},\cdots,\hat{n}_{B})\in C$ on the hyperplane
defined by $\bm{1}\cdot\bm{n}-n=0$, to the continuous solution to
\eqref{eq:constrained_optimal_design}, i.e., $\left(\frac{n}{B},\cdots,\frac{n}{B}\right)$.
The point on the hyper-plane distance that minimizes the followin
$d(\bm{n})\equiv d(\hat{n}_{1},\cdots,\hat{n}_{B})\coloneqq\left(\frac{\left|\sum_{b=1}^{B}\hat{n}_{b}-n\right|}{\sqrt{B}}\right)^{2}+\sum_{b=1}^{B}\left(\hat{n}_{b}-\frac{n}{B}\right)^{2}$,
and we convert the problem into following format, which is usually
easier to solve numerically with fewer constraints (and remove the
strict constraint $\sum_{b=1}^{B}n_{b}=n$). 
\begin{align}
 & \min_{\bm{n}}d(\bm{n}),\text{ s.t., }\bm{n}\in C.\label{eq:constrained_optimal_design-1}
\end{align}

\begin{prop}
A solution to \eqref{eq:constrained_optimal_design-1} in $\bm{n}$,
is a solution to \eqref{eq:constrained_optimal_design}. 
\end{prop}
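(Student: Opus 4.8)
The plan is to recognize both programs as projections onto the same feasible polytope and to match their optimizers through a common \emph{water-filling} characterization. The objective of \eqref{eq:constrained_optimal_design}, $\sum_{b}\log n_{b}$, is strictly concave and symmetric, while the coordinatewise part of $d$, namely $\sum_{b}(\hat n_{b}-n/B)^{2}$, is strictly convex and symmetric; by the arithmetic--geometric-mean argument already invoked in Lemma \ref{lem:maximize_CTOD}, both are optimized over the affine constraint $\sum_{b}n_{b}=n$ alone at the single balanced center $\bm{c}=(n/B,\ldots,n/B)$. Since the first term $\bigl(|\sum_{b}\hat n_{b}-n|/\sqrt{B}\bigr)^{2}$ of $d$ is exactly the squared Euclidean distance from $\bm{n}$ to the hyperplane $H=\{\bm{1}^{T}\bm{n}=n\}$, and $\bm{c}\in H$, on the feasible set $C\cap H$ the penalty vanishes and $d$ reduces to $\|\bm{n}-\bm{c}\|^{2}$. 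Thus minimizing $d$ there is precisely projecting $\bm{c}$ onto $C\cap H$, and the whole proof reduces to showing that this projection coincides with the maximizer of $\sum_{b}\log n_{b}$.

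First I would write the Karush--Kuhn--Tucker conditions for \eqref{eq:constrained_optimal_design}: with multiplier $\lambda$ for $\sum_{b}n_{b}=n$ and multipliers for the box, stationarity forces $1/n_{b}=\lambda$ on every unconstrained coordinate, so all interior coordinates share one level $c^{\ast}=1/\lambda$, a coordinate is pinned at $n_{b}^{+}$ exactly when $n_{b}^{+}\le c^{\ast}$, and at $n_{b}^{-}$ exactly when $n_{b}^{-}\ge c^{\ast}$; equivalently $n_{b}=\mathrm{clip}(c^{\ast},n_{b}^{-},n_{b}^{+})$ with $c^{\ast}$ the unique level making the sum equal $n$. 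Repeating the computation for the projection of $\bm{c}$ onto $C\cap H$ yields the identical form $n_{b}=\mathrm{clip}(t^{\ast},n_{b}^{-},n_{b}^{+})$, where now $t^{\ast}=n/B-\mu/2$ for the hyperplane multiplier $\mu$. The second step is to note that $t\mapsto\sum_{b}\mathrm{clip}(t,n_{b}^{-},n_{b}^{+})$ is continuous, nondecreasing, and strictly increasing wherever at least one coordinate is interior; hence the level realizing $\sum_{b}n_{b}=n$ is unique. Because both programs pin their level by the \emph{same} sum constraint through the \emph{same} clip map, $c^{\ast}$ and $t^{\ast}$ produce the identical feasible point, establishing the equivalence on $C\cap H$.

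The main obstacle is reconciling this clean picture with the parenthetical decision in \eqref{eq:constrained_optimal_design-1} to drop the hard constraint $\sum_{b}n_{b}=n$ in favour of the soft penalty. The delicate point is that, once box constraints are active, the unconstrained-over-$C$ minimizer of $d$ need not land on $H$: the penalty gradient $2(S-n)/B$ is common to all coordinates and therefore adds a single scalar to each stationarity equation, so the minimizer still has the clipped-center form $n_{b}=\mathrm{clip}\bigl((2n-S)/B,n_{b}^{-},n_{b}^{+}\bigr)$, but its interior level is pulled only part-way toward $c^{\ast}$, and the induced active set can differ from that of \eqref{eq:constrained_optimal_design}. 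I would therefore run the equivalence argument on $C\cap H$ directly, using that $\bm{c}\in H$ and that the penalty term is the squared distance to $H$ (hence identically zero on the feasible set), so that the soft term serves only to orient the solver toward $H$ rather than to relax feasibility. Verifying that this quadratic penalization does not perturb the active set---equivalently, that the weighted-metric projection induced by $M=I+\bm{1}\bm{1}^{T}/B$ preserves the clip structure of the Euclidean projection onto $C\cap H$---is the step I expect to require the most care, and in the trivial-constraint Case I it is immediate since $\bm{c}\in C$ gives $d(\bm{c})=0$ and both programs share the optimum $\bm{c}$ outright.
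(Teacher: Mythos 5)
Your route diverges from the paper's and, as written, does not close. The paper's proof is an elementary discrete exchange argument on the integer lattice: it takes a putative solution $\hat{\bm{n}}$ of \eqref{eq:constrained_optimal_design-1} with $\sum_b\hat n_b=n$, supposes a sum-preserving transfer $(\hat n_1,\hat n_2)\mapsto(\hat n_1+1,\hat n_2-1)$ raises $\sum_b\log n_b$, deduces $\hat n_2-\hat n_1>1$ from $\log\left(1-\frac{1}{\hat n_1+1}\right)<\log\left(1-\frac{1}{\hat n_2}\right)$, and then observes that the penalty term of $d$ is unchanged while the quadratic term changes by $2(\hat n_1-\hat n_2+1)<0$, contradicting $d$-minimality; induction on the number of $\pm1$ moves finishes. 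Your KKT/water-filling analysis instead differentiates over $\mathbb{R}^B$, but both programs here are integer programs over $\mathbb{N}^B$ (the paper frames \eqref{eq:constrained_optimal_design} as a closest-lattice-point problem precisely because derivative-based stationarity is unavailable). On the lattice the clipped-center characterization and the uniqueness of the level $c^\ast$ fail: with trivial boxes and $n$ not divisible by $B$, the optimizers spread the remainder $r$ as $q$ versus $q+1$ across coordinates (Lemma \ref{lem:maximize_CTOD}), are unique only up to permutation, and coincide with no clip of a single common level, so ``both programs pin their level by the same clip map'' does not speak to the integer solutions the proposition is about.

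The more serious gap is the step you explicitly defer. Restricting to $C\cap H$, where the penalty vanishes and $d$ reduces to $\|\bm{n}-\bm{c}\|^2$, changes the problem: \eqref{eq:constrained_optimal_design-1} minimizes over all of $C$, and showing that its minimizer lands on $H$ (equivalently, that the penalization does not perturb the active set) is not a finishing touch but the entire content of the proposition. Your worry there is in fact fatal: take $B=2$, $n=10$, $C=\{0\le n_1\le 2,\ 0\le n_2\le 20\}$. Then $(2,8)\in C\cap H\cap\mathbb{N}^2$ maximizes $\log n_1+\log n_2$ subject to $n_1+n_2=10$ and has $d(2,8)=0+9+9=18$, whereas $(2,6)\in C$ has $d(2,6)=\frac{(8-10)^2}{2}+9+1=12$, satisfies the first-order conditions with the box active at $n_1=2$, and beats every integer neighbor, so it minimizes $d$ over $C$ while violating the sum constraint. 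Once a box binds, the soft penalty can pull the minimizer off the hyperplane, and the equivalence you establish on $C\cap H$ is a genuinely different statement from the one claimed. (The paper's own exchange argument also quietly considers only candidates with $\sum_b\hat n_b=n$, i.e.\ it effectively works on $C\cap H$ as well, so your instinct located a real subtlety in this proposition; but an argument that both leaves precisely that step open and runs through continuous stationarity conditions does not prove the statement.)
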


\begin{proof}
Without loss of generality, we can assume that the integer vector
$\left(\hat{n}_{1},\hat{n}_{2},\hat{n}_{3},\cdots,\hat{n}_{B}\right)$
is a solution to \eqref{eq:constrained_optimal_design-1}, but is
not a solution to \eqref{eq:constrained_optimal_design}. We proceed
by contradiction. Suppose that we only need to perform a $\pm1$ operation
and change it into $\left(\hat{n}_{1}+1,\hat{n}_{2}-1,\hat{n}_{3},\cdots,\hat{n}_{B}\right)$,
to yield a larger value of \eqref{eq:constrained_optimal_design}.
We can focus on the first two coordinates and a difference of mass
1, since all entries are non-negative integers and their sum is fixed
to be $n$, we can perform induction on the number of different entries
and the number of $\pm1$ operations based on this base case. That
means, $\log(\hat{n}_{1})+\log(\hat{n}_{2})<\log(\hat{n}_{1}+1)+\log(\hat{n}_{2}-1)$,
and $\log\left(\frac{\hat{n}_{1}}{\hat{n}_{1}+1}\right)=\log\left(1-\frac{1}{\hat{n}_{1}+1}\right)<\log\left(\frac{\hat{n}_{2}-1}{\hat{n}_{2}}\right)=\log\left(1-\frac{1}{\hat{n}_{2}}\right)$.
This implies $\hat{n}_{2}-\hat{n}_{1}>1\Longrightarrow\hat{n}_{1}-\hat{n}_{2}+1<0$
for integers $\hat{n}_{1},\hat{n}_{2}$, hence the first term in $d(\hat{n}_{1},\hat{n}_{1}+1,\hat{n}_{3},\cdots,\hat{n}_{B})$
remains the same; yet the second term differ by 
\begin{align*}
 & \left[\left(\hat{n}_{1}+1-\frac{n}{B}\right)^{2}+\left(\hat{n}_{2}-1-\frac{n}{B}\right)^{2}\right]-\left[\left(\hat{n}_{1}-\frac{n}{B}\right)^{2}+\left(\hat{n}_{2}-\frac{n}{B}\right)^{2}\right]\\
= & 2\left(\hat{n}_{1}-\frac{n}{B}\right)+1-2\left(\hat{n}_{2}-\frac{n}{B}\right)+1\\
= & 2\left(\hat{n}_{1}-\hat{n}_{2}+1\right)<0,
\end{align*}
contradicting the assumption that $\left(\hat{n}_{1},\hat{n}_{2},\hat{n}_{3},\cdots,\hat{n}_{B}\right)$
is a solution to \eqref{eq:constrained_optimal_design-1}. 
\end{proof}
\begin{figure}
\centering \includegraphics[width=0.35\textwidth]{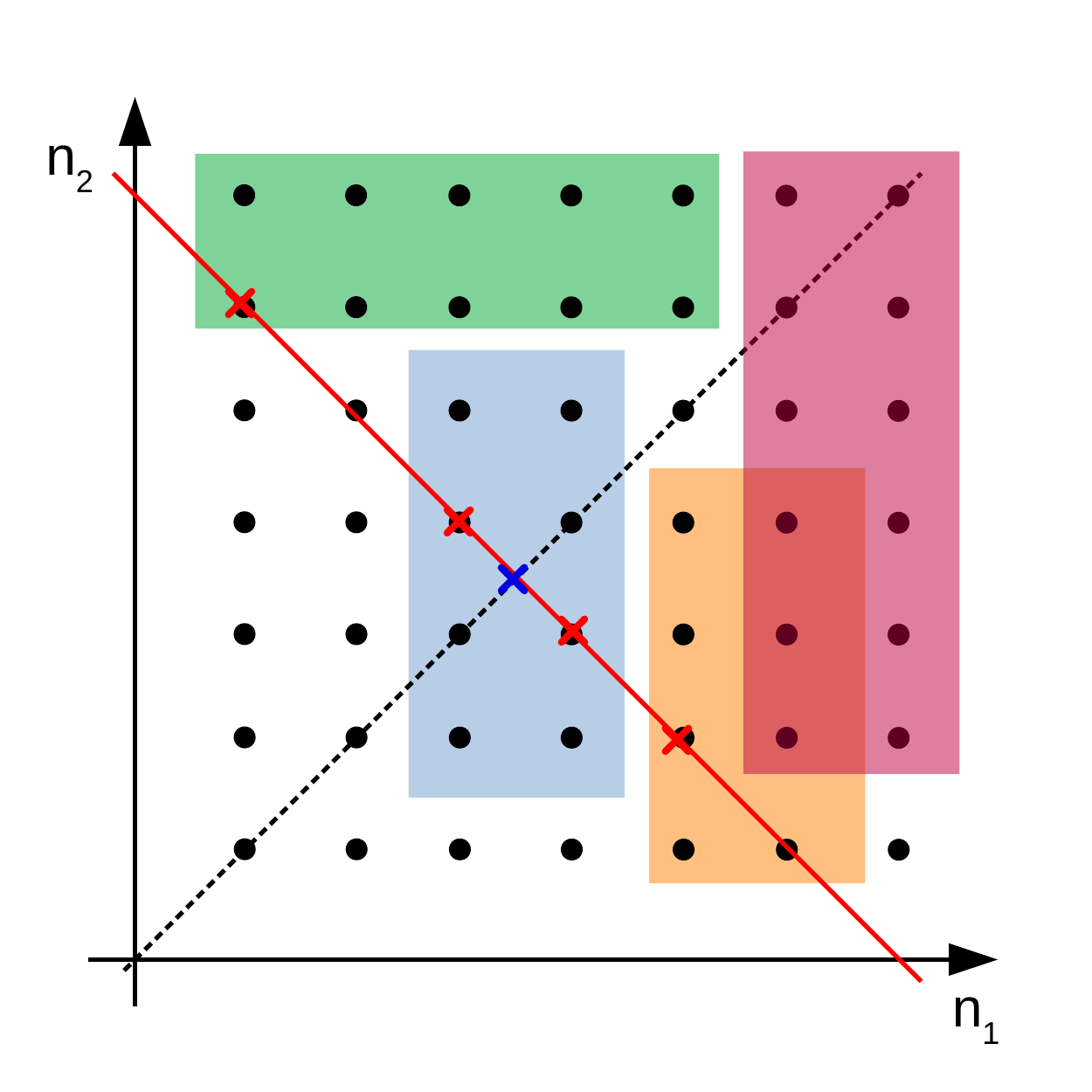}
\caption{Geometric intuition of proof for constrained optimal design. We display
a $\mathbb{N}^{2}$ lattice on the $\mathbb{R}^{2}$ plane, the fixed
sum $n_{1}+n_{2}=n(=7)$ is displayed as red solid line, crossing
all 7 possible pairs $(1,6),(2,5),\cdots,(6,1)$. The continuous solution
to \eqref{eq:constrained_optimal_design} is displayed by the blue
cross, i.e., the pair $(3.5,3.5)$. \protect \protect \protect \\
 If the constrained region (e.g., the blue region covers the $(3.5,3.5)$,
while the green and orange regions do not) and has non empty intersection
with $\mathbb{N}^{2}$ and the hyper-plane $\sum_{b=1}^{B}n_{b}=n$
(i.e., red solid line), then we can find solutions (displayed by red
crosses) along the hyper-plane $\sum_{b=1}^{B}n_{b}=n$. \protect
\protect \protect \\
 If the constrained region (e.g., the violet region) does not intersect
with $\mathbb{N}^{2}$ or the hyper-plane $\sum_{b=1}^{B}n_{b}=n$,
there is no solution.}
\label{fig:AMGMproof} 
\end{figure}

The intuition for $B=2$ of this argument is shown in Figure \ref{fig:AMGMproof}.
Practically, we can run constrained gradient descent with initial
point $\left(\frac{n}{B},\cdots,\frac{n}{B}\right)\in\mathbb{R}^{B}$.

\section{\label{sec:Equivalence-Theorem}Other Optimality Criteria}

This section discusses the D-optimal design, Min-max design and A-optimal
design in the tree regression setting.

Note that this is not a special case of the Kiefer's equivalence theorems,
since they assume the variables are all $\mathbb{R}$. We follow the
exposition Theorem 2.2.1 by \citet{fedorov_theory_2013}. Their proof
techniques cannot be directly applied since we cannot take derivative
when the variables lie in $\mathbb{N}$; although \citet{fedorov_theory_2013}
pointed out that continuous optimal design on $\mathbb{R}$ can approximate
discrete designs on $\mathbb{N}$, we can have more precise results
in tree models. The idea of the proof is similar but instead of linear
combinations of designs we shall consider the operations of adding
1 to one $n_{b}$ and subtract 1 from another $n_{b'}$. 
\begin{thm}
\label{thm:Assume-Equivalence}Assume that the number of sample $n=q\cdot B+r,r<B\leq n,q\in\mathbb{N}$
where $B$ is the number of leaf nodes in a fixed tree $\mathcal{T}$.
Then, the following claims implies the next (i.e., $1\Rightarrow2\Rightarrow3$): 
\end{thm}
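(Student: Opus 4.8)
The plan is to reduce all three optimality statements to explicit symmetric functions of the leaf counts $(n_1,\ldots,n_B)$ subject to $\sum_{b=1}^{B} n_b = n$. Because the basis functions are leaf indicators, the information matrix is diagonal, $\bm{F}(\bm{U})^{T}\bm{F}(\bm{U}) = \operatorname{diag}(n_1,\ldots,n_B)$, so D-optimality maximizes $\prod_{b} n_b$, A-optimality minimizes $\operatorname{tr}\bigl(\operatorname{diag}(n_1,\ldots,n_B)^{-1}\bigr) = \sum_b n_b^{-1}$, and the Min-max (G-optimal) criterion minimizes $\max_b n_b^{-1} = (\min_b n_b)^{-1}$. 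Each claim then becomes a purely combinatorial assertion about which integer vectors extremize these functions, and the two implications can be checked directly on the counts.

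For $1\Rightarrow 2$ I would show that the D-optimal and A-optimal allocations both coincide with the balanced allocation of Lemma \ref{lem:maximize_CTOD} (namely $q$ samples in $B-r$ leaves and $q+1$ in the remaining $r$), so that a D-optimal design is automatically A-optimal. The key tool is the discrete $\pm 1$ exchange used in Appendix \ref{sec:Optimal-Design-for}: if some pair of counts satisfies $n_i \le n_j - 2$, then replacing $(n_i,n_j)$ by $(n_i+1,\,n_j-1)$ strictly decreases the A-criterion, since
\[
\Bigl(\tfrac{1}{n_i+1}+\tfrac{1}{n_j-1}\Bigr)-\Bigl(\tfrac{1}{n_i}+\tfrac{1}{n_j}\Bigr)=\frac{1}{(n_j-1)n_j}-\frac{1}{n_i(n_i+1)}<0,
\]
the inequality following from $n_i(n_i+1)\le (n_j-2)(n_j-1)<(n_j-1)n_j$ (valid in the regime $n\ge B$, where every $n_b\ge 1$). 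Hence any A-optimal vector has all entries within one of each other, i.e.\ is balanced; as the same balanced vector is D-optimal by Lemma \ref{lem:maximize_CTOD}, the two criteria select the same design and $1\Rightarrow 2$ follows.

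For $2\Rightarrow 3$ I would argue that the balanced (A-optimal) allocation maximizes $\min_b n_b$, hence minimizes $\max_b n_b^{-1}$. A pigeonhole bound supplies the ceiling: were $n_b\ge q+1$ for every $b$, then $\sum_b n_b \ge B(q+1)=Bq+B>Bq+r=n$, contradicting $\sum_b n_b = n$ because $r<B$; thus $\min_b n_b\le q$ for all feasible allocations, and the balanced design attains $\min_b n_b = q$. This makes the balanced design Min-max optimal and gives $2\Rightarrow 3$.

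The main obstacle is the A-optimality step, where the Kiefer--Wolfowitz derivative argument of \citet{fedorov_theory_2013} is unavailable because the design variables lie in $\mathbb{N}$; the $\pm 1$ exchange replaces differentiation, and one must verify that a single exchange strictly improves the criterion and then induct on the number of exchanges needed to reach balance, exactly as in Appendix \ref{sec:Optimal-Design-for}. I would also remark that the implications are genuinely one-directional in the discrete setting: for $n=11,\,B=3$ the allocation $(3,3,5)$ has $\min_b n_b=3=q$ and is therefore Min-max optimal, yet $\prod_b n_b=45<48$ and $\sum_b n_b^{-1}=\tfrac{13}{15}>\tfrac{5}{6}$, so it is neither D- nor A-optimal; this is precisely why the statement is a chain of implications rather than a Kiefer-style equivalence.
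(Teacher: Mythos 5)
Your proposal contains the right mathematics, but you have mapped it onto the wrong claims. In the paper, claim 2 of Theorem \ref{thm:Assume-Equivalence} is Min-max optimality (minimizing $\phi_n^M$) and claim 3 is the value statement $\phi_n^M=q^{-1}$; A-optimality is not part of this theorem at all, but the subject of the separate Proposition immediately following it, where the paper proves the D/A equivalence via a continuous Lagrange relaxation plus the integer-rounding argument of Appendix \ref{sec:Optimal-Design-for}. Consequently your labelled ``$1\Rightarrow2$'' --- the $\pm1$ exchange showing any A-optimal vector is balanced --- proves that other proposition, not claim 2 here; the exchange computation is correct (and is arguably a cleaner, purely discrete proof of the D/A equivalence than the paper's), but within this theorem it is extraneous. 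The actual content of both implications lives in your ``$2\Rightarrow3$'' paragraph together with your opening reduction: since the $f_b$ are leaf indicators, $\phi_n^M=\max_b n_b^{-1}=(\min_b n_b)^{-1}$; by pigeonhole every feasible allocation has $\min_b n_b\le q$ (else $\sum_b n_b\ge B(q+1)>Bq+r=n$ because $r<B$), so $\phi_n^M\ge q^{-1}$, and the balanced allocation --- which by Lemma \ref{lem:maximize_CTOD} is exactly the D-optimal one --- attains $q^{-1}$. That yields $1\Rightarrow2$ (the D-optimal design minimizes $\phi_n^M$) and $2\Rightarrow3$ (any minimizer has value $q^{-1}$) in one stroke.

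Where your route differs from the paper's, it is actually tighter: the paper's proof of $1\Rightarrow2$ argues that minimizing $\max_b n_b^{-1}$ forces the counts ``as close to each other as possible'' and asserts that only permutations of the D-optimal allocation qualify --- a uniqueness claim that is too strong, and which the paper itself contradicts when explaining $3\not\Rightarrow1$ (a Min-max optimizer need only satisfy $\min_b n_b=q$). Your pigeonhole lower bound sidesteps that overstatement entirely. Your closing $(3,3,5)$ example with $n=11$, $B=3$ (Min-max optimal but neither D- nor A-optimal; the numbers check out, $45<48$ and $13/15>5/6$) is precisely the paper's non-reversibility remark in concrete form. So: re-label your steps to match the theorem's actual chain, move the A-optimality exchange to the subsequent Proposition where it belongs, and your proof is complete --- and at the one delicate point, an improvement on the paper's.
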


\begin{enumerate}
\item The D-optimal design for a tree $n_{1},n_{2},\cdots,n_{B}$ maximizes
$\phi_{n}(\bm{x}_{1},\ldots,\bm{x}_{n}\mid\mathcal{T})=\prod_{b=1}^{B}\frac{n_{b}}{n}$ 
\item The Min-max design for a tree $n_{1},n_{2},\cdots,n_{B}$ minimizes
\[
\phi_{n}^{M}(\bm{x}_{1},\ldots,\bm{x}_{n}\mid\mathcal{T})\coloneqq\max_{\bm{u}}\bm{f}(\bm{u})^{T}\text{diag}\left(n_{1}^{-1},n_{2}^{-1},\cdots,n_{B}^{-1}\right)\bm{f}(\bm{u})
\]
where $\bm{f}(\bm{u})\coloneqq\left(f_{1}(\bm{u}),\cdots,f_{B}(\bm{u})\right)^{T}$. 
\item The design criteria for a Min-max design $\bm{x}_{1}^{*},\ldots,\bm{x}_{n}^{*}$
is $\phi_{n}^{M}(\bm{x}_{1}^{*},\ldots,\bm{x}_{n}^{*}\mid\mathcal{T})=q^{-1}$. 
\end{enumerate}
\begin{proof}
($1\Rightarrow2$) The claim of 2 follows from the usage of Lemma
\ref{lem:maximize_CTOD}. We examine the definition of the Min-max
design here and note that the $f_{b}(\bm{u})$ are actually indicator
functions that indicates whether $\bm{u}$ lies in the domain defined
by the $b$-th leaf node. Therefore, $\max_{\bm{u}}\bm{f}(\bm{u})^{T}\text{diag}\left(n_{1}^{-1},n_{2}^{-1},\cdots,n_{B}^{-1}\right)\bm{f}(\bm{u})=\max\left(n_{1}^{-1},n_{2}^{-1},\cdots,n_{B}^{-1}\right)$
and this maximum is attain by $\bm{u}=(0,\cdots,0,1,0,\cdots,0)^{T}$
with 1 at the index where the diagonal term is the largest. To minimize
this $\phi_{n}^{M}(\bm{x}_{1},\ldots,\bm{x}_{n}\mid\mathcal{T})$,
we need to minimize the $\max\left(n_{1}^{-1},n_{2}^{-1},\cdots,n_{B}^{-1}\right)$,
in other words, make sure $n_{1}^{-1},n_{2}^{-1},\cdots,n_{B}^{-1}$
are as close to each other as possible, subject to $\sum_{b=1}^{B}n_{b}=B$.
But we have already derive from Lemma \ref{lem:maximize_CTOD} that
only a permutation of the maximizer of $\phi_{n}(\bm{x}_{1},\ldots,\bm{x}_{n}\mid\mathcal{T})$
would satisfy this requirement.

($2\Rightarrow3$) The claim of 3 follows from a direct computation
and the $\bm{u}=(0,\cdots,0,1,0,\cdots,0)^{T}$ with 1 at the any
one of $r$ indices where the diagonal term is $m$. 
\end{proof}
However, the claim of 1 does not follow from the claim of 3 (i.e.,
$3\not\Rightarrow$1). Without loss of generality, we can assume $n_{1}=q$,
and all the rest leaf counts $n_{2},n_{3},\cdots,n_{B}\geq q$. With
this requirement, we can write $n_{b}=q+\delta_{b}$ for $b=2,\cdots,B$
and find a solution $\delta_{2},\cdots,\delta_{B}\geq0$ that satisfies
$\sum_{b=2}^{B}\delta_{b}=n-q\cdot B=r$. Then any of these solutions
would satisfy $\phi_{n}^{M}(\bm{x}_{1}^{*},\ldots,\bm{x}_{n}^{*}\mid\mathcal{T})=q^{-1}$
but only the solution $\delta_{2}=\cdots=\delta_{r+1}=1$ and $\delta_{r+2}=\cdots=\delta_{B}=0$
(or its permutation) would lead to the optimal design claimed in 1.
Kiefer's argument for continuous design would not work here since
we cannot take convex combination of two designs, and all $f_{b}(\bm{u})$
are either 1 or 0.

The D-optimality is strictly stronger than Min-max optimality in the
tree regression setting. Under the assumption that the minimizer to
$\phi_{n}^{M}(\bm{x}_{1}^{*},\ldots,\bm{x}_{n}^{*}\mid\mathcal{T})=q^{-1}$
being unique, we can even find $m=n_{1}<n_{2}<\cdots<n_{B}$ that
is Min-max optimal yet not D-optimal.

Next, we would consider the \emph{A-optimality}, where the optimal
criteria function (to be minimized) is defined to be 
\[
\phi_{n}^{A}(\bm{x}_{1},\ldots,\bm{x}_{n}\mid\mathcal{T})\propto\text{Trace }\text{diag}\left(n_{1}^{-1},n_{2}^{-1},\cdots,n_{B}^{-1}\right)=\sum_{b=1}^{B}n_{b}^{-1}.
\]

\begin{prop}
Under the assumption of Theorem \ref{thm:Assume-Equivalence}, the
D-optimality is equivalent to A-optimality in the sense that, any
D-optimal solution is A-optimal, vice versa. 
\end{prop}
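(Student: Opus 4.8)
The plan is to show that both optimality criteria are solved by exactly the same balanced allocation characterized in Lemma \ref{lem:maximize_CTOD}, namely the configuration that assigns $q$ samples to $B-r$ leaves and $q+1$ samples to the remaining $r$ leaves. Since Lemma \ref{lem:maximize_CTOD} already identifies this as the unique D-optimal configuration (up to permutation of the leaf indices), it suffices to prove that the very same configuration is the unique A-optimal one; the equivalence---every D-optimal allocation is A-optimal and conversely---then follows immediately because the two optimal solution sets coincide.

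First I would cast the A-optimal problem as minimizing $\phi_n^A = \sum_{b=1}^{B} n_b^{-1}$ subject to $\sum_{b=1}^{B} n_b = n$ over positive integers $n_b$, and argue via the same $\pm 1$ exchange that underlies the rest of this section. Suppose some feasible allocation has two leaf counts with $n_i \le n_j - 2$. Replacing $(n_i, n_j)$ by $(n_i + 1, n_j - 1)$ preserves the sum constraint and changes the objective by $\frac{1}{n_j(n_j-1)} - \frac{1}{n_i(n_i+1)}$. Because $n_i + 1 \le n_j - 1$ forces $n_i(n_i+1) < n_j(n_j-1)$, this difference is strictly negative, so the exchange strictly decreases $\phi_n^A$. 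Hence no minimizer can contain a pair of counts differing by $2$ or more.

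Next I would observe that the allocations whose pairwise differences are all at most $1$ are precisely those whose value multiset is $\{q,\dots,q,q+1,\dots,q+1\}$: if $a$ leaves carry a common value $v$ and the other $B-a$ carry $v+1$, then the constraint $Bv + (B-a) = n = qB + r$ forces $v = q$ and $a = B - r$. This is exactly the balanced configuration of Lemma \ref{lem:maximize_CTOD}, so the A-optimal allocation is unique up to permutation and coincides with the D-optimal one, completing the proof of equivalence in both directions.

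The main obstacle is purely the discrete, integer-lattice nature of the problem. The standard convexity of $x \mapsto 1/x$ supplies the intuition that reciprocal sums are minimized by balancing, but because the counts live in $\mathbb{N}$ rather than $\mathbb{R}$ I cannot invoke derivatives or convex combinations of designs (exactly the caveat flagged before Theorem \ref{thm:Assume-Equivalence}); instead I must verify strict improvement of the $\pm 1$ exchange at integer granularity and rule out any non-balanced integer multiset. Once this strict monotonicity is confirmed---mirroring the product calculation that drives Lemma \ref{lem:maximize_CTOD}---the two optimal sets are identical and the claim is immediate.
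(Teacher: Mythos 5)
Your proof is correct, but it reaches the conclusion by a genuinely different route than the paper. The paper's proof relaxes the A-criterion to real-valued counts: it forms the Lagrangian $\mathcal{L}(n_{1},\cdots,n_{B};\lambda)=\sum_{b=1}^{B}n_{b}^{-1}+\lambda\left(\sum_{b=1}^{B}n_{b}-n\right)$, solves for the continuous optimum $\hat{n}_{b}=n/B$, and then transfers back to the integer lattice by invoking the closest-point argument of Appendix \ref{sec:Optimal-Design-for} with trivial constraints ($n_{b}^{-}=-\infty$, $n_{b}^{+}=+\infty$), concluding that the integer minimizers coincide with the allocation of Lemma \ref{lem:maximize_CTOD}. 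You instead stay entirely at integer granularity: your $\pm 1$ exchange shows the objective changes by $\frac{1}{n_{j}(n_{j}-1)}-\frac{1}{n_{i}(n_{i}+1)}<0$ whenever $n_{i}\leq n_{j}-2$, so every A-minimizer has all counts within one of each other, and Euclidean division then forces the multiset $\{q,\dots,q,q+1,\dots,q+1\}$ with multiplicities $B-r$ and $r$ --- exactly the D-optimal set of Lemma \ref{lem:maximize_CTOD}, giving equivalence in both directions. Your route is more self-contained and arguably more in the spirit of the discrete setting the paper flags before Theorem \ref{thm:Assume-Equivalence} (no derivatives, no extension to $\mathbb{R}^{B}$, no appeal to the lattice-projection machinery); it also directly delivers uniqueness up to permutation. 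What the paper's route buys in exchange is reuse: by funneling through Appendix \ref{sec:Optimal-Design-for}, the same argument extends immediately to box-constrained leaf counts $n_{b}^{-}\leq n_{b}\leq n_{b}^{+}$, whereas your exchange would need the additional bookkeeping that a feasible $\pm 1$ move exists under constraints. One small point worth making explicit in your write-up: the exchange step presumes all $n_{b}\geq 1$ (otherwise $\phi_{n}^{A}$ is infinite and the move $n_{j}-1\geq n_{i}+1\geq 2$ needs positivity), which is guaranteed by the standing assumption $B\leq n$ and the finiteness of any candidate minimizer.
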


\begin{proof}
It is straightforward that $\min_{(\bm{x}_{1},\cdots,\bm{x}_{n})}\phi_{n}^{A}(\bm{x}_{1},\ldots,\bm{x}_{n}\mid\mathcal{T})=\min_{(n_{1},\cdots,n_{B})}\sum_{b=1}^{B}n_{b}^{-1}$
subject to $\sum_{b=1}^{B}n_{b}=n$. For our convenience, we first
assume $n_{1},\cdots,n_{B}\in\mathbb{R}$ and construct the Lagrange
multiplier $\mathcal{L}(n_{1},\cdots,n_{B};\lambda)\coloneqq\sum_{b=1}^{B}n_{b}^{-1}+\lambda\left(\sum_{b=1}^{B}n_{b}-n\right)$
and take derivatives (although the function is defined over integer grid $\mathbb{N}^B$, it admits a natural extension to $\mathbb{R}^B$ and justifies the derivation below): 
\begin{align*}
\frac{\partial\mathcal{L}}{\partial n_{b}}=-n_{b}^{-2}+\lambda,\quad & \frac{\partial\mathcal{L}}{\partial\lambda}=\sum_{b=1}^{B}n_{b}-n.
\end{align*}
Set equations to zeros we solve $\hat{n}_{b}=\frac{1}{\sqrt{\hat{\lambda}}},\frac{B}{\sqrt{\hat{\lambda}}}-n=0$
and its Hessian is positive so it is a minimizer. Therefore, these
$\hat{n}_{b}=\left(\frac{n}{B}\right)$ would give us a continuous
solution. To attain a solution in $\mathbb{N}$, it suffices to repeat
the argument in Appendix \ref{sec:Optimal-Design-for} with no constraints
on $n_{b}$, i.e., $n_{b}^{-}=-\infty$, $n_{b}^{+}=+\infty$. This
set of solutions coincide with the solution described in Lemma \ref{lem:maximize_CTOD}. 
\end{proof}
The D-optimality is equivalent to the A-optimality in the tree regression
setting, since they always share the same set of solutions.

\section{\label{sec:Asymptotic-Optimal-Designs}Asymptotic Optimal Designs}

The result in section \ref{sec:OD_in_reg_tree} assumes fixed number
of leaves $B$ and number of samples $n$ (or the depth of the tree
$\kappa$). In the case $n<B$, we take the convention that the design
criterion is the product over non-empty nodes, and hence it is maximized
by allocating 1 sample for $n$ of our  $B$ nodes, and leave the remaining
nodes containing no samples. We do not consider this situation in
the current paper. Following the derivations in \eqref{eq:UTOD_derivation},
we still want to ensure that in the expectation (w.r.t. the designated
probability measure $\mathbb{P}$) the quantity $\mathbb{E}\phi$
is maximized.

Asymptotically, when letting $B\rightarrow\infty$ the dependence
between random variables that count the number of sample points in
different leaves are asymptotically independent, according to \citet{kolchin_random_1978}(Chapter
III and IV). Therefore, the product of first moments would give us
good approximation to the quantity $\mathbb{E}\phi$. When both $n$
and $B$ tend to infinity, the term $\frac{n!}{n^{B}}$ affects the
asymptotic limiting quantity $\mathbb{E}\phi$. Consider a special
case where $B=B(n)=n^{\omega}$ where the number of leaves grows as
a polynomial of the sample size $n$. Then when $0\leq\omega<1$, $\frac{n!}{n^{B}}\rightarrow\infty$;
and when $\omega\geq1$, $\frac{n!}{n^{B}}\rightarrow0$. Unfortunately,
usually the only bound we can ensure is $0\leq B(n)\leq2^{n}$ in
a typical tree regression. This echoes one common difficulty faced
by the partition-based methods: when the resolution grows, the number
of partitions grows exponentially.

Obviously, in the case of finding an optimal design for the regression
tree induced partition, we want to take the tree structure into consideration.
On one hand, we want to regulate the partition so that even for high
dimensional data (i.e., $\bm{X}\subset\mathbb{R}^{d}$ where $d$
is large) there would not be too many partition components induced
by the tree; on the other hand, we usually limit the depth of the
tree to reduce the complexity of the regression model for computational
considerations. This frees the tree model from the exponentially growing
number of partitions for large $d$. In a regression tree induced
partition, we can limit the growth rate of the number of total partition
components and avoid the curse of dimensionality. 

Corresponding to this principle of limiting the depth of a tree, we
consider a slightly different set of notions of optimality criteria.
Analogous to \eqref{eq:phi_T}, we consider the maximal subtree before
depth $\kappa_{0}$: 
\begin{align}
\phi(\mathcal{T}\mid\mathbb{P})\coloneqq\prod_{b\text{ is a node of depth }\kappa_{0}}\mathbb{P}\left(\bm{u}\in R_{b}\right).\label{eq:phi_T_kappa0}
\end{align}
And the criteria \eqref{eq:phi_x_T} can be generalized to the corresponding
``depth constrained'' version by replacing the product term $\prod_{b=1}^{B}$
with $\prod_{b\text{ is a node of depth }\kappa_{0}}$. Instead of
traversing across all leaves, we transverse all internal nodes of
specified depth $\kappa_{0}$. This means that we consider the optimality
only up to $2^{\kappa_{0}}$ nodes. It is possible but not necessary
to make $\kappa_{0}$ depend on the sample size $n$, in order to
take the potential growing model complexity into consideration.

\section{\label{sec:Proof-to-Theorem-posterior}Proof of Theorem \ref{thm:posterior_proof_thm}}

We prove the following:%
\begin{thm*}
Under the same assumptions and notations as in Theorem \ref{thm:(Theorem-7.1-in-Rockva-Saha-2019)},
we suppose that the sharding tree $\mathcal{T}_{u}$ partitions the
full dataset $\bm{X}=\{\bm{x}_{1},\cdots,\bm{x}_{n}\}\subset\mathbb{R}^{d}$,
$\bm{y}_{n}$ into $B$ shards $\mathcal{X}_{b},b=1,\cdots,B$
and corresponding responses $\bm{y}_{b,n_{b}},\cup_{b=1}^{B}\bm{y}_{b,n_{b}}=\bm{y}_{n}$, where $n_b$ is the sample size of each shard.
We designate a set of weights $w_{1}(n),\cdots,w_{B}(n)$ whose
sum $\sum_{b=1}^{B}w_{b}(n)=1$ for fixed number of shards $B$. 
\end{thm*}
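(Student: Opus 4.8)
The plan is to reduce the posterior concentration of the aggregate $\sum_{b=1}^{B}w_{b}f_{b}$ to the per-shard concentration already guaranteed by Theorem~\ref{thm:(Theorem-7.1-in-Rockva-Saha-2019)}, using nothing more than the triangle inequality and a union bound. The starting point is that the weights form a probability vector, $\sum_{b=1}^{B}w_{b}=1$, so we may write $f_{0}=\sum_{b=1}^{B}w_{b}f_{0}$ and hence
\[
f_{0}-\sum_{b=1}^{B}w_{b}f_{b}=\sum_{b=1}^{B}w_{b}\,(f_{0}-f_{b}).
\]
Applying the triangle inequality for the empirical norm $\|\cdot\|_{n}$ then linearizes the aggregate error into a weighted sum of individual shard errors,
\[
\Big\|f_{0}-\sum_{b=1}^{B}w_{b}f_{b}\Big\|_{n}\le\sum_{b=1}^{B}w_{b}\,\|f_{0}-f_{b}\|_{n}.
\]

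Next I would convert the tail event of this weighted sum into a union of per-shard tail events. If \emph{every} summand satisfied $w_{b}\|f_{0}-f_{b}\|_{n}\le M_{n}\varepsilon_{n}$, the whole sum would be at most $BM_{n}\varepsilon_{n}$; taking the contrapositive, the event $\{\|f_{0}-\sum_{b}w_{b}f_{b}\|_{n}>M_{n}B\varepsilon_{n}\}$ is contained in $\bigcup_{b=1}^{B}\{w_{b}\|f_{0}-f_{b}\|_{n}>M_{n}\varepsilon_{n}\}$. A union bound gives
\[
\Pi\Big(\Big\|f_{0}-\sum_{b=1}^{B}w_{b}f_{b}\Big\|_{n}>M_{n}B\varepsilon_{n}\;\Big|\;\bm{y}_{n}\Big)\le\sum_{b=1}^{B}\Pi\big(\|f_{0}-f_{b}\|_{n}>w_{b}^{-1}M_{n}\varepsilon_{n}\mid\bm{y}_{n}\big)\le B\max_{b}\Pi\big(\cdots\big),
\]
which already produces the leading factor $B$ and the inflated thresholds $w_{b}^{-1}M_{n}\varepsilon_{n}$ appearing on the right-hand side of \eqref{eq:main_bound}.

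The final step is to pass from the global norm $\|\cdot\|_{n}$ and global posterior $\Pi(\cdot\mid\bm{y}_{n})$ to the shard norm $\|\cdot\|_{n_{b}}$ and shard posterior $\Pi(\cdot\mid\bm{y}_{b,n_{b}})$, and then to invoke Theorem~\ref{thm:(Theorem-7.1-in-Rockva-Saha-2019)} on each shard. Because $f_{b}$ is, conditionally on $\mathcal{T}_{u}$, fit only on shard $b$ (Algorithm~\ref{alg:RST-Algorithm-Fitting}), its posterior depends on the data only through $\bm{y}_{b,n_{b}}$, so I would replace $\Pi(\cdot\mid\bm{y}_{n})$ by $\Pi(\cdot\mid\bm{y}_{b,n_{b}})$ and compare norms through the identity $n\|f\|_{n}^{2}=\sum_{b'}n_{b'}\|f\|_{n_{b'}}^{2}$. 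Applying Rockova--Saha's result to shard $b$, now a BART fit on $n_{b}$ points, yields concentration in $\|\cdot\|_{n_{b}}$ at the shard rate $\varepsilon_{b,n}=n_{b}^{-\alpha/(2\alpha+d)}\log^{1/2}n_{b}$, so each of the $B$ per-shard probabilities vanishes; summing them produces the closing bound $\le B\varepsilon_{n}$ with $\varepsilon_{n}=\min_{b}\varepsilon_{b,n}$.

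The hardest part will be the norm switch in this last step: controlling the global error $\|f_{0}-f_{b}\|_{n}$ of a shard model by its in-shard error $\|f_{0}-f_{b}\|_{n_{b}}$. The decomposition $n\|f\|_{n}^{2}=\sum_{b'}n_{b'}\|f\|_{n_{b'}}^{2}$ only bounds the shard norm by the global norm and involves the off-shard terms $\|f_{0}-f_{b}\|_{n_{b'}}$, $b'\neq b$, which Theorem~\ref{thm:(Theorem-7.1-in-Rockva-Saha-2019)} does not directly control. This is precisely where the exchangeability of the shards and the $B$-expected optimal design of $\mathcal{T}_{u}$ (Lemma~\ref{lem:maximize_CTOD}) enter: a balanced sharding renders the shards statistically interchangeable, so each shard inherits a regular design in the sense of Rockova--Saha's Definition~3.3 and the dimension condition $d\lesssim\log^{1/2}n_{b}$ persists, legitimizing both the per-shard application and the comparison of the two empirical norms.
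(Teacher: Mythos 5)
Your proposal is correct and follows essentially the same route as the paper's Appendix \ref{sec:Proof-to-Theorem-posterior} proof: the identity $f_{0}-\sum_{b}w_{b}f_{b}=\sum_{b}w_{b}(f_{0}-f_{b})$ from $\sum_{b}w_{b}=1$ plus the triangle inequality, a union bound yielding the factor $B$ and the inflated thresholds $w_{b}^{-1}M_{n}\varepsilon_{n}$, absorption of $w_{b}^{-1}$ into the arbitrary sequence $M_{n_{b}}$, identification of $\prod(\cdot\mid\bm{y}_{n})$ with $\prod(\cdot\mid\bm{y}_{b,n_{b}})$ since each $f_{b}$ sees only its shard, and per-shard invocation of Theorem \ref{thm:(Theorem-7.1-in-Rockva-Saha-2019)} with $\varepsilon_{n}=\min_{b}\varepsilon_{b,n}$ --- the paper simply writes the same chain in reverse, starting from the right-hand side of \eqref{eq:main_bound}. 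The norm-switch difficulty you flag (controlling $\left\Vert f_{0}-f_{b}\right\Vert _{n}$ by the in-shard $\left\Vert f_{0}-f_{b}\right\Vert _{n_{b}}$) is precisely the step the paper handles, somewhat informally, via the comparison \eqref{eq:triangle_norm_emp} and its exchangeability and balanced-design remarks following the theorem, so your caveat matches rather than diverges from the paper's argument.
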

\begin{proof}
To achieve this goal, we first consider application of Theorem \ref{thm:(Theorem-7.1-in-Rockva-Saha-2019)}
for each BART fitted using only its shard $\bm{y}_{b,n_{b}}$
for the $\varepsilon_{b,n}\coloneqq n_{b}^{-\alpha/(2\alpha+d)}\log^{1/2}n_{b}$.
By theorem \ref{thm:(Theorem-7.1-in-Rockva-Saha-2019)}, for an arbitrary $\epsilon>0$, we can find a large integer $N_{b,\epsilon}\in\mathbb{N}$
such that, by the definition
of the empirical norm $\|f\|_{n}\coloneqq\frac{1}{n}\sum_{i=1}^{n}f(\bm{x}_{i})^{2}$
with respect to a set $\bm{X}^{T}=(\bm{x}_{1}^{T},\cdots,\bm{x}_{n}^{T})$
of size $n$, 
\begin{align}
 & \prod\left(\left.f_{b}\in\mathcal{F}:\left\Vert f_{0}-f_{b}\right\Vert _{n_{b}}>w_{b}(n)^{-1}\cdot M_{n_{b}}\cdot\varepsilon_{b,n}\right|\bm{y}_{b,n_{b}}\right)\leq\epsilon\label{eq:posterior_proof_3}\\
 & \text{for }n_{b}\geq N_{b,\epsilon}\nonumber 
\end{align}
For now, let us consider the weights $w_{b}(n)^{-1}$ to be  a shard-specific sequence that depends on the total sample size $n$.
Since the $M_{n_{b}}$ can be any sequence that tends to infinity
(as $n_{b}\rightarrow\infty$) as stated in Theorem \ref{thm:(Theorem-7.1-in-Rockva-Saha-2019)},
we can insert $w_{b}(n)^{-1}$ without loss of generality.

Then, define the following auxiliary quantities: 
\begin{align*}
N_{\epsilon} & \coloneqq\max_{b=1,\cdots,B}N_{b,\epsilon},\\
\varepsilon_{n} & \coloneqq\min_{b=1,\cdots,B}\varepsilon_{b,n}=\min_{b=1,\cdots,B}n_{b}^{-\alpha/(2\alpha+d)}\log^{1/2}n_{b},\\
M_{n} & \coloneqq\min_{b=1,\cdots,B}M_{n_{b}}.
\end{align*}
We can take all $n_{1},\cdots,n_{b}\geq N_{\epsilon}$ such that \eqref{eq:posterior_proof_3}
holds for each $f_{b},b=1,\cdots,B$. Since the $\bm{X}$ is splitted into (non-overlapping)
shards of cardinality $n_{1},\cdots,n_{B}$, then we want to replace
the empirical norms $\|\cdot\|_{n_{b}}$ inside the event with $\|\cdot\|_{n}$
as well. To do so, observe that, 
\begin{align}
\|f\|_{n} & \coloneqq\frac{1}{n}\sum_{i=1}^{n}f(\bm{x}_{i})^{2}=\frac{1}{n}\sum_{b=1}^{B}\sum_{\bm{x}_{i}\in\mathcal{X}_{b}}f(\bm{x}_{i})^{2}=\sum_{b=1}^{B}\frac{1}{n}\sum_{\bm{x}_{i}\in\mathcal{X}_{b}}f(\bm{x}_{i})^{2}\nonumber \\
 & \leq\sum_{b=1}^{B}\frac{1}{n_{b}}\sum_{\bm{x}_{i}\in\mathcal{X}_{b}}f(\bm{x}_{i})^{2}=\sum_{b=1}^{B}\|f\|_{n_{b}}\label{eq:triangle_norm_emp}
\end{align}
This inequality $\|f\|_{n}\leq\sum_{b=1}^{B}\|f\|_{n_{b}}$ for $\sum_{b=1}^{B}n_{b}=n$,
also holds for overlapping shards.

Now we are ready to specify the weights in our sharded model, taking
the weighted sum perspective, leads us to consider a set of weights
$w_{n,(1)},\cdots,w_{n,(B)}$ whose sum $\sum_{b=1}^{B}w_{b}(n)=1$
for fixed number of shards $B$. This assumption on the summation
of 1 can be relaxed to finite sum with appropriate normalization using
their sum, and these weights could depend on the sample size $n$
as indicated by their subscripts.

We consider the following upper bound contraction probability, 
\begin{align}
 & B\cdot\max_{b=1,\cdots,B}\prod\left(\left.f_{b}\in\mathcal{F}:\left\Vert f_{0}-f_{b}\right\Vert _{n_{b}}>w_{b}(n)^{-1}\cdot M_{n_b}\varepsilon_{b,n}\right|\bm{y}_{b,n_{b}}\right)\nonumber \\
\geq & B\cdot\max_{b=1,\cdots,B}\prod\left(\left.f_{b}\in\mathcal{F}:\left\Vert f_{0}-f_{b}\right\Vert _{n}>w_{b}(n)^{-1}\cdot M_{n_b}\varepsilon_{b,n}\right|\bm{y}_{b,n_{b}}\right)\\
\geq & \sum_{b=1}^{B}\prod\left(\left.f_{b}\in\mathcal{F}:\left\Vert f_{0}-f_{b}\right\Vert _{n}>w_{b}(n)^{-1}\cdot M_{n}\varepsilon_{b,n}\right|\bm{y}_{b,n_{b}}\right)\nonumber \\
= & \sum_{b=1}^{B}\prod\left(\left.f_{b}\in\mathcal{F}:\left\Vert f_{0}-f_{b}\right\Vert _{n}>w_{b}(n)^{-1}\cdot M_{n}\varepsilon_{b,n}\right|\bm{y}_{n}\right)\label{eq:reason_1}
\end{align}
The second line follows from \eqref{eq:triangle_norm_emp}. The third
line follows from the definition of $M_n$ and taking maximum over $b=1,\cdots,B$.
The last line follows from the fact that the the $b$-th tree fits
with only the $b$-th shard of data, denoted as $\bm{y}_{b,n_{b}}$;
and $\cup_{b=1}^{B}\bm{y}_{b,n_{b}}=\bm{y}_{n}$. Then we use
the sub-additivity of the (posterior) probability  measure and yield from \eqref{eq:reason_1}
that 
\begin{align}
 & \sum_{b=1}^{B}\prod\left(\left.f_{b}\in\mathcal{F}:\left\Vert f_{0}-f_{b}\right\Vert _{n}>w_{b}(n)^{-1}\cdot M_{n}\varepsilon_{b,n}\right|\bm{y}_{n}\right)\nonumber \\
\geq & \prod\left(\left.\cup_{b=1}^{B}\left\{ f_{b}\in\mathcal{F}:\left\Vert f_{0}-f_{b}\right\Vert _{n}>w_{b}(n)^{-1}\cdot M_{n}\varepsilon_{b,n}\right\} \right|\bm{y}_{n}\right)\nonumber \\
\geq & \prod\left(\left.\sum_{b=1}^{B}w_{b}(n)\cdot\left\Vert f_{0}-f_{b}\right\Vert _{n}>M_{n}\sum_{b=1}^{B}\varepsilon_{b,n}\right|\bm{y}_{n}\right)\nonumber \\
\geq & \prod\left(\left.\sum_{b=1}^{B}w_{b}(n)\cdot\left\Vert f_{0}-f_{b}\right\Vert _{n}>M_{n}B\cdot\varepsilon_{n}\right|\bm{y}_{n}\right)\label{eq:reason_2}
\end{align}

The second line follows from sub-additivity of the (posterior) probability measure
$\prod$. The third line follows from the union bound, after moving
$w_{b}(n)$ to the left hand side and the definition that $0< w_{b}(n)\leq 1$. The last line follows from the
definition of $\varepsilon_{n}\coloneqq\min_{b=1,\cdots,B}\varepsilon_{b,n}$.
Using our assumption on the sum $\sum_{b=1}^{B}w_{b}(n)=1$ in \eqref{eq:reason_2},
we have 
\begin{align}
 & \prod\left(\left.\sum_{b=1}^{B}w_{b}(n)\cdot\left\Vert f_{0}-f_{b}\right\Vert _{n}>M_{n}B\cdot\varepsilon_{n}\right|\bm{y}_{n}\right)\nonumber \\
\geq & \prod\left(\left.\left\Vert \sum_{b=1}^{B}w_{b}(n)f_{0}-w_{b}(n)f_{b}\right\Vert _{n}>M_{n}B\cdot\varepsilon_{n}\right|\bm{y}_{n}\right)\nonumber \\
= & \prod\left(\left.\left\Vert f_{0}-\sum_{b=1}^{B}w_{b}(n)f_{b}\right\Vert _{n}>M_{n}B\cdot\varepsilon_{n}\right|\bm{y}_{n}\right)\label{eq:reason_3}
\end{align}
Now, we have proven \eqref{eq:main_bound}. Since $\epsilon$ is arbitrarily
chosen, this proves  the posterior concentration as well. 
\end{proof}

\section{\label{sec:Proof-of-Lemma}Proof of Lemma \ref{lem:(Computational-complexity-for}}
\begin{proof}
At each MCMC step, for each of the $m$ trees, a new tree structure
can be proposed given the previous tree structure of this specific
tree via one of the following 4 steps: (i) insert (i.e., grow) two
new child nodes at a leaf node (ii) delete (i.e., prune) two leaf
nodes and merge them into their parent node (iii) swap the splitting
criteria of two interior nodes (iv) change the splitting criteria
of a single interior node. We can consider (i) and (ii) as the regular
binary tree operations of insertion and deletion with complexity $\mathcal{O}(k_{i})$;
and (iii) and (iv) as search operation in a binary tree, also with
complexity $\mathcal{O}(k_{i})$.

When updating the structure of a single tree with at most $k_{i}$
nodes, it takes at most computational complexity $\mathcal{O}(k_{i})$.

Fixing the structure of the tree, when computing the acceptance probability,
the likelihood factors into 1-dimensional Gaussian terms (See, e.g.,
(23) in \citet{chipman_bart_2010} or (2) in \citet{tan2019bayesian})
and hence does not depend on the dimension of input but the bounded
by the total number of nodes in the tree. For numerators and denominators
in the likelihood ratio for accept-reject step, we have product in
the form of $$\prod_{j=1}^{m}\left(\prod_{\eta\in\mathcal{T}_{j}\text{ is a leaf}}\ell\left(\mu_{\eta}\left|\bm{X},\bm{y}\right.\right)\right)\cdot p\left(\mathcal{T}_{j}\right).$$
Each leaft node $\eta$ of the tree $\mathcal{T}_{j}$ contributes
one exponential likelihood entry, the computational complexity of
this double product is $\mathcal{O}\left(m\cdot N_{j,\text{leaf}}\cdot n\right)\lesssim\mathcal{O}\left(m\cdot n^2\right)$
where $N_{j,\text{leaf}}$ is the number of leaves in the tree $\mathcal{T}_{j}$
and cannot exceed the total number of samples $n$ for a complete
binary tree.

Therefore, for one step, updating a single tree takes computational
complexity $\mathcal{O}(m\cdot n^2+k_{i})$. Updating all $m$ trees
takes computational complexity $\mathcal{O}(m^{2}\cdot n^2+\sum_{i=1}^{m}k_{i})$.
\end{proof}

\section{SBT Implementation and Parameters}
Our software implementation (\url{https://bitbucket.org/mpratola/openbt/wiki/Home}) accepts the following API,
with support of MPI parallelism (By default, $\mathtt{shardpsplit}$=1).
\begin{align*}
\text{\texttt{ fit=openbt(x,y,pbd=c(1.0,0.0),ntree=m,}}\\ \texttt{ntreeh=1,numcut=100,tc=4,probchv=0.0,}\\
\text{\texttt{ shardepth=2,randshard=FALSE,}}\\
\text{\ensuremath{\mathtt{model="bart",modelname="branin")}}}
\end{align*}
There are several important parameters corresponding to our SBT algorithm
specified and explained below.

\textbf{ Model related parameters.} 
\begin{itemize}
\item $\mathtt{ntree}$: The number of trees (also denoted by $m$) used
in the (regular) BART model for each shard. 
\item $\mathtt{numcut}$: The number of maximal cuts in each of the trees
used in the BART model for each shard. 
\item $\mathtt{shardepth}$: The maximal depth of our sharding tree $\mathcal{T}_{u}$.
\item $\mathtt{randshard}$: This is a Boolean variable indicating whether
we split the original samples into shards for $\mathcal{T}_{x\mid\eta_{u}}$
deterministically or randomly. When this parameter is True, we will
take a number of samples from the original dataset according to its
given order for each shard; otherwise we will take a random sample
(without replacement for non-overlapping shards) from the original
dataset. 
\end{itemize}
\noindent %
\textbf{Other parameters.} 
\begin{itemize}
\item $\mathtt{tc}$: The number of MPI threads to use is specified as tc=4. 
\item $\mathtt{pbd}$: The parameter denotes the range of the birth and
death probability in the tree proposal. 
\item $\mathtt{ntreeh}=1$: The number of trees used in the heteregenous
BART model, the default value is 1, which is used in this paper. 
\item $\mathtt{probchv}$: The mean of the probability of a change variable
(of the splitting variable) move among all $\mathtt{ntree}$ trees
in BART, the default value is 0.1. 
\item $\mathtt{probchvh}$: The variance of the probability of a change
variable (of the splitting variable) move among all $\mathtt{ntree}$
trees in BART, the default value is 0.1. 
\item $\mathtt{model="bart"}$: The model $\mathcal{T}_{x\mid\eta_{u}}$
for each shard, currently we only support the (regular) BART developed
by \citet{chipman_bart_2010}. 
\end{itemize}
\label{sec:API_SBT}
\end{document}